%% file: arxiv_uniform.tex
\pdfoutput=1

\documentclass[11pt]{article}

\usepackage[top=1in, bottom=1in, left=1in, right=1in]{geometry}

\usepackage{times} 
\usepackage{bm} 
\usepackage{color} 
\usepackage{tikz}
\usetikzlibrary{positioning}
\usepackage{adjustbox}
\usepackage{mathtools}
\usepackage{thmtools}
\usepackage{thm-restate}
\mathtoolsset{showonlyrefs}
\usepackage[most]{tcolorbox} 
\usepackage[numbers,sort&compress]{natbib}

\input{old_macros}

\input{mf_macros}

\usepackage{joanstyle}
\usetikzlibrary{decorations.pathreplacing}

\newtheorem{assumptionalt}{Assumption}[assumption]
\newenvironment{assumptionp}[1]{
  \renewcommand\theassumptionalt{#1}
  \assumptionalt
}{\endassumptionalt}

\renewcommand{\unif}[1]{}
\renewcommand{\hpt}{H_t}

\usepackage[scr=esstix]{mathalpha}

\hypersetup{
  colorlinks,
  linkcolor={red!50!black},
  citecolor={blue!50!black},
}

\usepackage{graphicx} 
\usepackage{amsmath,amsfonts,amssymb,amsthm,bbm}
\usepackage{xcolor}
\usepackage[shortlabels]{enumitem}

\renewcommand{\hpt}{H_t}

\renewcommand{\dpt}{D_t}
\renewcommand{\beps}{\bar{\eps}}
\renewcommand{\xii}{\xi_{\infty}}

\numberwithin{equation}{section}

\title{Uniform-in-Time Weak Propagation-of-Chaos in Shallow Neural Networks}

\author[1]{Margalit Glasgow}
\author[2]{Joan Bruna}

\affil[1]{Massachusetts Institute of Technology}
\affil[2]{Courant Institute School of Mathematics, Computing and Data Science, New York University}

\begin{document}

\maketitle

\begin{abstract}
 We consider one-hidden layer neural networks trained in the feature-learning regime using gradient descent, and relate the output of the finite-width network $f_{\rtm}$ to its infinite-width counterpart $f_{\rtmf}$, which evolves in the mean-field dynamics. 
 While constant-time horizon bounds for $\|f_{\rtmf} - f_{\rtm}\|$ may be obtained via standard Grönwall estimates, the long-time behavior of the fluctuation is a more delicate matter. Uniform-in-time bounds often rely on (local) strong convexity in the landscape or Logarithmic Sobolev inequalities present in noisy gradient dynamics.

In this work, we establish non-asymptotic weak propagation-of-chaos that holds uniformly in time, obtained by exploiting instead the convergence rate of the mean-field deterministic Wasserstein-gradient-flow dynamics. Specifically, denoting by $\loss_t$ the mean-field excess MSE loss at time $t$ and $m$ the number of neurons, under standard regularity assumptions and the condition $\int_0^\infty \loss_t^{1/2} dt =O(\log d)$, we obtain the uniform in time bound $\|f_{\rtmf}- f_{\rtm} \|^2 \lesssim \text{poly}(d) m^{-\min(1,c/6)}$ whenever $\loss_t \lesssim t^{-c}$. Our result holds in a noiseless setting and does not make any assumptions on the geometry of the landscape near the optimum, and extends seamlessly to other forms of discretization, including finite number of samples and time discretization. 
A key takeaway of our result is that whenever the convergence rate of the mean-field, population-loss dynamics is faster than $t^{-2}$, we can attain a loss of $\eps$ with only $\on{poly}(d/\eps)$ neurons, training samples, and GD steps.
\end{abstract}

\tableofcontents

\input{neurips26/intro}

\input{neurips26/setting}

\input{technical_approach}

\input{uniform_kmd_body}

\input{neurips26/uniform_experiments}
\section{Conclusions and Future Work}
The Mean-field Wasserstein gradient flow dynamics offer one of the clearest analytic windows into the training of overparameterized shallow neural networks, but their practical relevance ultimately depends on how accurately finite-width networks track this continuum limit. In this work we
studied the long-time behavior of fluctuations 
introduced by several forms of discretization of these idealized dynamics, such as finite neurons or data samples. Our main contribution is a novel uniform-in-time PoC bound that, focusing on function error, exploits the rate of convergence of the mean-field dynamics, overcoming the short-time barriers of standard Grönwall estimates. As a takeaway, whenever $\int_0^\infty \sqrt{\mathcal{L}_t} dt = O(\log d)$, one can attain population loss $\epsilon$ using $\text{poly}(d)/\epsilon$ resources, including neurons, training samples and gradient steps. 

Our result thus reinforces the powerful role of mean-field descriptions in providing novel algorithmic guarantees. 
While  applicable on a variety of idealized learning scenarios, our work leaves several interesting avenues for future research. 
Two concrete questions are to understand (i) whether other functionals, besides the loss functional, enjoy the same uniform-in-time PoC under our same assumptions, and (ii) the necessity of our assumption on the MF convergence rate --- by either weakening it to slower rates $\mathcal{L}_t\lesssim t^{-c}$, $c<2$, or else by finding a counter-example. 
In that respect, as exhibited by empirical scaling laws \cite{kaplan2020scaling, ben2026learning}, many tasks do not enjoy such fast convergence rates $\mathcal{L}_t\lesssim t^{-2}$,  and further, many problems exhibit dimension-dependent burn-in times to escape saddles. Both of these cases require novel technical tools to avoid exponential dependencies. 
Thinking bigger, another tantalizing question is whether our tools can be extended to other NN models where mean-field formulations have proven useful, such as Resnets \cite{chizat2025hidden} or Transformers \cite{geshkovski2025mathematical}, or even more broadly to interacting particle systems evolving under deterministic gradient dynamics. 

\bibliographystyle{alpha}
\bibliography{ref}


\appendix

\input{neurips26/apx_conc_dyn_new}
\input{neurips26/experiments_app}
\end{document}

%% file: old_macros.tex
\usepackage{comment,url,graphicx,subcaption,relsize}
\usepackage{amssymb,amsfonts,amsmath,amsthm,amscd,dsfont,mathrsfs,mathtools,nicefrac}
\usepackage{float,psfrag,epsfig,color,xcolor,url,hyperref}
\usepackage{bbm,epstopdf}
\usepackage[toc,page]{appendix}

\def\ddefloop#1{\ifx\ddefloop#1\else\ddef{#1}\expandafter\ddefloop\fi}

\def\ddef#1{\expandafter\def\csname bb#1\endcsname{\ensuremath{\mathbb{#1}}}}
\ddefloop ABCDEFGHIJKLMNOPQRSTUVWXYZ\ddefloop

\def\ddef#1{\expandafter\def\csname c#1\endcsname{\ensuremath{\mathcal{#1}}}}
\ddefloop ABCDEFGHIJKLMNOPQRSTUVWXYZ\ddefloop

\def\ddef#1{\expandafter\def\csname s#1\endcsname{\ensuremath{\mathscr{#1}}}}
\ddefloop ABCDEFGHIJKLMNOPQRSTUVWXYZ\ddefloop

\def\ddef#1{\expandafter\def\csname v#1\endcsname{\ensuremath{\boldsymbol{#1}}}}
\ddefloop ABCDEFGHIJKLMNOPQRSTUVWXYZabcdefghijklmnopqrstuvwxyz\ddefloop

\def\ddef#1{\expandafter\def\csname v#1\endcsname{\ensuremath{\boldsymbol{\csname #1\endcsname}}}}
\ddefloop {alpha}{beta}{gamma}{delta}{epsilon}{varepsilon}{zeta}{eta}{theta}{vartheta}{iota}{kappa}{lambda}{mu}{nu}{xi}{pi}{varpi}{rho}{varrho}{sigma}{varsigma}{tau}{upsilon}{phi}{varphi}{chi}{psi}{omega}{Gamma}{Delta}{Theta}{Lambda}{Xi}{Pi}{Sigma}{varSigma}{Upsilon}{Phi}{Psi}{Omega}{ell}\ddefloop

\def\balign#1\ealign{\begin{align}#1\end{align}}
\def\baligns#1\ealigns{\begin{align*}#1\end{align*}}
\def\balignat#1\ealign{\begin{alignat}#1\end{alignat}}
\def\balignats#1\ealigns{\begin{alignat*}#1\end{alignat*}}
\def\bitemize#1\eitemize{\begin{itemize}#1\end{itemize}}
\def\benumerate#1\eenumerate{\begin{enumerate}#1\end{enumerate}}

\newenvironment{talign*}
 {\csname align*\endcsname}
 {\endalign}
\newenvironment{talign}
 {\csname align\endcsname}
 {\endalign}

\def\balignst#1\ealignst{\begin{talign*}#1\end{talign*}}
\def\balignt#1\ealignt{\begin{talign}#1\end{talign}}

\let\originalleft\left
\let\originalright\right
\renewcommand{\left}{\mathopen{}\mathclose\bgroup\originalleft}
\renewcommand{\right}{\aftergroup\egroup\originalright}

\def\tinycitep*#1{{\tiny\citep*{#1}}}
\def\tinycitealt*#1{{\tiny\citealt*{#1}}}
\def\tinycite*#1{{\tiny\cite*{#1}}}
\def\smallcitep*#1{{\scriptsize\citep*{#1}}}
\def\smallcitealt*#1{{\scriptsize\citealt*{#1}}}
\def\smallcite*#1{{\scriptsize\cite*{#1}}}

\def\mc#1{\mathcal{#1}}

\newcommand{\op}{\mathrm{op}}

\theoremstyle{plain}  
\newtheorem{remark}{\textbf{Remark}}
\newtheorem*{remark*}{\textbf{Remark}}

\def\R{\mathbb{R}}

\def\<{\left\langle} 
\def\>{\right\rangle}

\DeclareSymbolFont{rsfs}{U}{rsfs}{m}{n}
\DeclareSymbolFontAlphabet{\mathscrsfs}{rsfs}

\providecommand{\arccos}{\mathop\mathrm{arccos}}

\ifdefined\nonewproofenvironments\else
\ifdefined\ispres\else
\newtheorem{theo}{Theorem}

\newtheorem{theorem}{Theorem}
\newtheorem{lemma}[theo]{Lemma}
\newtheorem{corollary}[theo]{Corollary}
\newtheorem{definition}[theo]{Definition}

\newtheorem{example}{Example}

\newtheorem*{theorem*}{\textbf{Theorem}}

\renewenvironment{proof}{\noindent\textbf{Proof.}\hspace*{.3em}}{\qed\\}
\newenvironment{proof-sketch}{\noindent\textbf{Proof Sketch}
  \hspace*{1em}}{\qed\bigskip\\}
\newenvironment{proof-idea}{\noindent\textbf{Proof Idea}
  \hspace*{1em}}{\qed\bigskip\\}
\newenvironment{proof-of-lemma}[1][{}]{\noindent\textbf{Proof of Lemma {#1}}
  \hspace*{1em}}{\qed\\}
\newenvironment{proof-of-theorem}[1][{}]{\noindent\textbf{Proof of Theorem {#1}}
  \hspace*{1em}}{\qed\\}
\newenvironment{proof-attempt}{\noindent\textbf{Proof Attempt}
  \hspace*{1em}}{\qed\bigskip\\}
\fi
\newtheorem{observation}[theo]{Observation}
\newtheorem{claim}[theo]{Claim}
\newtheorem{assumption}{Assumption}

\fi
\makeatletter
\@addtoreset{equation}{section}
\makeatother

\hypersetup{
  colorlinks,
  linkcolor={blue!50!black},
  citecolor={blue!50!black},
  urlcolor={blue!80!black}
}



%% file: mf_macros.tex
\newcommand{\on}{\operatorname}
\newcommand{\md}{\mathcal{D}}

\newcommand{\sd}{\mathbb{S}^{d-1}}

\newcommand{\eps}{\epsilon}
\newcommand{\nspace}{\mathcal{S}}

\global\long\def\unif#1{\textcolor{blue}{\textbf{[Unif:} #1\textbf{]}}}

\newcommand{\rtmf}{\rho_t^{\textsc{MF}}}
\newcommand{\rsmf}{\rho_s^{\textsc{MF}}}
\newcommand{\rimf}{\rho_{\infty}^{\textsc{MF}}}

\newcommand{\brt}{\bar{\rho}^m_t}

\newcommand{\brz}{\bar{\rho}^m_0}

\newcommand{\rtm}{\hat{\rho}^m_t}
\newcommand{\rsm}{\hat{\rho}^m_s}
\newcommand{\rzm}{\hat{\rho}^m_0}

\newcommand{\dit}{\Delta_t(i)}

\newcommand{\djt}{\Delta_t(j)}

\newcommand{\diz}{\Delta_0(i)}

\newcommand{\jmax}{J_{\text{max}}}

\newcommand{\hpt}{H_t^{\perp}}

\newcommand{\dpt}{D_t^{\perp}}

\newcommand{\xii}{\xi^{\infty}}
\newcommand{\hxiti}{\hat{\xi}_t(w_i)}
\newcommand{\bwti}{\xi_t(w_i)}
\newcommand{\bwtj}{\xi_t(w_j)}
\newcommand{\bwzi}{w_i}
\newcommand{\bwii}{\xii(w_i)}

\newcommand{\creg}{C_{\text{reg}}}

\newcommand{\teps}{\tilde{\eps}}
\newcommand{\cfunc}{C_{V}}
\newcommand{\cinf}{C_{\ell_{\infty}}}
\newcommand{\ctwo}{C_{\ell_{2}}}
\newcommand{\beps}{\eps}
\newcommand{\tstar}{(4 \creg \ctwo \cinf^2 I_S\beps_m)^{-1/3}}
\newcommand{\loss}{\mathcal{L}}


%% file: neurips26/intro.tex
\section{Introduction}

\paragraph{Feature Learning in Shallow NNs:} The defining characteristic of neural networks is their ability to automatically learn useful representations out of high-dimensional data, which can then be transferred to downstream tasks. The simplest instance is given by one hidden-layer neural networks, which construct function approximations $f : \mathbb{R}^d \to \mathbb{R}$ of the form 
\begin{equation}
\label{eq:shallow_model}
    f(x) = \frac1m \sum_{i=1}^m \sigma(w_i^{\top}x)~,~w_1, \ldots, w_m \in \nspace,
\end{equation}
where $\sigma$ is a non-linear activation function, and $\nspace \subseteq \R^d$.

While \eqref{eq:shallow_model} is an idealized model, far from the bleeding edge of modern architectures, its non-asymptotic learning guarantees under gradient descent remain largely open. As expected, the difficulty comes from the non-convexity of the loss in this model: 
$\loss(w_1, \ldots, w_m) = \mathbb{E}_{x, y \sim \md}[ (f(x) - y)^2] $. 
Thanks to the permutation symmetry of neurons in \eqref{eq:shallow_model}, the gradient-flow training dynamics admit an Eulerian description in terms of the empirical measure $\hat{\rho}^{m} = \frac1m \sum_{i=1}^m \delta_{w_i} \in \mathcal{P}(\nspace)$. In \cite{chizat2018global, mei2018mean, rotskoff2018neural, sirignano2020mean}, the resulting dynamics are shown to be a Wasserstein Gradient Flow (WGF) for the associated functional 
$\mathcal{L}(\rho) := \mathbb{E}_x[ (f_\rho(x) - y)^2]$, where $f_\rho(x) := \mathbb{E}_{w \sim \rho} [\sigma(w^{\top}x)]$. By leveraging the convexity of this objective in $\rho$, global asymptotic convergence to minimizers was established under appropriate conditions \cite{chizat2018global, petit2026global} in the \emph{mean-field} over-parametrized limit $m \to \infty$. \footnote{We remark that these results are qualitatively different from the overparameterized neural tangent kernel (NTK) limit \cite{jacot2018neural,du2018gradient,allen-zhu2019convergence,zou2020gradient}, where global convergence is also attained, but there is no feature learning.} More recently, quantitative $\on{poly}(1/\eps)$ local convergence rates were attained for mean-field ReLU networks, under assumptions on the smoothness of the measure minimizing the loss~\cite{chizat2026quantitative}. 


The key question is then to understand \textbf{under which conditions can one bring these mean-field convergence guarantees to a finite-width network}, bypassing known negative results (eg \cite{goel2020superpolynomial}). This naturally raises the question of stability of the above WGF dynamics to particle discretization (i.e. finite width), which is the main focus of this work. 


\paragraph{Propagation-of-Chaos, Coupling, and Gronwall's Inequality:} 
Given a probability measure $\rho_0 \in \mathcal{P}(\nspace)$ from which the neurons are initialized, the Monte-Carlo iid discretization $\rzm = \frac1m \sum_{i\leq m} \delta_{w_i}$, with $w_i \sim \rho_0$, satisfies $\mathbb{E}_{\rzm \sim \rho_0^{\otimes m} }\|f_{\rzm} - f_{\rho_0} \|^2 \lesssim 1/m$, where here $\|f\|^2 := \mathbb{E}_{x \sim \md}(f(x)^2)$.
A key question --- known as the Propagation-of-Chaos (PoC) \cite{sznitman1991topics} --- is to understand how the initial `chaos' or particle independence evolves under the WGF dynamics. That is, letting $\rtm$ denote the empirical distribution of the $m$ particles at training time $t$, and $\rtmf$ the mean-field distribution at time $t$, we seek to understand how close the joint law of any $k$-tuple of particles from $\rtm$ is to ${\rtmf}^{\otimes k}$. More relevant to the neural network setting is the notion of \emph{weak} PoC, 
which only focuses on the convergence of certain observables. In our case, we consider evolution of the function error $ \mathcal{E}(\rtmf, \rtm):=\| f_{\rtm} - f_{\rtmf} \|^2$ as a function of $t$ and $m$. 




A standard approach to establishing PoC is to consider an appropriate coupling 
between the mean-field and the empirical evolution \cite{mei2018mean, mei2019mean, de2020quantitative}. 
Specifically, we can decompose the error $\mathcal{E}(\rtmf, \rtm) \leq 
2\mathcal{E}(\rtmf, \brt) + 2\mathcal{E}( \brt, \rtm)$ in terms of an auxiliary model $\brt$ which \emph{first} evolves along the mean-field dynamics (starting from $\rho_0$) and \emph{then} discretizes using an $m$-particle iid empirical measure. As a result, the first term is at the Monte-Carlo scale $O(1/m)$, while the second term captures the commutation error between sampling a measure and evolving it along the WGF dynamics. This commutation  error can then be controlled by coupling the dynamics of $\rtm$ and $\brt$, which we describe in detail in Section~\ref{sec:coupling}.
Grönwall's inequality is then used to establish a bound of the form $\mathcal{E}(\brt, \rtm) \leq \exp(L t) O(\text{poly}(d)/m)$, where $L$ is a uniform Lipschitz bound of the gradient. 

\paragraph{Towards uniform-in-time estimates}
Because of the exponential dependence on $t$, the direct Grönwall estimate is inherently limited to short time-horizons, 
and leaves open the question of obtaining PoC guarantees for longer timescales. Longer timescales are important for several reasons. First, high dimensional problems often exhibit long ``burn-in'' times to escape saddles \cite{benarous2021online, damian2023smoothing, abbe2023sgd, bietti2023learning}. Second, convergence to small thresholds $\eps$ may require time polynomial in $1/\eps$, e.g. in problems with flat landscapes near the optimum \cite{attouch2013convergence} or when the minimizing measure lies on a continuous manifold~\cite{arbel2019maximum, chizat2026quantitative}. The latter question of small-$\eps$ convergence is the main motivation of this work.

A common strategy to attain PoC at longer timescales is to add noise in the dynamics, leading to the so-called Mean-Field Langevin dynamics \cite{hu2019mean,nitanda2022convex,chizat2022mean}. The diffusion term creates a contraction in the Wasserstein metric, 
quantified via a uniform logarithmic Sobolev inequality (LSI), leading to \textit{uniform-in-time propagation of chaos} \cite{chen2022uniform,suzuki2023convergence,kook2024sampling,nitanda2024improved0}. However, the LSI assumption often transfers the exponential dependency to the runtime \cite{suzuki2023feature,wang2024mean,mousavi2024learning,takakura2024mean}.

In this work, we take an alternative route towards obtaining uniform weak PoC guarantees. Our approach builds from \cite{glasgow2025mean}, which developed a dedicated PoC analysis beginning 
with an ODE describing the evolution of the  fluctuation $\Delta_t \in \R^{m \times d}$ (see \eqref{eq:coupling_err}) which tracks the differences between the $m$ coupled neurons in  $\rtm$ and $\brt$:  
\begin{equation}\label{eq:dyn_intro}
    \frac{d}{dt}\Delta_t = D_t \odot \Delta_t - H_t \Delta_t + \varepsilon_t + O(\|\Delta_t\|^2), \tag{$\star$}
\end{equation}
Here $D_t(\cdot)$ is a non-interacting diagonal term that amplifies fluctuations whenever a neuron is visiting non-convex regions of the landscape, and $H_t$ is a PSD interaction kernel that dissipates fluctuations across the neurons (see Defn.~\ref{def:DH}). Finally, the source term $\varepsilon_t$ `pumps' error at the Monte-Carlo scale $O(1/\sqrt{m})$. 
The focus of \cite{glasgow2025mean} was on obtaining non-asymptotic bounds on $\|\Delta_t\|$ for problems with long burn-in times that improved upon the Grönwall estimate, specifically targeting systems where the required timescale to achieve small error is $O(\text{poly}(d))$, e.g. single-index models (SIMs) with large information exponent. 

\paragraph{Our Contributions:}
In this work, under a suitable decay of the loss in the mean field system, we attain PoC guarantees that hold uniformly in time. This allows us to transfer mean-field convergence guarantees at arbitrary convergence thresholds $\eps$ to the finite-width setting. In Lemma~\ref{lemma:errdynamicstight}, we establish a key refinement of the linearization \eqref{eq:dyn_intro} that further decomposes the source term $\varepsilon_t$ into a term that becomes constant near convergence, and a higher-order term. We then show that under standard regularity assumptions, if $S=\int_0^\infty \sqrt{\loss_t}dt < \infty$, the short-time Grönwall bound $\exp( L t) O(\text{poly}(d)/m)$ can be extended to a \emph{uniform-in-$t$} bound. We state an informal version of this main result below.  
\begin{theorem}[Informal version of Theorem~\ref{prop:uniform}]\label{thm:informal}
Suppose $S := \int_{t = 0}^{\infty} \sqrt{\loss_t} dt < \infty$ and Assumption~\ref{assm:reg} holds for the regularity constant $\creg$. Then 
\begin{align}
    \|f_{\rtm} - f_{\rtmf}\|^2 \lesssim \begin{cases}\exp(2\creg S)\frac{\on{poly}(\creg d)}{m^{1/3}} & \text{always}~, \\
    \exp(6\creg S)\frac{\on{poly}(\creg dS')}{m} & S' := \int_{t = 0}^{\infty} t^2\sqrt{\loss_t} dt < \infty
    \end{cases}
\end{align}
\end{theorem}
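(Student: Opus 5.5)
The plan is to follow the coupling decomposition described in the introduction. We write $\mathcal{E}(\rtmf,\rtm)\le 2\mathcal{E}(\rtmf,\brt)+2\mathcal{E}(\brt,\rtm)$. The first term is a pure Monte-Carlo term of order $\on{poly}(d)/m$ uniformly in $t$, since $\brt$ is an iid sample of $\rtmf$. The task therefore reduces to bounding the function-space image of the fluctuation $\Delta_t$, using $\|f_{\brt}-f_{\rtm}\|\lesssim \creg\|\Delta_t\|/\sqrt m$ together with a refined quantity measuring $\Delta_t$ in the directions seen by the output. The starting point is the refined linearization of Lemma~\ref{lemma:errdynamicstight}. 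It writes $\frac{d}{dt}\Delta_t = D_t\odot\Delta_t - H_t\Delta_t + \varepsilon_t + O(\|\Delta_t\|^2)$, and splits the source $\varepsilon_t$ into two parts: a part whose time derivative is controlled by the mean-field residual (hence by $\sqrt{\loss_t}$), and a higher-order part.

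The key mechanism is that, by Assumption~\ref{assm:reg}, the non-interacting amplification $D_t$ is driven by the residual $f_{\rtmf}-y$, so $\|D_t\|_{\op}\lesssim \creg\sqrt{\loss_t}$. Meanwhile $H_t\succeq 0$ is purely dissipative. An energy estimate $\frac{d}{dt}\|\Delta_t\|^2 \le 2\creg\sqrt{\loss_t}\|\Delta_t\|^2 + 2\langle \varepsilon_t,\Delta_t\rangle + O(\|\Delta_t\|^3)$ then gives a Grönwall factor of $\exp(2\creg\int_0^t\sqrt{\loss_s}ds)\le \exp(2\creg S)$ instead of $e^{Lt}$. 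The source term is the remaining difficulty, since integrating $\|\varepsilon_t\|\sim m^{-1/2}$ naively over infinite time diverges.

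To handle the source, we integrate $\langle\varepsilon_t,\Delta_t\rangle$ by parts in time. The near-constant part of $\varepsilon_t$ has derivative of size $\creg\sqrt{\loss_t}/\sqrt m$. The dissipation $-H_t\Delta_t$ gives $-\langle \Delta_t,H_t\Delta_t\rangle$, which controls exactly the output-relevant component, so that $\langle\varepsilon_t,\Delta_t\rangle$ can be absorbed by Young's inequality against $\langle\Delta_t,H_t\Delta_t\rangle$. Any leftover is charged to $\int\sqrt{\loss}$. The higher-order and quadratic terms $O(\|\Delta_t\|^2)$ force a bootstrap: as long as $\|\Delta_t\|^2\le m^{-\alpha}\on{poly}(d)$, these terms are dominated.

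The two regimes of the theorem come from how the bootstrap is closed. In general, we split time at a threshold $T$. For $t\le T$ we accept a bound growing polynomially in $T$, of order $T^3/m$ (one power from the source and the rest from the cubic term). For $t>T$ we use the decay of the integrand. Balancing gives $T\sim m^{1/3}$ and the rate $m^{-1/3}$, matching the threshold $\tstar$ visible in the macros. Under $S'=\int t^2\sqrt{\loss_t}dt<\infty$, the polynomial-in-$t$ weights produced by integrating by parts twice are integrable, so no splitting is needed and we get $m^{-1}$, at the cost of additional Grönwall factors that combine to $\exp(6\creg S)$. The main obstacle is the integration-by-parts/absorption step: making the source term's cumulative contribution bounded uniformly in time. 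This requires showing that $\varepsilon_t$ is essentially aligned with the range where $H_t$ dissipates, which is the content of the refined Lemma~\ref{lemma:errdynamicstight}.
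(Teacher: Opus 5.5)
There is a genuine gap, and it sits where you place the main obstacle. Your ingredients are right: the coupling split, $\|D_t\|\lesssim\creg\sqrt{\loss_t}$, and the alignment $\bm{\beta}_t = V_t^\ast g_t$ with $g_t = m_{\brt}-m_{\rtmf}$. But an energy estimate on $\|\Delta_t\|^2$ cannot be made uniform in time. Young's inequality gives $2\langle \bm{\beta}_t,\Delta_t\rangle \le \|V_t\Delta_t\|_{\mc H}^2 + \|g_t\|_{\mc H}^2$, and the leftover $\|g_t\|_{\mc H}^2\approx \eps_m^2$ does not decay, so it cannot be charged to $\int\sqrt{\loss}$. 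You end up with $\|\Delta_t\|^2\lesssim e^{2\creg S}\,t\,\eps_m^2$ and only a time-averaged bound on $\Delta_s^\top H_s\Delta_s$. No integration by parts repairs this. The paper's Example (Part 1) shows that with $D\equiv 0$, constant $H$, and a constant source satisfying exactly your alignment condition $\bm{e}^\top H^\dagger\bm{e}\le\eps^2$, $\|\Delta_t\|$ is unbounded once the spectrum of $H$ accumulates at $0$, which the paper argues is the generic case. The missing idea is to push the dynamics into the RKHS. Set $Y_t := V_t\Delta_t$, so that $\|Y_t\|_{\mc H}^2=\Delta_t^\top H_t\Delta_t$. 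The source becomes $V_tV_t^\ast g_t$, and subtracting the limit $g_\infty$ gives
\[
\tfrac{d}{dt}(Y_t-g_\infty) = -V_tV_t^\ast(Y_t-g_\infty) + V_tV_t^\ast(g_t-g_\infty) + V_tD_t\Delta_t + V_t\bm{\eps}_t + \dot V_t\Delta_t .
\]
The constant part of the source is now an equilibrium of the dissipative operator $-V_tV_t^\ast$. Every remaining forcing is integrable in time: $\|g_t-g_\infty\|_{\mc H}\lesssim \beps_m\min\bigl(1,\int_t^\infty\sqrt{\loss_s}\,ds\bigr)$ by Lemma~\ref{lemma:concentration_new}, and $\|D_t\|,\|\dot V_t\|\lesssim\sqrt{\loss_t}$. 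This yields the pointwise bound $\|V_t\Delta_t\|_{\mc H}\lesssim \tilde S(t)\beps_m$ of Claim~\ref{claim:delta_t_lin}, which is $O(\beps_m)$ when $S'<\infty$. Your energy bound gives only $\Delta_t^\top H_t\Delta_t\lesssim t\eps_m^2$, which grows with $t$ and cannot deliver the $1/m$ rate.

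Second, nothing in your plan controls large times. The quadratic terms, $\|\Delta_t\|_4^4$ in Lemma~\ref{lemma:kmmd_error} and $\|\Delta_t\|\,\|\Delta_t\|_\infty$ in the source, close under induction only up to $t^*\approx \teps_m^{-1/3}\approx m^{1/6}$, because $\|\Delta_t\|$ and $\|\Delta_t\|_\infty$ keep growing. Neither ``the decay of the integrand'' nor the claim that no splitting is needed when $S'<\infty$ gets past this. The paper handles $t\ge t^*$ with a separate ingredient: the finite-width loss $\loss(\rtm)$ is non-increasing up to $\eps_n$ corrections, so
\[
\|f_{\rtm}-f_{\rtmf}\|\le \sqrt{\loss(\hat{\rho}^m_{t^*})}+\sqrt{\loss_t}\le \|f_{\hat{\rho}^m_{t^*}}-f_{\rho^{\textsc{MF}}_{t^*}}\| + 2\sqrt{\loss_{t^*}} .
\]
The two regimes of the theorem come from $\loss_{t^*}$, not from integrating by parts twice. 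Monotonicity of $\loss_t$ gives $\loss_t\le S^2/t^2$, hence $\loss_{t^*}\lesssim\teps_m^{2/3}\approx m^{-1/3}$. Finiteness of $S'$ gives $\loss_t\le 9S'^2/t^6$, hence $\loss_{t^*}\lesssim\teps_m^{2}\approx m^{-1}$. The exponential prefactors enter through the dependence of $t^*$ on the Grönwall constants, raised to the power $2$ or $6$. Your balancing is also inconsistent: $T^3/m$ at $T\sim m^{1/3}$ is $O(1)$, not $m^{-1/3}$.
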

A key implication of this theorem is that whenever the mean-field dynamics has a convergence rate faster than $1/t^2$ (possibly after a burn-in time of order $O(\log(d))$), we can attain a loss of $\eps$ in a network with only $\text{poly}(d)/\eps$ neurons. 

Our proof of PoC can be viewed as a stability analysis of the mean-field dynamics, and thus our uniform-in-time result can also be extended to other sources of discretization error, beyond just finite neurons. Under the same assumptions, the formal version of this theorem gives a uniform-in-time bound that decays polynomially in the smallest of the width $m$, the number of training samples $n$, and in $\eta^{-1}$, the learning rate. Thus whenever the mean-field convergence rate of gradient flow on the population loss is faster than $1/t^2$, we can attain $\eps$ loss with $\text{poly}(d)/\eps$ neurons, training samples, and gradient descent steps. Finally, we empirically verify our convergence rate assumption on several synthetic examples in Section~\ref{sec:experiments}. We observe that in many settings, if the target measure is smooth enough, the convergence is fast enough to meet our assumption.

\begin{remark}
In the case that the ground truth $f^*(x) := \mathbb{E}_{x, y \sim \md}[y | x]$ is realizable by some distribution $\rho^*$, ie. $f^* = \mathbb{E}_{w \sim \rho^*}\sigma(\langle{w, \cdot\rangle})$, our setting is an instance of particle gradient descent on the kernel mean discrepancy (KMD) between $\rho$ and $\rho^*$ (also called maximum mean discrepancy)~\cite{arbel2019maximum, chizat2026quantitative}, with the kernel $K(w, w') := \mathbb{E}_{x \sim \md}\sigma(w^{\top}x)\sigma(w'^{\top}x)$. Indeed, the KMD loss is
$\loss(\rho) := \|\mathbb{E}_{w \sim \rho}K(w, \cdot) - \mathbb{E}_{w \sim \rho^*}K(w, \cdot)\|^2_{\mc H}$, where $\mc H$ is the RKHS associated to the kernel $K$. All of our results hold for the particle and time discretization of Wasserstein gradient flow for KMD, under appropriate assumptions on the kernel $K$ (see Assumption.~\ref{assm:smoothness}).
\end{remark}

\paragraph{Related Work}

 \noindent \textit{General Uniform-in-Time Propagation-of-Chaos:} 
   There is a rich and developing literature on uniform-in-time propagation of chaos in general interacting particle systems. \cite{delarue2025uniform} establish $O(1/m)$ weak PoC uniformly in time for weakly interacting diffusions under regularity assumptions. \cite{lacker2023sharp} obtain sharp uniform-in-time PoC rates for interacting diffusions with convex potentials and small torus interactions, while \cite{monmarche2025free} derive time-uniform log-Sobolev inequalities as a tool for uniform propagation of chaos, including sharp marginal estimates in smooth cases; see also \cite{delgadino2023phase, guillin2022uniform}. These results all rely on contraction mechanisms, where the noise in the dynamics plays an instrumental role. In contrast, our results exploit a different structural property (the decay of the energy functional), enabling uniform-in-time guarantees in the determinisitc setting. 

\noindent \textit{Mean-Field Langevin dynamics:} \cite{chizat2022mean, hu2019mean, nitanda2022convex} studied the effect of adding a diffusive term in the Wasserstein Gradient Flow dynamics arising from Shallow NNs; in particular, by leveraging the aforementioned log-sobolev contraction tools, \cite{chen2022uniform} prove uniform-in-time propagation of chaos for mean-field Langevin dynamics under functional convexity, with bounds in Wasserstein and relative entropy. \cite{suzuki2022uniform} prove a quantitative weak propagation-of-chaos result for mean-field gradient Langevin dynamics, with $O(1/m)$ finite-particle discretization error uniformly over time, explicitly motivated by infinite-width two-layer neural networks; see also \cite{wang2024mean, mousavi2024learning,takakura2024mean,nitanda2024improved0}.

Most of this work on uniform-in-time PoC exists in settings where there is a unique and stable invariant measure, and the local convergence to this measure is exponentially fast. However, there are several rotatable exceptions: in \cite{delarue2025uniform}, weak PoC is shown for the super-critical Kumamoto model which has continuum of invariant measures,
and \cite{rosenzweig2023global} attain a uniform-in-time PoC guarantee in a singular-interaction setting where the mean-field object converges at an inversely polynomial rate. We emphasize that in the context of shallow NN feature learning, one typically needs to anneal the diffusive dynamics, leading to exponential runtime \cite{takakura2024mean, wang2024mean}. 

\noindent \textit{PoC under deterministic WGF: } Besides the aforementioned \cite{glasgow2025mean}, closest to our results is \cite{chen2020dynamical}, which focuses on deterministic WGF in the asymptotic regime $\lim_{t \to \infty} \lim_{m \to \infty} \allowbreak m \mathcal{E}( \rtmf, \rtm)$, establishing uniform-in-time PoC under a similar assumption on loss decay as ours. We instead look at the more natural reverse order of the limits, which requires handling the high-order error terms in \eqref{eq:dyn_intro}; see Remark~\ref{rem:asymtotic_comp}. Finally, \cite{chizat2021sparse} studied weak uniform-in-time PoC under (deterministic) Wasserstein-Fisher-Rao dynamics for atomic targets, although with exponential dependencies in $d$. For finite times, several works \cite{mei2018mean, mei2019mean, abbe2022merged, mahankali2023beyond} have used a Grönwall-based approach to attain non-asymptotic PoC guarantees with tight dependencies on the dimension, and have applied these to show feature learning in neural networks.

Finally, propagation of chaos has been studied in several other neural network settings, for example ResNets~\cite{chizat2025hidden} and deep transformers~\cite{geshkovski2025mathematical, shi2026}, where in the latter the relevant particle system is the embeddings of the $m$ tokens in the context.

\paragraph{Notation.}  $\mathcal{P}(\Omega)$ denotes the space of probability distributions over $\Omega$. For a vector $w \in \R^d$, we let $\|w\|$ denote its 2-norm, and for a matrix $M \in \R^d \times \R^d$, we let $\|M\|$ denote its operator norm.

We will use lower-case letters ($f, g, h$) to denote functions in $L^2( \R^d, \md)$. We use Greek letters ($\Delta$, $\xi$, etc) to denote vector-valued functions $\nspace \rightarrow \mathbb{R}^d$, and upper-case letters to denote matrix-valued functions $\nspace \rightarrow \mathbb{R}^{d \times d}$ or $\nspace \times \nspace \rightarrow \mathbb{R}^{d \times d}$. 
When $\hat{\mu}$ is an empirical measure of the form $\hat{\mu}=\frac1m \sum_{i} \delta_{w_i}$, we will use the shorthand $\Lambda(i) = \Lambda(w_i)$, and denote $\mathbb{E}_i \Lambda(i) := \frac1m \sum_i \Lambda(w_i)$. 

In general, we will denote dot products and norms without explicitly specifying inner product we are using, since throughout, the Hilbert space in which objects lie should be clear. Eg. for $H \in L^2(\nspace \times \nspace, \mu^2, \mathbb{R}^{d \times d})$, $D \in L^2(\nspace, \mu, \mathbb{R}^{d \times d})$ and $\Lambda \in L^2(\nspace, \mu, \mathbb{R}^{d})$, we use $H\Lambda (w) := \mathbb{E}_{w' \sim \mu} H(w, w') \Lambda(w')$, and $(D\Lambda)(w) := D \odot \Lambda (w) = D(w)\Lambda(w)$. Similarly, we let $\|\Lambda\|_p := \left(\mathbb{E}_{w \sim \mu}\|\Lambda(w)\|^p\right)^{1/p}$, with the default that $p=2$ if $p$ is omitted. Further, we let $\|D\|$ and $\|H\|$ denote the operator norms $\|D\| := \sup_{w \in \nspace} \|D(w)\|$, and $\|H\| := \sup_{\|\Lambda\| \leq 1}\|H\Lambda\|$. 
Occasionally, we will write $\|V\|_{p \rightarrow q}$ to denote $\sup_{\|X\|_p \leq 1}\|VX\|_q $. Typically the relevant measure will be the measure from which the network is initialized, $\rho_0$. When one or more objects in the inner product is defined only on $\{w_i\}_{i \in [m]}$ (or equivalently on $[m]$), then the relevant measure will be $\brt$ (or equivalently $\on{unif}([m])$).

\paragraph{Acknowledgments:} We thank Andrea Agazzi, Shi Chen, Gerard Ben Arous and Philippe Rigollet for stimulating discussions and helpful feedback during the completion of this work. JB acknowledges the generous support of Flatiron Institute, which hosted his sabbatical leave. MG’s work is supported by the NSF under award 2402314.

%% file: neurips26/setting.tex
\section{Setting and Preliminaries}\label{sec:setting}

\subsection{Projected Gradient Dynamics on Neural Networks}
Consider a neural network to be parameterized by some distribution $\rho \in \mathcal{P}(\nspace)$, such that 
\begin{align*}
    f_{\rho}(x) := \mathbb{E}_{w \sim \rho} \sigma(x ; w),
\end{align*}
for some activation function $\sigma$. We will be primarily interested in the case where $\nspace = \R^d$ or $\sd$.

A supervised regression problem is parameterized by an initial distribution for the network weights, $\rho_0$, and a distribution $\md$ over datapoints $(x, y) \in \mathbb{R}^{d} \times \mathbb{R}$. Given $(\rho_0, \md)$, we define $f^*(x) = \mathbb{E}_{\md}[y | x]$. 
We will train the neural network to minimize the excess squared loss 
\begin{align}
    \loss_{\md}(\rho) := \mathbb{E}_{(x, y) \sim \md} (f_{\rho}(x) - y)^2 - \mathbb{E}_{(x, y) \sim \md} (f^*(x) - y)^2 ~.
\end{align}

We study the (projected) gradient flow dynamics of $\rho$ induced by moving each particle $w \sim \rho$ in the (negative) direction of the gradient of the loss $\loss_{\md}(\rho)$, and then optionally projecting the particle back to $\nspace$. Let $P_w$ denote the orthogonal projection on the tangent space $T_w \nspace$. Thus when $\nspace = \sd$, $P_w = I - ww^{\top}$; when $\nspace = \R^d$, we have $P_w = I$.

In our two settings of interest, we have the dynamics $\frac{d}{dt} w = P_w\nu_{\md}(w, \rho)$, where:
\begin{align}\label{eq:GF_dynamics}
    \nu_{\md}(w, \rho) := \nabla_w F_{\md}(w) - \nabla_w \mathbb{E}_{w' \sim \rho} K_{\md}(w, w'),
\end{align}
and
\begin{align}\label{eq:FKdefbody}
    F_{\md}(w) := \mathbb{E}_{(x, y)\sim \md} y\sigma(x ; w) \qquad \text{and} \qquad K_{\md}(w, w') := \mathbb{E}_{(x, y) \sim \md} \sigma(x; w)\sigma(x; w'). 
\end{align}

When the data distribution $\md$ is clear from context, we will often abbreviate and drop the $\md$ subscript on $\nu_{\md}(w, \rho)$, $K_{\md}$, $F_{\md}$. Further, whenever an expectation over $x$ appears in this paper without explicit distribution, it should be interpreted being drawn from the $x$-marginal of $\md$. 

\subsection{Coupling between Mean Field and Finite-Neuron Dynamics}\label{sec:coupling}
We will study the evolution of two different learning dynamics in this paper.
\paragraph{Infinite-width, infinite-data \em mean-field \em gradient flow dynamics.}
We denote the mean-field distribution at time $t$ by $\rtmf \in \mathcal{P}(\nspace)$, where we initialize $\rho_0^{\textsc{MF}} = \rho_0$. Each particle $w \in \nspace$ in the mean-field dynamics evolves according to the infinite-data velocity $\nu(w, \rtmf) \in T_w \nspace$. $\xi_t(w) \in \nspace$ denotes the \emph{characteristic} of a particle initialized at $w$ and evolved under the mean-field dynamics, equivalently expressed in Eulerian form as a continuity equation:
\begin{align}\label{eq:MFdyn}
\textstyle
    \frac{d}{dt} \xi_t(w) &= \nu(\xi_t(w), \rtmf) ~,\qquad \xi_0(w) = w~,\\
    \partial_t \rtmf &= -\nabla \cdot (\nu(w, \rtmf)\rtmf). 
\end{align}

\paragraph{Finite-width, finite-data GD dynamics.}
Let $\rtm$ denote the empirical measure defined by \em $m$ neurons \em under the (projected) gradient descent induced by the \em empirical loss \em from $n$ training samples. Let $\hat{\md}$ denote the empirical distribution of the $n$ training samples. We initialize $\rzm = \frac{1}{m}\sum_{i = 1}^m \delta_{w_i}$, where $w_i \sim \rho_0$ i.i.d. for each $i \in [m]$. Each particle in the finite dynamics evolves according to the empirical  velocity $\nu_{\hat{\md}}$ evaluated at discrete time steps which are multiples of the step size, $\eta$. 
This defines a delay differential equation in $\nspace^{\otimes m}$, whose characteristics are now denoted by $\hxiti$, and solve  
\begin{align}\label{eq:finite_dynamics}\textstyle
    \frac{d}{dt} \hxiti = \nu_{\hat{\md}}(\hat{\xi}_{t_{\eta}}(w_i), \hat{\rho}_{t_{\eta}}^m) \qquad \hat{\xi}_0(w_i) = w_i~,~i\in [m]~, \qquad t_{\eta} := \eta\lfloor{t/\eta}\rfloor.
\end{align}
We will study the setting where the training data are drawn i.i.d.~from a sub-Gaussian distribution with sub-Gaussian label noise (See Assumption~\ref{assm:reg}, \ref{assm:data}). 

\paragraph{Coupling the dynamics.}
Let $\brt$ be the distribution initialized at $\rzm$, but that evolves according to the dynamics $\nu(\cdot, \rtmf)$. That is, $\brt = \frac1m \sum_{i = 1}^m \delta_{\xi_t(w_i)}$. Note that $\brt$ is equivalent in distribution to a random sample of $m$ particles drawn iid from $\rtmf$. 

Now let the coupling error at neuron $w_i$ be
 \begin{align}\label{eq:coupling_err}
    \dit :=  \hxiti - \bwti \in \R^d,~i\in [m]~,
\end{align}
such that $\diz = 0$ for all $i$.



\subsection{Description of the Dynamics of $\Delta_t$}\label{sec:dyn_desc}
As described in \eqref{eq:dyn_intro}, \cite[Lemma 5]{glasgow2025mean} gave a first-order approximation to the ODE describing the dynamics of $\Delta_t$, with a source term on the scale $1/\sqrt{m}$. In Lemma~\ref{lemma:errdynamicstight} below, we obtain a refinement of this result, which improves over \cite[Lemma 5]{glasgow2025mean} in two ways. First, under appropriate regularity assumptions, we extend beyond the spherical setting they studied and to the case where $\nspace = \R^d$. Second, we refine the source term by separating it into two terms: (1) a term on the scale $1/\sqrt{m}$ that becomes constant convergence, and (2) a smaller source term on the scale $1/m$. This refinement of the source term is key to attaining the $1/m$ rate in our uniform-in-time guarantee.

Before stating Lemma~\ref{lemma:errdynamicstight}, we review from \cite{glasgow2025mean} the two key quantities governing the dynamics of $\Delta_t$: a self-interaction term, and an interaction term. The self-interaction term is described by what we call the \em local Hessian, \em the derivative of a particle's velocity with respect to that particle's position. The part of the dynamics driven by the other $\djt$ is described by what we term the \em interaction Hessian, \em the (rescaled) derivative of a particle's velocity with respect to the other particles' position. 
\begin{definition}[Local and Interaction Hessians; cf. \cite{glasgow2025mean}]\label{def:DH}
We define the \em local Hessian \em $D_t: \nspace \to \{\R^d \rightarrow \R^d\}$ and the \em interaction Hessian \em $H_t: \nspace \times \nspace \rightarrow \{\R^d \rightarrow \R^d\}$ by
    \begin{align}
    D_t(w) &:= \nabla_{\xi_t(w)} \nu(\xi_t(w), \rtmf)P_{\xi_t(w)}\\
    H_t(w, w') &:= P_{\xi_t(w)}\nabla_{\xi_t(w')} \nabla_{\xi_t(w)} K(\xi_t(w), 
    \xi_t(w'))P_{\xi_t(w')}
\end{align}
We will also use the abbreviated notation $ D_t(i) :=  D_t(\bwzi)$, and $ H_t(i, j) :=  H_t(\bwzi, w_j)$. Note that by construction we have that $H_t$ is a PSD operator. 
\end{definition}


We make the following basic regularity assumptions on the activation function and the data. Let $\sigma^{(j)}$ denote the $j$th derivative of $\sigma$.
\begin{assumptionp}{Regularity}[Regularity Assumptions]\label{assm:reg} ~
\begin{enumerate}[{\bfseries{R\arabic{enumi}}}]
\item\label{assm:sigma}  For a constant $\creg$, the activation $\sigma$ satisfies:
    \begin{itemize}
        \item  If $\nspace = \R^d$: $\sigma'$ and $\sigma''$ have total variation at most $\creg$ and we have the tail variation bound for $\sigma'$: $\int_{s = \pm t}^{\pm \infty}|\sigma''(s)|ds \leq \frac{\creg}{1 + t^{1/(\creg d)}}$. Also, $|\sigma(0)|, |\sigma'(0)|, |\sigma''(0)|, |\sigma'''|_{\infty} \leq \creg$.
        \item If $\nspace = \sd$: for any subgaussian variable $X$, for $j = 0, 1, 2, 3$, $(\mathbb{E}_{X}|\sigma^{(j)}(X)|^4)^{\frac{1}{4}} \leq \creg$.
    \end{itemize}
\item \label{assm:data} The distribution $\md$ on the data covariates is $\creg$-subgaussian, $\mathbb{E}_{x \sim \md}\|x\|^2 = d$, and  $\mathbb{E}_{x, y \sim \md}y^2 \leq \creg$.
\item The initialization $\rho_0$ is supported on the bounded set $\nspace \cap \{w : \|w\| \leq \creg\}$.
\end{enumerate}
\end{assumptionp}
Note that in the case where neurons are constrained to the sphere, \ref{assm:sigma} is quite tame: it suffices to have sufficiently fast decay in the coefficients in the polynomial expansion of $\sigma$. In the case where neurons can grow arbitrarily, \ref{assm:sigma} implies that $\sigma$ is smooth and grows no faster than linearly. This includes for example a smoothed ReLU function. 
\begin{remark}\label{rem:regularity}
For user-friendliness, we have stated our assumptions as above. We remark however that in all our results, \ref{assm:sigma} and \ref{assm:data} can be replaced by the more general Assumptions~\ref{assm:smoothness}, \ref{uc:n}, \ref{uc:m} in Appendix~\ref{apx:dyn}, which are implied by \ref{assm:sigma} and \ref{assm:data} (up to a constant). We believe with some modifications, our proof could be adapted to ReLU activations when $\nspace = \sd$; however when the neurons can get arbitrarily close to $0$, even for Gaussian data, the gradients become too unstable for our techniques to work.
\end{remark}

We introduce the control parameters 
\begin{align*}
    \eps_m :=  \frac{\creg^6 \sqrt{d}\log(dm)}{\sqrt{m}}, \qquad \eps_n := \frac{\creg^5 \sqrt{d}\log(dn)}{\sqrt{n}}, \qquad \eps_{\eta} = 2\creg^2(\eta + \eta^2). 
\end{align*}
We will show in Lemma~\ref{lemma:concentration} that with high probability, the error $\|\nu(\bwti, \rtmf) - \nu(\bwti, \brt)\|$ due to sampling only $m$ neurons is uniformly (over $i$ and $t$) bounded (roughly) by $\eps_m$. Similarly, we will show in Lemma~\ref{lemma:nconcentration} that the error $\|\nu_{\hat{\md}}(\hxiti, \rtm) - \nu(\hxiti, \rtm)\|$ due to using the empirical data distribution $\md$ is uniformly bounded by $\eps_n$.

Recall that we have defined $ K(w, w') := \mathbb{E}_x \sigma(x^\top w)\sigma(x^\top w')$, and now define
\begin{align}
    K_t(w_i, w_j) &= K_t(i, j) := K(\bwti, \bwtj).
\end{align}
Let $\nabla K_t(w, w') := P_{\xi_t(w)}\nabla_{\xi_t(w)}K(\xi_t(w), \xi_t(w'))$ and let $\kappa_t$ be the subgaussian norm of $\|w\|$ for $w \sim \rtmf$.

\begin{restatable}[Parameter-Space Error Dynamics]{lemma}{errdyn}\label{lemma:errdynamicstight}
Suppose Assumption~\ref{assm:reg} holds. With probability $1 - \min(m, n)^{-\Theta(1)}$, for all $t < \infty$ and $i \in [m]$,
$$
\frac{d}{dt}\dit = D_t(i) \dit - \mathbb{E}_{j \sim [m]}H_t(i, j) \djt +  \bm{\beta}_t(i) + {\bm{\epsilon}_{t}(i)},$$
where 
\begin{align}
    \bm{\beta}_t(i) := P_{\bwti}\left(\mathbb{E}_{j \sim [m]} \nabla K_t(i, j) - \mathbb{E}_{w \sim \rho_0}\nabla K_t(w_i, w) \right),
\end{align}
and
$\|{\bm{\eps}_{t}}(i)\| \leq \eps_n + \kappa_t^2\eps_{\eta} + 2\kappa_t\log(t + 1)\eps_m + 2\creg\left(\|\dit\|^2 + \mathbb{E}_j\|\Delta_j\|^2\right)$, $\|\bm{\beta}_t(i)\| \leq \eps_m \log(t + 1)$. 
\end{restatable}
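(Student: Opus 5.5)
Differentiating the coupling error \eqref{eq:coupling_err} and using \eqref{eq:finite_dynamics} and \eqref{eq:MFdyn} gives
\[
\frac{d}{dt}\dit = \nu_{\hat{\md}}(\hat{\xi}_{t_\eta}(w_i), \hat\rho^m_{t_\eta}) - \nu(\bwti, \rtmf).
\]
We would telescope this difference into four pieces, each matched to one term of the claimed bound:
\begin{align}
&\big[\nu_{\hat{\md}}(\hat{\xi}_{t_\eta}(w_i), \hat\rho^m_{t_\eta}) - \nu(\hat{\xi}_{t_\eta}(w_i), \hat\rho^m_{t_\eta})\big] + \big[\nu(\hat{\xi}_{t_\eta}(w_i), \hat\rho^m_{t_\eta}) - \nu(\hxiti, \rtm)\big] \\
&\quad + \big[\nu(\hxiti, \rtm) - \nu(\bwti, \brt)\big] + \big[\nu(\bwti, \brt) - \nu(\bwti, \rtmf)\big].
\end{align}
The first bracket is the finite-sample error. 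By Lemma~\ref{lemma:nconcentration} it is at most $\eps_n$ uniformly over $i$ and $t$ on a high-probability event. The second bracket is the time-discretization error. Particles move by at most $\eta\sup\|\nu\|$ in one step, and $\|\nu\|$ grows at most linearly in $\|w\|$ under \ref{assm:sigma}. Bounding the Lipschitz constant of $\nu$ in both arguments by $\creg$ times moment factors, and controlling these moments through $\kappa_t$, should give at most $\kappa_t^2\eps_\eta$.

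The third bracket is where the linearization happens. Both measures are empirical with the same index set, so we would write
\[
\nu(w,\hat\rho) = \nabla F(w) - \mathbb{E}_j \nabla_w K(w, \hat{\xi}_t(w_j)).
\]
We would then Taylor expand to second order around the points $(\bwti, \bwtj)$. The first-order term in $\dit$, namely $\nabla_w\nu(\bwti,\brt)\dit$, is replaced by $D_t(i)\dit$. This replacement has to be paid for: the difference $\nabla_w\nu(\cdot,\brt) - \nabla_w\nu(\cdot,\rtmf)$ is a concentration error of size about $\kappa_t\eps_m$ (by the analogue of Lemma~\ref{lemma:concentration} for gradients), times $\|\dit\|$. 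By AM--GM this product is absorbed into $\eps_m$-sized terms plus $\|\dit\|^2$. The first-order term in $\djt$ is exactly $-\mathbb{E}_j H_t(i,j)\djt$ once we insert the projections $P$; the projections are handled by the tangency of $\Delta$ on $\sd$. The second-order remainders are at most $\creg(\|\dit\|^2 + \mathbb{E}_j\|\djt\|^2)$, using a bound on the third derivative of $K$ that follows from \ref{assm:sigma} and \ref{assm:data}.

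The fourth bracket is the Monte-Carlo source,
\[
\nu(\bwti,\brt)-\nu(\bwti,\rtmf) = -\big(\mathbb{E}_j \nabla K_t(i,j) - \mathbb{E}_{w'\sim\rho_0}\nabla K(\bwti,\xi_t(w'))\big).
\]
This equals $\bm\beta_t(i)$ up to sign convention and projection. We would not put it in the error: keeping it separate as $\bm\beta_t(i)$ is the refinement this lemma adds. The only subtlety is that $w_i$ itself appears among the $w_j$. Its self-term contributes $O(\creg/m)$, which goes into $\bm\eps_t$, or is treated by a leave-one-out argument.

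\textbf{Main obstacle.} The hard step is making the concentration statements uniform over all $t<\infty$ and all $i$. This is what produces the $\log(t+1)$ factors in the bounds on $\|\bm\beta_t(i)\|$ and $\bm\eps_t$. We would discretize time on a grid $t_k$ whose spacing grows with $t$, for instance $t_k = k^{2}$. At each grid point we apply a vector Bernstein or Hoeffding bound for the sum over $j$ of the bounded, subgaussian vectors $\nabla K_t(i,j)$, conditioned on $w_i$. A union bound over $i$, the grid, and the failure probabilities $\delta_k\propto k^{-2}$ then gives an $\eps_m\log(t+1)$ bound. Between grid points we interpolate using Lipschitz continuity in $t$ of the flows $\xi_t$, which is bounded by the velocity. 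This interpolation has to be done carefully, because the unbounded support in the $\R^d$ case forces every constant to be tracked in terms of $\kappa_t$.
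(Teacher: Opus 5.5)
Your telescoping and linearization are essentially the paper's. The paper splits $\frac{d}{dt}\dit$ into five pieces: Monte-Carlo, interaction, local Hessian, time discretization and finite sample. You merge the interaction and local-Hessian pieces into one joint Taylor expansion around $(\bwti,\xi_t(w_j))$. That change is immaterial, since the cross term $\creg\|\dit\|\,\|\djt\|$ and the replacement of $\bar D_t(i)$ by $D_t(i)$ are handled exactly as in the paper.

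Where you genuinely differ is the neuron concentration. You concentrate $\frac1m\sum_j \nabla K_t(i,j)$ and the matching Hessian average only at the point $\bwti$, conditioning on $w_i$ and union-bounding over $i\in[m]$. The paper instead invokes Assumption~\ref{uc:m}, a bound uniform over all evaluation points $w\in\nspace$, obtained from VC/covering arguments (Lemmas~\ref{claim:threshold_vc} and \ref{lemma:bv_vc}). Your pointwise version is more elementary and suffices, since only the $m$ points $\bwti$ are ever used. The uniform version has the advantage of sidestepping the $j=i$ bookkeeping. Note that the self-term belongs to $\bm\beta_t(i)$ by definition, so it cannot be moved into $\bm\eps_t(i)$; it only adds an $O(1/m)$ correction to the bound on $\|\bm\beta_t(i)\|$.

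The step that fails as written is the time-uniformization, which you rightly identify as the crux. With the grid $t_k=k^2$, the spacing near time $t$ is about $2\sqrt t$. This lemma assumes no decay of the loss, so the only control on the characteristics is $\|\dot\xi_t(w)\|\le (2+\kappa_t)\creg$ (Lemma~\ref{lemma:smoothness}~\ref{S5}). Lipschitz interpolation between consecutive grid points therefore costs $O(\creg^2\kappa_t\sqrt t)$, which swamps the target $\eps_m\log(t+1)$.

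The fix, which is what the paper does, has three parts:
\begin{itemize}
\item Use a grid of fixed spacing $h$ of order $\eps_m/\creg^2$ (the paper uses $\eps_m/(4\creg)$ and $\eps_m/(8\creg^2)$). Then interpolation costs only a constant multiple of $\kappa_t\eps_m$.
\item Make the union bound summable by letting the failure probability at $t_k$ decay like $\delta/(1+t_k)^2$, rather than by sparsifying the grid. Over all $i$ and all grid points the total failure probability is $m\sum_k \delta(1+kh)^{-2}\lesssim m\delta/h$, which is $m^{-\Theta(1)}$ for $\delta=\eps_m^2 m^{-\Theta(1)}$.
\item The price is the term $\log(1/\delta)+2\log(1+t_k)$ inside the subgaussian tail bound, and this is exactly where the $\log(t+1)$ factor comes from.
\end{itemize}
Strictly, one needs $\log(t+2)$ there. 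At $t=0$ the stated bound would force $\bm\beta_0(i)=0$, which is false in general. This defect sits in the statement and in the paper's own proof; it is not specific to your argument.
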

Note that on constant timescales and with $\eps_n = \eps_{\eta} =  0$, the ${\bm{\eps}_{t}}(i)$ term is on the scale $\eps_m \|\Delta_t(i)\| \approx \eps_m^2$. We will show in Lemma \ref{lemma:concentration_new} that the $\bm{\beta}_t(i)$ term converges to a constant term $\bm{\beta}_{\infty}(i)$, with a difference on the scale $\eps_m \|\bwti - \bwii\|$ which dissipates as the network converges. Because this constant term $\bm{\beta}_{\infty}(i)$ is easier to handle, this improves over the naive bound $\|\bm{\beta}_t(i) + \bm{\eps}_{t}(i)\| \lessapprox \eps_m$.

We prove Lemma~\ref{lemma:errdynamicstight} by decomposing $\frac{d}{dt}\dit \!=\! -\nu(\bwti, \rtmf) \!+\! \nu(\hxiti, \rtm)$ into five differences (see Figure~\ref{fig:dynamics}), and separating the first order terms (in $\Delta_t$) from higher order terms in these differences. We defer the proof of Lemma~\ref{lemma:errdynamicstight} to Appendix~\ref{apx:dyn}.




%% file: technical_approach.tex
\section{Technical Approach}
The first key idea in our proof of Theorem~\ref{thm:informal} --- which has been previously used in \cite{chen2020dynamical} --- is the observation that as the loss decreases, $D_t$ becomes small, and thus the dynamics of $\Delta_t$ in Lemma~\ref{lemma:errdynamicstight} become nearly dissipative. Precisely, we have the following lemma, which is proved via a simple application of Cauchy-Schwartz in Appendix~\ref{sec:uniform}.

\begin{lemma}[See Lemma~\ref{lemma:loss_to_D} for full statement]
For all $w \in \nspace$, we have
 $  \| D_t(w)\| \leq \creg \left(\loss(\rtmf)\right)^{1/2}.$
\end{lemma}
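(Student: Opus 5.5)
We plan to compute $D_t(w)$ explicitly and observe that every term carries a factor of the residual $f_{\rtmf} - f^*$. By \eqref{eq:GF_dynamics},
\begin{align}
\nu(w,\rtmf) = \nabla_w F(w) - \nabla_w \mathbb{E}_{w'\sim\rtmf} K(w,w') = \mathbb{E}_{x}\big[(f^*(x) - f_{\rtmf}(x))\,\nabla_w \sigma(x;w)\big],
\end{align}
where the second equality uses $F(w) = \mathbb{E}_{x,y}\, y\,\sigma(x;w) = \mathbb{E}_x f^*(x)\sigma(x;w)$ (tower property) and $\mathbb{E}_{w'\sim\rtmf}K(w,w') = \mathbb{E}_x \sigma(x;w) f_{\rtmf}(x)$. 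Differentiating once more in $w$ (first justifying the exchange of derivative and expectation using the moment and subgaussian bounds in Assumption~\ref{assm:reg}) gives
\begin{align}
D_t(w) = \mathbb{E}_x\big[(f^*(x) - f_{\rtmf}(x))\,\nabla^2_w\sigma(x;\xi_t(w))\big]P_{\xi_t(w)},
\end{align}
possibly with an extra curvature term in the spherical case coming from differentiating the projection. That term is also linear in $\nu$, hence also linear in the residual.

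Next we apply Cauchy--Schwarz in $x$. Since $\|P\|\le 1$,
\begin{align}
\|D_t(w)\| \le \|f^* - f_{\rtmf}\|_{L^2}\,\sup_{\|u\|=\|v\|=1}\big(\mathbb{E}_x (u^\top \nabla^2_w\sigma(x;\xi_t(w)) v)^2\big)^{1/2}.
\end{align}
The first factor is exactly $\loss(\rtmf)^{1/2}$, because the excess MSE loss equals $\|f_\rho - f^*\|^2$. For $\sigma(x;w)=\sigma(w^\top x)$, the Hessian is $\sigma''(w^\top x)\,xx^\top$ (plus lower-order terms like $\sigma'(w^\top x)\,x^\top w$ in the sphere case). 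So the second factor is bounded by $(\mathbb{E}\,\sigma''(w^\top x)^4)^{1/4}\,(\mathbb{E}(u^\top x)^4(v^\top x)^4)^{1/4}$ via Hölder. The first piece is at most $\creg$ by R1, and the second is $O(\creg^{2})$ by subgaussianity of $x$ (R2) along fixed directions.

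The main obstacle is getting this uniform-in-$w$ second-moment bound on the Hessian with the stated constant $\creg$.
\begin{itemize}
\item On $\sd$, $w^\top x$ is subgaussian, so R1 applies directly. The constant may need to absorb powers of $\creg$, which the full Lemma~\ref{lemma:loss_to_D} presumably handles through Assumption~\ref{assm:smoothness}.
\item On $\R^d$ with unbounded $\|w\|$, the bound $|\sigma''| $ is controlled instead via the total variation of $\sigma'$ and the boundedness of $\sigma''$ derived from $|\sigma''(0)|$ and $|\sigma'''|_\infty \le \creg$. Concretely, we would use $\sup|\sigma''|\lesssim \creg$ (from bounded $\sigma''$ variation together with $|\sigma''(0)|\le\creg$) and bound $\mathbb{E}(u^\top x)^2(v^\top x)^2 \lesssim \creg^2$ directly.
\end{itemize}
In either case we would then rename constants so the final bound reads $\|D_t(w)\|\le \creg\,\loss(\rtmf)^{1/2}$, as in the more general Assumption~\ref{assm:smoothness}.
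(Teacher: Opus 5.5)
Your proposal is correct and is essentially the paper's argument: the paper writes $v^\top\nabla^2_\xi U(\xi,\rho)v'=\langle D^2\Phi(\xi)[v,v'],h_\rho\rangle_{\mc H}$ with $h_\rho=m_\rho-m_{\rho^*}$ and applies Cauchy--Schwarz in the RKHS. This is isometric to your Cauchy--Schwarz in $L^2(\md)$, since $\|D^2\Phi(\xi)[u,v]\|_{\mc H}^2=\partial_{1,u}\partial_{1,v}\partial_{2,u}\partial_{2,v}K(\xi,\xi)=\mathbb{E}_x[\sigma''(\xi^\top x)^2(u^\top x)^2(v^\top x)^2]$, and the spherical projection term is handled in the same way via the first-derivative bound. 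Your $L^2(\md)$ phrasing has the small advantage of not needing realizability $f^*=f_{\rho^*}$; also, under Assumption~\ref{assm:smoothness} the squared second factor is directly at most $\creg/11$, so the Hölder step and the renaming of constants are unnecessary.
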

Recall from Lemma~\ref{lemma:errdynamicstight} that --- omitting higher order terms and any error form $\eps_n$ and $\eps_{\eta}$--- we have 
\begin{align}
    \frac{d}{dt}\Delta_t = D_t \odot \Delta_t - H_t \Delta_t + \bm{e}_t,
\end{align}
where $\|\bm{e}_t\| = \|\bm{\eps}_t + \bm{\beta}_t\| \lessapprox \eps_m$. Since $H_t$ is PSD, we have $\lambda_{\max}(D_t - H_t) \leq \creg \sqrt{\loss(\rtmf)}$. Thus if $S := \int_{s = 0}^{\infty} \sqrt{\loss(\rsmf)} < \infty$, by Gronwall's inequality, we attain that for all $t$,
\begin{align}
    \|\Delta_t\| \leq \exp\left(\creg S\right)\int_{s = 0}^t \|\bm{e}_s\|ds \leq  \exp\left(\creg S\right)t \eps_m \approx \frac{t}{\sqrt{m}}.
\end{align}
Unfortunately, because of the linear dependence on $t$, this approach does not suffice to give a uniform bound on $\|\Delta_t\|$. However, our refined dynamics in Lemma~\ref{lemma:errdynamicstight} give more control over the error term $\bm{e}_t$. Up to higher order terms, we have that $\bm{e}_t =  \bm{\beta}_{t} \xrightarrow[t\to\infty]{} \bm{\beta}_{\infty}$, and $\bm{\beta}_{t}^{\top}H^{\dagger}_{t}\bm{\beta}_t \lesssim \eps_m^2$. To see this, observe that with $V_t(x, w) := \sigma'(\xi_t(w)^\top x)x$, we have that $H_t = V_t^{\top}V_t$, and $\bm{\beta}_{t} = V_{t}^{\top}g_{t}$, where matrix/vector multiplications are over $L_2(\md)$, and $g_t = f_{\brt} - f_{\rtmf}$. Since $\brt$ is an i.i.d. sample from $\rtmf$, we have that $\|g_t\| \lesssim 1/\sqrt{m}$, and thus $\bm{\beta}_{t}^{\top}H^{\dagger}_{t}\bm{\beta}_t = g_t V_t (V_t^{\top}V_t)^{\dagger}V_t^{\top}g_t \leq \|g_t\|^2 \lesssim 1/m$.

Nevertheless, one can see from the following example that even if $D_t = 0$, and $\bm{e}_t \equiv \bm{e}$ and $H_t \equiv H$ are constant (which is the case at convergence), and $\bm{e}^{\top}H^{\dagger}\bm{e} \leq \eps^2$ it is still possible that $\|\Delta_t\| \rightarrow \infty$.
\begin{example}[Part 1]
Consider the system $\frac{d}{dt}X_t = - H X_t + \bm{e}$ with $X_0 = 0$. Then for any $M$, there exists a PSD $H$ with $\|H\| \leq 1$ and an $\bm{e}$ with $\bm{e}^{\top}H^{\dagger}\bm{e} \leq \eps^2$ such that for some $t$, $\|X_t\| \geq M$.
\end{example}
\begin{proof}
Let $H = \sum_{i = 1}^B \lambda_i v_iv_i^{\top}$ with $\|v_i\| = 1$ and $\bm{e} := \sum_i v_i e_i$, such that we have the closed form 
\begin{align}\label{eq:X_exp}
    X_t = \int_{s = 0}^t \exp(-(t-s)H)\bm{e} = \sum_{i = 1}^B \int_{s = 0}^t\exp(-\lambda_i (t-s))v_i e_i
\end{align}
Choosing $\lambda_1 > 0$, $e_1 = \sqrt{\lambda_1}\eps$, and the rest of the $e_i = 0$, we have $\bm{e}^{\top}H^{\dagger}\bm{e} = \eps^2$ and
\begin{align}
    X_t = \int_{s = 0}^t\exp(-(t-s)\lambda_1)v_1 e_1 = \frac{1}{\lambda_1}(1 - \exp(-t\lambda_1))v_1 \sqrt{\lambda_1}\eps
\end{align}
Choosing $\lambda_1 = \frac{\eps^2}{4M^2}$, $t = \frac{4M^2}{\eps^2}$ yields $\|X_t\| = 2M(1 - \exp(-1)) \geq M$.
\end{proof}

This obstacle suggests that a strong form of  propagation of chaos -- in which the Wasserstein distance between $\rtmf$ and $\rtm$ is bounded uniformly in time, may not be attainable when the spectrum of $H_{\infty}$ decays to $0$. In settings where $\rho^*$ is non-atomic and thus the landscape is not locally strongly convex, we in fact expect that $H_{\infty}$ will have an infinite spectrum with a positive sequence converging to $0$.\footnote{$H_{\infty}$ necessarily has an infinite spectrum whenever $\sigma$ is non-polynomial, and $\rimf$ and $\md$ have non-atomic support. Further, it is a standard fact that compact PSD operators can only have accumulation points in the spectrum at $0$. Even in the ERM setting, where the data distribution is atomic, the lowest non-zero eigenvalue of $H_{\infty}$ will typically decay in $n$.}

Fortunately, this behavior does not present an obstacle when we consider the weak propagation of chaos, namely $\|f_{\rtmf} - f_{\rtm}\|^2$. We will need the following lemma.
\begin{restatable}[cf. Lemma~14 in \cite{glasgow2025mean}]{lemma}{Hferr}\label{lemma:function_err}
With probability $1 - m^{-\Theta(1)}$ over $\rzm$, for any $t \in [0, \infty)$ we have
\begin{align}
\mathbb{E}_x (f_{\rtmf}(x) - f_{\rtm}(x))^2 &\leq 2\Delta_t^\top  \hpt \Delta_t  + \frac{\kappa_t^4\log(m)}{m}  + O\left(\creg^2 \|\Delta_t\|_4^4\right),
\end{align}
where $\kappa_t$ is the subguassian norm of $\|w\|$ for $w \sim \rtmf$.
\end{restatable}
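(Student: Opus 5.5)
We bound the function error by splitting it through the auxiliary measure $\brt$:
\begin{align}
f_{\rtmf} - f_{\rtm} = \left(f_{\rtmf} - f_{\brt}\right) + \left(f_{\brt} - f_{\rtm}\right),
\end{align}
and then use $(a+b)^2 \leq 2a^2 + 2b^2$.

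\textbf{The Monte-Carlo term.} Since $\brt = \frac1m\sum_i \delta_{\xi_t(w_i)}$ is an i.i.d.\ sample of $\rtmf$, the difference $f_{\brt}(x) - f_{\rtmf}(x)$ is an average of $m$ centered terms $\sigma(\xi_t(w_i)^\top x) - f_{\rtmf}(x)$. Under \ref{assm:sigma}--\ref{assm:data}, $\sigma$ grows at most linearly (or has bounded moments on $\sd$). Hence each summand has $L^2(\md)$-norm of order $\kappa_t$, i.e.\ order $\kappa_t^2$ after squaring, counting the subgaussian scale of $\|\xi_t(w)\|$ and absorbing $\creg$ factors. Concentration in the Hilbert space $L^2(\md)$ (vector Bernstein/Hoeffding with a truncation at $\kappa_t\sqrt{\log m}$) gives $\|f_{\brt} - f_{\rtmf}\|^2 \lesssim \kappa_t^4\log(m)/m$ with probability $1-m^{-\Theta(1)}$. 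To make this uniform over all $t\in[0,\infty)$, note that the relevant random quantity is controlled by a uniform-in-$t$ envelope: the characteristics are deterministic maps of the $w_i$. I would take a union bound over a net of times and use Lipschitz continuity in $t$, which costs only log factors absorbed into the displayed term. This is the probabilistic step.

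\textbf{The coupling term.} We have $f_{\brt}(x) - f_{\rtm}(x) = \mathbb{E}_i\left[\sigma(\bwti^\top x) - \sigma(\hxiti^\top x)\right]$. Taylor-expand $\sigma$ around $\bwti$ along the tangent direction. The projection matters here: on $\sd$ we write $\dit$ in terms of its tangent part plus a normal part, which is $O(\|\dit\|^2)$ because both points lie on the sphere. This gives
\begin{align}
f_{\rtm}(x) - f_{\brt}(x) = \mathbb{E}_i\,\sigma'(\bwti^\top x)\,x^\top P_{\bwti}\dit + R(x), \qquad |R(x)| \lesssim \mathbb{E}_i\,\|\sigma''\|\,\|x\|^2\|\dit\|^2.
\end{align}
The linear part is $V_t\Delta_t$ with $V_t(x,w) = \sigma'(\xi_t(w)^\top x)P_{\xi_t(w)}x$ as in the technical-approach section. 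Its squared $L^2(\md)$ norm is exactly
\begin{align}
\mathbb{E}_{i,j}\,\Delta_t(i)^\top H_t(i,j)\Delta_t(j) = \Delta_t^\top \hpt\Delta_t,
\end{align}
because $H_t(w,w') = \mathbb{E}_x\,\sigma'(\xi_t(w)^\top x)\sigma'(\xi_t(w')^\top x)P x x^\top P$ by the definition of $K$. For the remainder, $\mathbb{E}_x R(x)^2 \lesssim \creg^2\,\mathbb{E}_x\|x\|^4\,(\mathbb{E}_i\|\dit\|^2)^2$, which is at most $\creg^2 d^2\|\Delta_t\|_4^4$ by Jensen, and the $d$ factors are absorbed into the $O(\cdot)$. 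Here I use that $\sigma''$ is bounded by \ref{assm:sigma} on $\R^d$, or has bounded fourth moments under subgaussian inputs on $\sd$, together with Cauchy--Schwarz.

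\textbf{Combining the two.} Applying $(a+b)^2\le 2a^2+2b^2$ to the linear part and the remainder gives
\begin{align}
\|f_{\brt}-f_{\rtm}\|^2 \leq 2\Delta_t^\top\hpt\Delta_t + O\!\left(\creg^2\|\Delta_t\|_4^4\right).
\end{align}
Adding the Monte-Carlo term yields the claim. The constant in front of the quadratic form then needs care: to keep it at $2$ rather than $4$, I would expand the square exactly, $\|g+V\Delta+R\|^2$ with $g = f_{\brt}-f_{\rtmf}$, and bound the cross terms by AM-GM with unequal weights, shifting the excess into the $\kappa_t^4\log m/m$ and $\|\Delta_t\|_4^4$ terms.

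\textbf{Main obstacle.} The hardest step is the remainder on $\R^d$. Here $\sigma''$ is only of bounded variation, so the Taylor remainder needs an integral form, $\int_0^1(1-s)\sigma''((\bwti+s\dit)^\top x)\,ds\,(\dit^\top x)^2$, with $\sigma''$ bounded in sup norm. I expect this to be handled by $|\sigma''|\le|\sigma''(0)|+\mathrm{TV}(\sigma')\le 2\creg$.
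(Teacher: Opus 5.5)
Your route is essentially the paper's own proof of Lemma~\ref{lemma:kmmd_error}, written in $L^2(\md)$ instead of $\mc H$; the two norms agree by Lemma~\ref{fact:kmd}. The shared steps are: split through $\brt$; Taylor-expand so the linear part is $V_t\Delta_t$ with $\|V_t\Delta_t\|^2=\Delta_t^\top H_t\Delta_t$; and control the Monte-Carlo term by Hilbert-space concentration with truncation and a time net. Your exact expansion of $\|V_t\Delta_t+(g_t+R)\|^2$, which uses $\|a+b\|^2\le 2\|a\|^2+2\|b\|^2$ only once, is worth keeping. It yields the coefficient $2$ in front of $\Delta_t^\top H_t\Delta_t$. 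The paper applies that inequality twice in nested form and actually ends at $4\Delta_t^\top H_t\Delta_t$, not the $2$ in the statement.

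One step, however, does not give the stated bound. Your remainder estimate $|R(x)|\lesssim\mathbb{E}_i\|\sigma''\|_\infty\|x\|^2\|\dit\|^2$ discards the directional structure. Since $\mathbb{E}_x\|x\|^4\approx d^2$ under \ref{assm:data}, you end with $\creg^2 d^2\|\Delta_t\|_4^4$. The $d^2$ cannot be absorbed into the $O(\cdot)$: $\creg$ is dimension-free, and explicit $\on{poly}(d)$ tracking is the point of the paper. Instead keep
$r_i(x)=\int_0^1(1-s)\,\sigma''\bigl((\bwti+s\dit)^\top x\bigr)\,ds\,(\dit^\top x)^2$
and apply Minkowski over $i$:
\begin{align}
\|R\|_{L^2(\md)}\le\mathbb{E}_i\|r_i\|_{L^2(\md)}\le\tfrac12\|\sigma''\|_\infty\,\mathbb{E}_i\bigl(\mathbb{E}_x(\dit^\top x)^4\bigr)^{1/2}\le \on{poly}(\creg)\,\mathbb{E}_i\|\dit\|^2 .
\end{align}
This needs only fourth moments of one-dimensional marginals of the subgaussian $x$. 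On $\sd$, Cauchy--Schwarz with the fourth-moment assumption on $\sigma''$ replaces $\|\sigma''\|_\infty$. Then $(\mathbb{E}_i\|\dit\|^2)^2\le\|\Delta_t\|_4^4$. This is exactly the dimension-free bound \ref{SDD} of Lemma~\ref{lemma:smoothness}, $\|u^\top\nabla^2K(w,\cdot)v\|_{\mc H}\le\creg$, that the paper uses.

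Relatedly, $\sup|\sigma''|$ is controlled by $|\sigma''(0)|+\mathrm{TV}(\sigma'')$, not by $\mathrm{TV}(\sigma')$, which only bounds $\|\sigma''\|_{L^1}$. Assumption \ref{assm:sigma} assumes both, so the step you flagged as the main obstacle is actually immediate.

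Finally, your time-uniformity claim is overstated. A union bound over a net of $[0,\infty)$ needs summable failure probabilities, e.g.\ $m^{-\Theta(1)}(1+t)^{-2}$ at time $t$. That costs a factor $\log(m(1+t))$, not $\log m$. This is what the paper's version, Lemma~\ref{lemma:kmmd_error}, records, and it is why the main proof applies it only for $t\le m$. Similarly, truncating at $\kappa_t\sqrt{\log m}$ simultaneously for all $t$ requires a time-uniform envelope. The paper truncates with $\kappa_\infty$, the subgaussian norm of $\sup_t\|\xi_t(w)\|$, which is finite under its loss-decay assumption.
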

Using this lemma --- up to higher order terms --- it suffices to bound $\Delta_t^{\top} H_t \Delta_t$ uniformly in $t$. We return to the example above to give intuition for why this is possible when $\bm{e}_t$ is constant.
\begin{example}[Part 2]
In any system $\frac{d}{dt}X_t = - H X_t + \bm{e}$ with $X_0 = 0$, where $\|\bm{e}^{\top}H^{-1}\bm{e}\| \leq \eps^2$ and $H$ is PSD, for any $t < \infty$, we have $X_t^{\top}HX_t \leq \eps^2$.
\end{example}
\begin{proof}
Plugging in \eqref{eq:X_exp}, we have
\begin{align}
    X_t^{\top}HX_t = \sum_{i = 1}^B \lambda_i e_i^2 \left(\int_{s = 0}^t \exp(-\lambda_i(t-s))\right)^2 = \sum_{i = 1}^B \lambda_i e_i^2\left(\frac{1}{\lambda_i}\left(1 - \exp(-t\lambda_i)^2 \right)\right) \leq \sum_i \frac{1}{\lambda_i}e_i^2 \leq \eps^2.
\end{align}
\end{proof}
While the idea above is promising, bounding $\Delta_t^{\top} H_t \Delta_t$ uniformly in time is still challenging due to the higher order terms that appear in Lemma~\ref{lemma:function_err}, which we expect will be unbounded as $t \rightarrow \infty$. Fortunately, if the loss decays fast enough (faster than $t^{-6}$), we can show these terms are small enough up to some $t^*_m \approx m^{1/6}$, when the loss $\loss(\rtmf)$ is on order $\frac{1}{m}$. If we can show that the error $\|f_{\hat{\rho}_{t^*_m}^m} - f_{\rho_{t^*_m}^{\textsc{MF}}}\|^2 \approx \frac{1}{m}$ at that time, then using the fact that that the loss $\loss(\rtm)$ is non-increasing, we have that for any $t \geq t^*_m$,
\begin{align}
    \|f_{\rtm} - f_{\rtmf}\| &\leq \|f_{\rtm} - f^*\| + \|f^* - f_{\rtmf}\| \leq \sqrt{\loss(\hat{\rho}_{t^*_m}^m)} + \sqrt{\loss(\rho_{t^*_m}^{\textsc{MF}})}\\
    &\leq \|f_{\hat{\rho}_{t^*_m}^m} - f_{\rho_{t^*_m}^{\textsc{MF}}}\| + 2\sqrt{\loss(\rho_{t^*_m}^{\textsc{MF}})} \approx  1/\sqrt{m}.
\end{align}
If the loss decays faster than $t^{-2}$ but slower than $t^{-6}$, we can perform a similar argument, but the final uniform bound on $\|f_{\rtm} - f_{\rtmf}\|^2$ will be on the order of $\loss(\rho_{t^*_m}^{\textsc{MF}})$, for $t^*_m \approx m^{1/6}$.

We now have all the ideas in place for proving Theorem~\ref{thm:informal}. It remains to handle the higher order terms, and the fact that $H_t$ and $\bm{e}_t$ are not actually constant. Our key idea here is to bound the objects $\|\Delta_t\|$, and $\|\Delta_t\|_{\infty}$, and $\|f_{\rtm} - f_{\rtmf}\|$ through a careful inductive argument. In the following section, we state the formal version of our main theorem and its proof.


%% file: uniform_kmd_body.tex
\section{Formal Statement and Proof of Main Result on Uniform in Time PoC}

\subsection{Formal Theorem Statement}
Before stating the formal version of our main theorem, we remark that whenever $f^*$ is realizable by a distribution $\rho^* \in \mathcal{P}(\nspace)$, i.e., $f^*(x) = \mathbb{E}_{w \sim \rho^*}\sigma(w^{\top}x)$,
our neural network loss is equivalent to that of a Kernel Mean Discrepancy problem~\cite{chizat2026quantitative, arbel2019maximum} with the kernel $K(w, w') = \mathbb{E}_{x \sim \md} \sigma(w^{\top}x)\sigma({w'}^{\top}x)$. 
Let us first verify that under our loss decay assumptions, we are indeed in this realizable setting.
Let $m_{\rho} := \mathbb{E}_{w' \sim \rho} K(\cdot, w')$, and let $\mc H$ be the RKHS generated by the kernel $K$, with inner product $\langle{,}\rangle_{\mc H}$. 


\begin{lemma}[Reduction to Kernel Mean Discrepancy]\label{fact:kmd}
Suppose that $\int_0^\infty \sqrt{\loss_t} dt < \infty$
and Assm.~\ref{assm:reg} or Assm.~\ref{assm:smoothness} hold so that the characteristics in \eqref{eq:MFdyn} are well-posed. 
Then the characteristics  admit a limit $\xi_t(w)  \to \xi_{\infty}(w)$ $\rho_0$-a.e as $t \to \infty$. 
As a result, defining $\rho^* := (\xi_{\infty})_\#\rho_0 $, we have $\rho_t \to \rho^*$ in $W_2$, and $f^* = f_{\rho^*}$ in $L^2(\mathcal{D})$. 
Finally,
we have $\|f_{\rho}-f_{\rho'}\|^2 = \|m_{\rho} - m_{\rho'}\|^2_{\mc H}$, and in particular, 
$\loss(\rho) := \|m_{\rho} - m_{\rho^*}\|^2_{\mc H}$.
\end{lemma}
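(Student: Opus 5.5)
Bounding the velocity by the square root of the loss gives finite path length for each characteristic, hence a limit; the remaining claims follow from the kernel identity.

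\textbf{Step 1: the velocity is controlled by the loss.} Write $r_t(x) := f_{\rtmf}(x) - f^*(x)$, so that $\loss_t = \|r_t\|^2$. The mean-field velocity is
\[
\nu(w,\rtmf) = -\mathbb{E}_x\bigl[r_t(x)\,\nabla_w\sigma(x;w)\bigr].
\]
Here $\nabla_w \sigma(x;w) = \sigma'(w^\top x)\,x$. By Cauchy--Schwarz,
\[
\|P_w\nu(w,\rtmf)\| \le \sqrt{\loss_t}\,\sup_{\|u\|\le 1}\bigl(\mathbb{E}_x\,\sigma'(w^\top x)^2 (u^\top x)^2\bigr)^{1/2}.
\]
This is the same computation behind the $D_t$ bound of Lemma~\ref{lemma:loss_to_D}. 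Under Assumption~\ref{assm:reg}:
\begin{itemize}
\item on the sphere, \ref{assm:sigma} (fourth moments of $\sigma'$) together with subgaussianity bounds the second factor by $O(\creg^2)$;
\item on $\R^d$, $\sigma'$ is bounded, because it has bounded variation and bounded value at $0$, so again the factor is $O(\creg^2)$.
\end{itemize}
Hence $\|\tfrac{d}{dt}\xi_t(w)\| \le C\sqrt{\loss_t}$ for every $w$.

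\textbf{Step 2: limits of characteristics and of the measures.} Step 1 gives
\[
\|\xi_t(w)-\xi_s(w)\| \le C\int_s^t \sqrt{\loss_u}\,du,
\]
uniformly in $w$. Since $\int_0^\infty\sqrt{\loss_u}\,du<\infty$, the map $t\mapsto \xi_t(w)$ is Cauchy, so $\xi_\infty(w)$ exists for every $w$ (not merely $\rho_0$-a.e.). The convergence is uniform in $w$, and on the sphere $\xi_\infty(w)\in\sd$ by closedness.

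Set $\rho^* := (\xi_\infty)_\#\rho_0$. The coupling $(\xi_t,\xi_\infty)_\#\rho_0$ shows
\[
W_2(\rtmf,\rho^*) \le \sup_w \|\xi_t(w)-\xi_\infty(w)\| \le C\int_t^\infty\sqrt{\loss_u}\,du \to 0.
\]
Moreover, $\xi_t(w)$ stays within $\creg + CS$ of the origin, so $\rho^*$ is compactly supported.

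\textbf{Step 3: $f^* = f_{\rho^*}$.} For each $x$,
\[
|f_{\rtmf}(x)-f_{\rho^*}(x)| \le \mathbb{E}_{w\sim\rho_0}|\sigma(\xi_t(w)^\top x)-\sigma(\xi_\infty(w)^\top x)| .
\]
Using the mean value theorem, the bound on $\sigma'$, and Cauchy--Schwarz in $x$, this gives $\|f_{\rtmf}-f_{\rho^*}\|_{L^2(\md)} \le C' \sup_w\|\xi_t(w)-\xi_\infty(w)\|\to 0$; subgaussianity of $x$ controls $\mathbb{E}\|x\|^2 = d$. We also need $\loss_t \to 0$. This follows from integrability of $\sqrt{\loss_t}$ together with monotonicity of $\loss_t$ along the gradient flow: $\loss_t$ is non-increasing, so if it had a positive limit the integral would diverge. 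Then
\[
\|f^*-f_{\rho^*}\| \le \sqrt{\loss_t} + \|f_{\rtmf}-f_{\rho^*}\| \to 0,
\]
so $f^*=f_{\rho^*}$ in $L^2(\md)$.

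\textbf{Step 4: the RKHS identity.} Consider the feature map $\Phi: w\mapsto \sigma(\cdot\,;w)\in L^2(\md)$. Then $K(w,w') = \langle \Phi(w),\Phi(w')\rangle_{L^2(\md)}$, and $f_\rho = \mathbb{E}_\rho\Phi(w)$ as a Bochner integral. The moment assumptions make $\mathbb{E}_\rho\|\Phi(w)\|<\infty$ for these compactly supported measures.

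By the standard isometry between the RKHS $\mc H$ and the closed span of the features, $m_\rho = \mathbb{E}_\rho K(\cdot,w')$ corresponds to $f_\rho$, and
\[
\|m_\rho-m_{\rho'}\|_{\mc H}^2 = \iint K\,d(\rho-\rho')^{\otimes 2} = \|f_\rho-f_{\rho'}\|^2 .
\]
Finally, the excess loss satisfies $\loss(\rho) = \|f_\rho - f^*\|^2$, because $f^*$ is the conditional mean and the cross term vanishes. Substituting $f^* = f_{\rho^*}$ gives $\loss(\rho) = \|m_\rho-m_{\rho^*}\|^2_{\mc H}$.

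\textbf{Main obstacle.} The delicate step is Step 1 in the $\R^d$ case with the general Assumption~\ref{assm:smoothness}. There one must confirm that the bound on $\|\nabla_w\sigma\|$ in $L^2(\md)$ is uniform over the region the particles visit. I would first establish the a priori bound $\|\xi_t(w)\| \le \creg + CS$, which has a bootstrap flavour, and then apply the regularity assumption on that ball.
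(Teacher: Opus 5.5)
Your proof is correct, but its key step differs from the paper's. The paper never bounds the velocity pointwise. It uses the energy-dissipation identity $\frac{d}{dt}\loss_t=-2\mathcal{B}(t)^2$ with $\mathcal{B}(t)=\|\nu(\cdot,\rtmf)\|_{L^2(\rtmf)}$, and turns $\int_0^\infty\sqrt{\loss_t}\,dt<\infty$ into $\int_0^\infty\mathcal{B}(t)\,dt<\infty$ via Cauchy--Schwarz on unit intervals plus monotonicity of $\loss_t$. It then concludes that $\xi_t$ is Cauchy in $L^2(\rho_0)$ and, by Fubini, has finite path length $\rho_0$-a.e., with $W_2(\rtmf,\rho^*)\le\int_t^\infty\mathcal{B}(r)\,dr$. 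That argument needs only the gradient-flow structure and the $L^2(\rtmf)$-size of the velocity, so it would survive without any uniform-in-$w$ control of $\|\nabla_w\sigma(\cdot;w)\|_{L^2(\md)}$. However, it gives only a.e.\ convergence, takes $\loss_t\to0$ as given, and leaves $f_{\rtmf}\to f_{\rho^*}$ implicit. Your Step 1 is instead the pointwise bound $\|\dot\xi_t(w)\|\le C\sqrt{\loss_t}$, which the paper proves only afterwards in Lemma~\ref{lemma:loss_to_D}. It buys convergence for every $w$, uniformly, at the explicit rate $C\int_t^\infty\sqrt{\loss_s}\,ds$, along with bounded trajectories and an actual proof that $\loss_t\to0$. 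You could not simply cite Lemma~\ref{lemma:loss_to_D} here, because its proof writes $\loss(\rho)=\|m_\rho-m_{\rho^*}\|_{\mc H}^2$ and so uses the present lemma. Your computation through $r_t=f_{\rtmf}-f^*\in L^2(\md)$ does not presuppose realizability, which is exactly what makes your ordering legitimate.

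The ``main obstacle'' you flag is not one. The bound $\sup_{\|u\|\le1}\mathbb{E}_x\,\sigma'(w^\top x)^2(u^\top x)^2=\sup_{\|u\|\le1}u^\top\nabla_w\nabla_{w'}K(w,w')\big|_{w'=w}u\le\creg/11$ holds for every $w\in\bar{\nspace}$ directly from Assumption~\ref{assm:smoothness} (this is \ref{SD}), so no bootstrap is needed. Integrating the same bound along the segment from $\xi_\infty(w)$ to $\xi_t(w)$, which lies in $\bar{\nspace}$, gives $\|\sigma(\cdot;w)-\sigma(\cdot;w')\|_{L^2(\md)}\le\creg\|w-w'\|$, which is what Step 3 needs. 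On the sphere you should argue this way, or via H\"older with the moment bounds of \ref{assm:sigma}, rather than through a ``bound on $\sigma'$'', since $\sigma'$ need not be bounded there.
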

\begin{proof}
Recall the continuity equation from \eqref{eq:MFdyn}. 
Define 
$\mathcal{B}(t):= \| \nu(\cdot, \rtmf) \|_{L^2(\rtmf)}$. Since $\nu(w, \rtmf)$ is the gradient of the first-variation of $\mathcal{L}$, we have the energy dissipation 
$$\frac{d}{dt} \mathcal{L}_t = - 2 \mathcal{B}(t)^2~,$$
and thus
$$\int_a^b \mathcal{B}(t)^2 dt = \frac12(\loss_a - \loss_b)$$ 
for any $a < b$. 
We claim that 
We now claim that 
\begin{align}
\label{eq:dyadic}
\int_0^\infty \mathcal{B}(t) dt < \infty~.    
\end{align}
Indeed, observe that 
\begin{align}
    \int_0^\infty \mathcal{B}(t) dt & = \sum_{k=0}^\infty \int_k^{k+1} \mathcal{B}(t) dt \\
    &\leq \sum_k \left( \int_k^{k+1} \mathcal{B}(t)^2 dt \right)^{1/2} = \sum_k \sqrt{\loss_{k} - \loss_{k+1}} \\
    & \leq \sqrt{\loss_0} + \int \sqrt{\loss_t} dt  < \infty
\end{align}


Now, from the characteristic flow representation 
$\rho_t = (\xi_t)_\#\rho_0 $, where $\xi_t$ solves 
$\dot{\xi}_t(w) = \nu( w, \rtmf)$, we have, for any $s < t$, 
\begin{align}
    \| \xi_t(w) - \xi_s(w) \| & \leq \int_s^t \| \nu(\xi_r(w); \rho_r^{\textsc{MF}}) \| dr~.
\end{align}
Taking expectations w.r.t. $\rho_0$ then yields
\begin{align}
\| \xi_s - \xi_t \|_{L^2(\rho_0)} & \leq \int_s^t \mathcal{B}_r dr~.
\end{align}
From the previous argument, we have that $(\xi_s)$ is a Cauchy sequence in $L^2(\rho_0)$, and therefore there exists $\xi_\infty \in L^2(\rho_0)$ such that $\xi_t \to \xi_\infty$. 
Moreover, by Fubini, we have 
\begin{align}
    \int \left( \int_0^\infty \left| \dot{\xi}_t(w) \right| dt \right)\rho_0(dw) &= \int_0^\infty \int | \nu(\xi_t(w), \rtmf) | \rho_0(dw) dt \\
    &= \int_0^\infty \int |\nu(w, \rtmf) | \rho_t(dw) dt \\
    &\leq \int_0^\infty \mathcal{B}(t) dt~,
\end{align}
which shows that the characteristics have finite excursion $\rho_0$-a.e., and thus $\xi_t(w) \to \xi_\infty(w)$ $\rho_0$-a.e. 

Then, defining $\rho^*:= (\xi_\infty)_\# \rho_0$, using the coupling $(\xi_t(w), \xi_\infty(w))$ we have 
$$W_2( \rtmf, \rimf) \leq \int_t^\infty \mathcal{B}(r) dr \to 0 $$
as $t \to \infty$, showing that $\rtmf \to \rho^*$ in $W_2$. 
Since $\mathcal{L}_t \to 0$ by assumption, we also have $f^* = f_{\rho^*}$ in $L^2(\mathcal{D})$. 
Finally, we have
\begin{align}
    \|f_{\rho}-f_{\rho'}\|^2
    &=
    \mathbb{E}_{x\sim\md}
    \left[
        \left(
            \mathbb{E}_{w\sim\rho}\sigma(w^\top x)
            -
            \mathbb{E}_{w\sim\rho'}\sigma(w^\top x)
        \right)^2
    \right] \\
    &=
    \mathbb{E}_{w,w'\sim\rho}K(w,w')
-2\mathop{\mathbb{E}}\limits_{\substack{w\sim\rho\\ w'\sim\rho'}}K(w,w')
    +\mathbb{E}_{w,w'\sim\rho'}K(w,w').
\end{align}
On the other hand, since $m_{\rho} = \mathbb{E}_{w\sim\rho}K(w,\cdot)$,
the reproducing property gives
\begin{align}
    \|m_{\rho}-m_{\rho'}\|_{\mc H}^2
    &=
    \mathbb{E}_{w,w'\sim\rho}K(w,w')
    -2\mathop{\mathbb{E}}\limits_{\substack{w\sim\rho\\ w'\sim\rho'}}K(w,w')
    +\mathbb{E}_{w,w'\sim\rho'}K(w,w').
\end{align}
The two expressions are identical, and therefore we have the desired result $\|f_{\rho}-f_{\rho'}\|^2  = \|m_{\rho}-m_{\rho'}\|_{\mc H}^2$. The final line follows because $\loss(\rho) = \mathbb{E}_{(x, y) \sim \md}(y - f_{\rho}(x))^2 - \mathbb{E}_{(x, y) \sim \md}(y - f^*(x))^2 = \mathbb{E}_{x \sim \md}(f_{\rho}(x) - f^*(x))^2$.
\end{proof}

We note that the integrability condition $\int \sqrt{\loss_t}dt < \infty$ can be relaxed  
to a tail decay assumption $\loss_t \lesssim t^{-c}$, $c>1$ \footnote{using a dyadic argument to control $\int_T^\infty B(t)dt$ from $\sum_k T^{1/2}2^{k/2}\left(\int_{2^k T}^{2^{k+1}T} B(t)^2 dt\right)^{1/2}\lesssim T^{(1-c)/2}$}, which is weaker than what we require in Theorem \ref{prop:uniform}.

We now state the full version of Theorem~\ref{thm:informal}. Instead of assuming Assumption~\ref{assm:reg}, we assume Assumptions~\ref{assm:smoothness}, \ref{uc:n}, and \ref{uc:m}. In Appendix~\ref{apx:dyn}, we show that up to a polynomial factor in $\creg$, Assumption~\ref{assm:reg} implies these three assumptions, and that Lemma~\ref{lemma:errdynamicstight} holds under these three assumptions (See Lemmas~\ref{lemma:regtouc} and \ref{lemma:errdynamicstight_assm}). Let $\bar{\nspace}$ denote the convex hull of the space $\nspace$. Thus, in the case where $\nspace = \sd$, we have $\bar{\nspace} = \{w \in \R^d: \|w\| \leq 1\}$, and when $\nspace = \R^d$, then $\bar{\nspace} = \R^d$.

\begin{restatable}[Kernel Smoothness]{assumption}{assmsmooth}\label{assm:smoothness}
For some $\creg \geq 1$, the kernel $K$ satisfies for all $w, w' \in \bar{\nspace}$:
\begin{align}
    K(w, w) &\leq \creg(1 + \|w\|)^2~,\\
    \left\|\nabla_w K(w, w')\right\|, \left\|\nabla^2_{w} K(w, w')\right\|, \left\|\nabla^3_w K(w, w')\right\| &\leq \creg/11(1 + \|w'\|)~,\\
    \left\|\nabla_w \nabla_{w'}  K(w, w')\right\|,\left\|\nabla^2_w \nabla_{w'} K(w, w')\right\|, \left\|\nabla^2_w \nabla^2_{w'} K(w, w')\right\| &\leq \creg/11.
\end{align}
Further, 
\begin{align}
    \left\|\nabla_w F(w)\right\|, \left\|\nabla^2_{w} F(w)\right\|, \left\|\nabla^3_{w} F(w)\right\| \leq \creg/11,
\end{align}
or alternatively, for the case of kernel mean discrepancy, $\|w\|$ for $w \sim \rho^*$ is subgaussian.
\end{restatable}
It is straightforward to show using Holder's inequality that Assumption~\ref{assm:smoothness} holds under Assumption~\ref{assm:reg}, up to a polynomial factor in $\creg$. As an example, for $\nabla_w^3 K(w, w')$, we have
\begin{align}
   \frac{1}{1 + \|w'\|}\left\|\nabla^3_{w}K'(w, w')\right\|_{op} &\leq  \frac{1}{1 + \|w'\|}\sup_{v_1, v_2, v_3 \in \mathbb{S}^{d-1}}\mathbb{E}_x \sigma'''(w^\top x)\sigma(w'^\top x)(v_1^\top x)(v_2^\top x)(v_3^\top x)\\
    &\leq \sup_{z, z' \in \bar{\nspace}}\left(\mathbb{E}_x |\sigma'''(z^\top x)|^4\right)^{1/4}\frac{\left(\mathbb{E}_x |\sigma(z'^\top x)|^4\right)^{1/4}}{1 + \|z'\|}\sup_{v \in \mathbb{S}^{d-1}}\left(\mathbb{E}_x|(v^\top x)|^6\right)^{1/2}\\
    &\leq O(\creg^5).
\end{align}

\begin{restatable}[UC over empirical data sample]{assumption}{ucn}\label{uc:n}
Let
\begin{align}
    \mathcal{F} &:= \left\{f_{w, w', v}((x, y)):= \frac{1}{1 + \|w'\|}(y - \sigma(w'^{\top}x))\sigma'(w^{\top}x)(x^{\top}v) : w, w' \in \nspace, v \in \sd\right\}~,\\
    \mathcal{F}' &:= \left\{f_{w, w'}((x, y)):= \frac{1}{(1 + \|w\|)(1 + \|w'\|)}(y - \sigma(w^{\top}x))(y - \sigma(w'^{\top}x))  : w, w' \in \nspace\right\}~.
\end{align}
We have the following uniform convergence bounds with probability $1 - n^{-\Theta(1)}$ over $x_i, y_i \sim \md$ i.i.d.,
\begin{align}
    \sup_{f = f_{w, w', v} \in \mc{F}} \left|\mathbb{E}_{x \sim \md} f(x) - \frac{1}{n}\sum_{i= 1}^n f((x_i, y_i)) \right| &\leq \eps_n~, \\
    \sup_{f = f_{w, w'} \in \mc{F}'} \left|\mathbb{E}_{x \sim \md} f(x) - \frac{1}{n}\sum_{i= 1}^n f((x_i, y_i)) \right| &\leq \eps_n~,\\
    \left|\mathbb{E}_{x \sim \md}(f^*(x)-y)^2 - \frac{1}{n}\sum_{i= 1}^n (f^*(x_i) - y_i)^2 \right| &\leq \eps_n~.
\end{align}
Also with probability $1 - n^{-\Theta(1)}$, $\forall w, w' \in \nspace$, $\|\nabla^2_w K_{\hat{\md}}(w, w')\| \leq \creg/11(1 + \|w'\|)$ and $\|\nabla_w \nabla_{w'} K_{\hat{\md}}(w, w')\| \leq \creg/11$.
\end{restatable}

\begin{restatable}[UC over sample of neurons]{assumption}{ucm}\label{uc:m}
Let
\begin{align}
    \mathcal{F} &:= \{f_{w, v}(z):= v^{\top}\nabla_w K(w, z) : w \in \nspace, v \in \sd\}~,\\
    \mathcal{F}' &:= \{f_{w, v}(z):= \langle{v, \nabla_w P_w \nabla_w K(w, z)\rangle} : w \in \nspace, v \in (\sd)^{\otimes 2}\}~.
\end{align}
For any distribution $\rho \in \mathcal{P}(\nspace)$, with $z \sim \rho$ is $\kappa_z$ subgaussian, we have the following uniform convergence bound with probability $1 - \eps_m^2 m^{-\Theta(1)}/(1 + t)^2$ over $z_i \sim \rho$:
\begin{align}
    \sup_{f \in \mc{F} \cup \mc{F}'} \left|\mathbb{E}_{z \sim \rho} f(z) - \frac{1}{m}\sum_{i=1}^m f(z_i) \right| \leq \eps_m \kappa_z \log(1 + t).
\end{align}
\end{restatable}

\begin{theorem}[Uniform PoC for Polynomial Convergence Rates]\label{prop:uniform}
Suppose Assumptions~\ref{assm:smoothness}, \ref{uc:n}, and \ref{uc:m} hold, $\eta \leq 0.1/\creg$, and write $\loss_t := \loss(\rtmf)$. Assume $S := 1 + \int_{t = 0}^{\infty} \sqrt{\loss_t} dt < \infty$. Define $\tilde{S}(t) := 1 + \min\left(t, \int_{s = 0}^{\infty} \min(t, s)\sqrt{\loss_s} ds\right) $. Let $\rtmf$ be given by \eqref{eq:MFdyn}, and $\rtm$ by \eqref{eq:finite_dynamics}. Then with probability at least $1 - \min(m, n)^{-\Theta(1)}$ over $\rzm$ and the data sample, for any $t < \infty$,
\begin{align}
     \|m_{\rtm} - m_{\rtmf}\|^2_{\mc H} \leq 4\mathcal{L}_R(m, n, \eta)
\end{align}
where 
$t^*(m, n, \eta) := \on{poly}(S\creg)\exp(3\creg S)\min\left((\eps_m)^{-1/3}, (\eps_n + \eps_{\eta})^{-1/4}\right)$, and 
\begin{align}
    \mathcal{L}_R(m, n, \eta) := \inf_{t \leq t^*(m, n, \eta)} 2\loss_t +  O\left(\creg^6\exp(2\creg S)(\beps_m^2\tilde{S}(t)^2 + (\eps_n + \eps_{\eta})^{1/2})\right),
\end{align}
with $\beps_m := 2\creg(1+S)\log(m)\eps_m$.
\end{theorem}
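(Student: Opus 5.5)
The plan is a two-phase argument. Up to a horizon $t^*$ we control the fluctuation $\Delta_t$ inductively using the dynamics of Lemma~\ref{lemma:errdynamicstight}. After $t^*$ we use monotonicity of the finite-width loss together with the triangle inequality, as sketched in the Technical Approach. Fix $T\le t^*(m,n,\eta)$, which will be the time attaining the infimum in $\mathcal{L}_R$.

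\textbf{Phase 1: bootstrap on $[0,T]$.} Work on the high-probability event where Lemma~\ref{lemma:errdynamicstight}, Lemma~\ref{lemma:function_err} and Assumptions~\ref{uc:n}, \ref{uc:m} hold. Also use the concentration of $\bm\beta_t$ toward $\bm\beta_\infty$ announced after Lemma~\ref{lemma:errdynamicstight}, whose deviation is of order $\eps_m\|\xi_t(w_i)-\xi_\infty(w_i)\|$. Set up a continuity (bootstrap) argument with three hypotheses, holding for $s\le T$:
\begin{itemize}
\item $\|\Delta_s\|_\infty\le A$;
\item $\|\Delta_s\|\le A$;
\item $\Delta_s^\top H_s\Delta_s\le B$,
\end{itemize}
with $A\approx \exp(\creg S)\,\beps_m\tilde S(s)$. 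Each hypothesis is then improved by a factor of $2$.

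\textbf{Bounding $\|\Delta\|_\infty$ and $\|\Delta\|$.} Since $H_t\succeq0$ and $\|D_t\|\le\creg\sqrt{\loss_t}$ (Lemma~\ref{lemma:loss_to_D}), the energy estimate for $\frac{d}{dt}\|\Delta_t\|^2$ together with Grönwall gives
\[
\|\Delta_t\|\le e^{\creg S}\int_0^t\|\bm\beta_s+\bm\epsilon_s\|\,ds .
\]
To avoid the naive factor $t\eps_m$ we split $\bm\beta_s=\bm\beta_\infty+(\bm\beta_s-\bm\beta_\infty)$. The difference term contributes
\[
\eps_m\int_0^t\|\xi_s-\xi_\infty\|\,ds\lesssim \eps_m\int_0^t\int_s^\infty\mathcal{B}\lesssim\eps_m\int_0^\infty\min(t,r)\sqrt{\loss_r}\,dr ,
\]
which is where $\tilde S(t)$ comes from. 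The constant part $\bm\beta_\infty$ is handled through the $H$-weighted norm, exactly as in Example Part 2. The residual terms $\bm\epsilon$ contribute:
\begin{itemize}
\item from the quadratic part, $t\,\creg A^2$;
\item from the data and step-size errors, $t(\eps_n+\kappa^2\eps_\eta)$.
\end{itemize}
These remain below $A$ precisely when $t\lesssim\min(\eps_m^{-1/3},(\eps_n+\eps_\eta)^{-1/4})$ up to $\exp(3\creg S)\on{poly}(S\creg)$ factors. This determines $t^*$. The sup-norm bound follows in the same way, neuron by neuron, since $\|\bm\beta_t(i)\|\le\eps_m\log(t+1)$ and the interaction term is bounded by $\creg\|\Delta_t\|$.

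\textbf{Bounding the energy $\Delta^\top H\Delta$.} Compute $\frac{d}{dt}\,\Delta_t^\top H_t\Delta_t$. The $-H^2$ term is dissipative. The source enters through $2\Delta^\top H\bm\beta$, and we use
\[
2\Delta^\top H\bm\beta\le \|H\Delta\|^2+\bm\beta^\top H^\dagger\bm\beta ,
\]
which is wasteful. Instead we mimic Example Part 2 by writing $\bm\beta_t=V_t^\top g_t$ with $\|g_t\|\lesssim 1/\sqrt m$. The time derivative $\dot H_t$ is bounded by $\creg\|\nu\|\lesssim$ velocity. The $D_t$ term costs $\creg\sqrt{\loss_t}$ times the energy, handled by Grönwall, which produces the factor $e^{2\creg S}$. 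This step is the main obstacle: the non-constancy of $H_t$ and $\bm\beta_t$ means Example Part 2 does not apply directly.

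What I would try first is to work with the function-space quantity $h_t:=V_t\Delta_t\in L^2(\md)$, whose norm squared is $\Delta^\top H\Delta$. Its derivative is, to leading order,
\[
\dot h_t=-V_tV_t^\top(h_t+g_t)+V_tD_t\Delta_t+\dot V_t\Delta_t+\dots
\]
The operator $-VV^\top$ is dissipative on $h+g$, so $\|h_t+g_t\|$ is controlled up to the terms $\|\dot g_t\|$, $\|D\|\|\Delta\|$ and $\|\dot V\|\|\Delta\|$. Each of these is integrable against $\sqrt{\loss}$ or $\mathcal B$, giving contributions of order $\beps_m\tilde S$. This yields
\[
\Delta_t^\top H_t\Delta_t\lesssim e^{2\creg S}\bigl(\beps_m^2\tilde S(t)^2+(\eps_n+\eps_\eta)^{1/2}\bigr).
\]

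\textbf{Phase 2: after $T$.} Lemma~\ref{lemma:function_err} converts the Phase 1 bounds into a bound on $\|f_{\hat\rho^m_T}-f_{\rho_T^{\textsc{MF}}}\|^2$; the quartic term is absorbed using $\|\Delta\|_4^4\le\|\Delta\|_\infty^2\|\Delta\|^2$. For $t\ge T$, the empirical loss is nonincreasing under gradient descent because $\eta\le0.1/\creg$. Converting between empirical and population loss costs an additional $\eps_n$ via Assumption~\ref{uc:n}. Combining this with the triangle inequality through $f^*$, and with Lemma~\ref{fact:kmd} which identifies $\|f_\rho-f_{\rho'}\|$ with the $\mc H$-norm, gives
\[
\|m_{\hat\rho^m_t}-m_{\rho_t^{\textsc{MF}}}\|_{\mc H}^2\le 4\bigl(2\loss_T+\text{error}\bigr).
\]
For $t\le T$ the same bound follows directly from Phase 1. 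Taking the infimum over $T$ gives the claim.
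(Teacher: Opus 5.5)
Your architecture matches the paper's. You run a continuity argument up to $t^\ast$ and study the function-space quantity $V_t\Delta_t$, driven by the dissipative operator $-V_tV_t^\ast$; this is the paper's Claim~\ref{claim:func_err_integral}, which subtracts the constant $g_\infty$ and uses Lemma~\ref{lemma:concentration_new} rather than bounding $\|\dot g_t\|$. You then use near-monotonicity of the finite-width loss and the triangle inequality through $f^\ast$ after $t^\ast$.

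The gap is in your Phase~1 bootstrap hypotheses. You take $\|\Delta_s\|\le A$ and $\|\Delta_s\|_\infty\le A$ with $A\approx e^{\creg S}\bar{\eps}_m\tilde{S}(s)$, which would make the parameter error bounded uniformly in time whenever $\int_0^\infty s\sqrt{\loss_s}\,ds<\infty$. These hypotheses cannot be closed. Treating $\bm{\beta}_\infty$ ``through the $H$-weighted norm'' as in Example (Part~2) controls only $\Delta^\top H\Delta=\|V\Delta\|_{\mc{H}}^2$, never $\|\Delta\|$. Example (Part~1) is exactly a system with $D=0$, constant $H$, and constant forcing satisfying $\bm{e}^\top H^\dagger\bm{e}\le\eps^2$, in which $\|X_t\|$ grows like $t\eps$. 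The reason is that the forcing's component along small eigenvalues of $H$ is not damped. So in your Gr\"onwall bound $\|\Delta_t\|\le e^{\creg S}\int_0^t\|\bm{\beta}_s+\bm{\epsilon}_s\|\,ds$, the constant part genuinely contributes $t\|\bm{\beta}_\infty\|\approx t\eps_m$. Likewise, in the neuron-by-neuron sup-norm estimate, where no dissipation is available, $\bm{\beta}_t(i)$ integrates to order $t\eps_m$. A symptom is that your own closing conditions $t\creg A^2\le A$ and $t(\eps_n+\kappa^2\eps_\eta)\le A$ give horizons of order $\bar{\eps}_m^{-1}$ and $\bar{\eps}_m/(\eps_n+\eps_\eta)$. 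They do not give the exponents $-1/3$ and $-1/4$ you quote.

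The missing idea, as in Claim~\ref{claim:delta_t_lin}, is to let the parameter-space errors grow and keep only $\|V\Delta\|_{\mc{H}}$ at the $\tilde{S}$ scale. The three coupled hypotheses are:
\begin{itemize}
\item $\|\Delta_t\|\lesssim e^{\creg S}(t+1)\tilde{\eps}_m$;
\item $\|V_t\Delta_t\|_{\mc{H}}\lesssim\creg^3e^{\creg S}\tilde{S}(t)\bar{\eps}_m+\creg t\bar{\eps}$;
\item $\|\Delta_t\|_\infty\lesssim\creg^4e^{2\creg S}\tilde{S}(t)(t+1)\tilde{\eps}_m+e^{\creg S}\creg t^2\bar{\eps}$.
\end{itemize}
For the last one, bound the interaction term $\mathbb{E}_jH_t(i,j)\Delta_t(j)=V_t(w_i)^\ast V_t\Delta_t$ by $\|V_t^\ast\|_{\mc{H}\to\infty}\|V_t\Delta_t\|_{\mc{H}}\le\creg\|V_t\Delta_t\|_{\mc{H}}$. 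Do not bound it by $\creg\|\Delta_t\|$ as you propose. Once $\|\Delta_t\|$ grows linearly, your bound gives $\|\Delta_t\|_\infty\approx t^2\tilde{\eps}_m$, and in the fast-decay regime ($\tilde{S}$ bounded) the horizon shrinks to order $\tilde{\eps}_m^{-1/4}$.

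With the corrected hypotheses, $t^\ast\approx(S\tilde{\eps}_m)^{-1/3}$ (up to $\creg$ and $e^{\creg S}$ factors) is exactly the time up to which the feedback terms stay below $\tilde{S}\tilde{\eps}_m$. These terms are:
\begin{itemize}
\item $\int_0^t\bar{\eps}_m\|\Delta_s\|\,ds\approx t^2\tilde{\eps}_m^2$ and $\int_0^t\|\Delta_s\|\|\Delta_s\|_\infty\,ds\approx t^3\tilde{S}(t)\tilde{\eps}_m^2$, which enter the $V\Delta$ estimate;
\item $\creg\|\Delta\|_\infty^2$, which enters $\bm{\epsilon}$.
\end{itemize}
The quartic remainder $\|\Delta_T\|^2\|\Delta_T\|_\infty^2$ in Lemma~\ref{lemma:kmmd_error} is then still negligible for $T\le t^\ast$, and your Phase~2 goes through unchanged.
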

\begin{corollary}\label{corr:poly_rates}
Suppose Assumptions~\ref{assm:smoothness} and \ref{uc:m} hold, $\eta = 0$, and $\md = \hat{\md}$ such that $\eps_{\eta} = \eta_n = 0$. If $\loss_t \leq \max(1, t + 1 - B)^{-c}$ for some burn-in time $1 \leq B < d$, and $c \geq 2$\footnote{Note that in the case that $c = 2$, the assumption that $S < \infty$ does not hold, but we are able to modify the proof here.}  then with probability $1 - m^{-\Theta(1)}$, for any $t < \infty$,
\begin{align}
      \|m_{\rtm} - m_{\rtmf}\|^2_{\mc H} \leq \begin{cases}
    O\left(\creg^6\exp(2\creg B)\frac{d^2\log^2(dm)}{m^{1/(6\creg)}} \right)
 & c = 2,\\
          O\left(\creg^6\exp(6\creg(B + \frac{2}{c - 2}))\frac{d^2\log^2(dm)}{m^{c/6}}\right) & 2 < c < 6,\\
         O\left(\creg^6\exp(6\creg(B + 1))\frac{d^2\log^2(dm)}{m}\right) & c \geq 6
     \end{cases}.
\end{align}
\end{corollary}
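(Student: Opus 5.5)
The plan is to derive Corollary~\ref{corr:poly_rates} directly from Theorem~\ref{prop:uniform}. With $\eta = 0$ and $\hat{\md} = \md$ we have $\eps_n = \eps_\eta = 0$. Hence $t^* = \on{poly}(S\creg)\exp(3\creg S)\eps_m^{-1/3}$, and
\[
\mathcal{L}_R = \inf_{t\le t^*} \Bigl[\, 2\loss_t + O\bigl(\creg^6 e^{2\creg S}\beps_m^2 \tilde S(t)^2\bigr) \Bigr].
\]
The work is then a bookkeeping computation: bound $S$ and $\tilde S(t)$ under the decay hypothesis $\loss_t \le \max(1,t+1-B)^{-c}$, and optimize the choice of $t \le t^*$.

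\textbf{Step 1: bounding $S$.} For $c>2$ we have $\sqrt{\loss_t}\le 1$ on $[0,B]$ and $\sqrt{\loss_t} \le (t+1-B)^{-c/2}$ afterwards. Integrating gives
\[
S \le 1 + B + \frac{2}{c-2}.
\]
This accounts for the factors $\exp(\creg(B+2/(c-2)))$ in the statement. The exponent $6\creg$ arises after raising $e^{2\creg S}$ and $t^{*}$-type factors appropriately, or simply from crude upper bounding.

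\textbf{Step 2: bounding $\tilde S(t)$.} We have
\[
\int_0^\infty \min(t,s)\sqrt{\loss_s}\,ds \lesssim B^2 + \int_1^{t} u^{1-c/2}\,du + t\int_t^\infty u^{-c/2}\,du.
\]
For $2<c<4$ this gives $\tilde S(t)\lesssim B^2 + t^{2-c/2}$. For $c\ge 4$ it gives $\tilde S(t)\lesssim B^2+\log t$, and the term is bounded for $c>4$. In every case $\tilde S(t)\le 1+t$.

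\textbf{Step 3: choosing $t$.} Since $\beps_m^2 \asymp \creg^{14}S^2 d\log^4(dm)/m$, we balance $2\loss_t \approx t^{-c}$ against $\beps_m^2\tilde S(t)^2$.
\begin{itemize}
\item For $c\ge 6$, take $t \approx m^{1/6}\le t^*$, which is admissible since $t^*\gtrsim \eps_m^{-1/3}\approx m^{1/6}$. Then $\loss_t\lesssim 1/m$, and the crude bound $\tilde S(t)\le 1+t$ would give $\beps_m^2 t^2\approx m^{-2/3}$. That is too weak, so we use the refined $\tilde S$ from Step 2, which is bounded by $O(B^2)$ plus a logarithm. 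This yields $O(d^2\log^2(dm)/m)$ after absorbing $B^2 \le d^2$ and polynomial factors of $S$ into the exponential.
\item For $2<c<6$, the constraint $t\le t^*\approx m^{1/6}$ binds, so we take $t = t^*$. This gives $\loss_{t^*}\approx m^{-c/6}$, while the second term is at most of the same order or smaller after the same estimates.
\end{itemize}

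\textbf{Step 4: the case $c=2$.} This is the main obstacle, since $S=\infty$ and Theorem~\ref{prop:uniform} does not apply verbatim. The approach is to rerun the theorem's proof on the window $[0,T]$ with $T$ finite. There the truncated integral $S_T = \int_0^T\sqrt{\loss_t}\,dt \le B + \log T$, so the Grönwall factor becomes $e^{\creg S_T} \le e^{\creg B} T^{\creg}$. Choosing $T \approx m^{1/(6\creg)}$-ish keeps $T^{\creg}$ polynomially controlled against $\eps_m^{-1/3}$.

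After time $T$, the monotonicity argument from the technical approach section applies: for $t\ge T$, $\|f_{\rtm}-f_{\rtmf}\|$ is at most the error at time $T$ plus $2\sqrt{\loss_T}$. This yields the rate $\loss_T\approx m^{-1/(6\creg)}\cdot m^{-1/(6\creg)}$. The key task is to verify that every place the proof uses $S$ only needs $S_T$ for the chosen $T$.
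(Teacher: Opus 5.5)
Your route is the paper's: apply Theorem~\ref{prop:uniform} at essentially $t=t^*$, bound $S\le 1+B+\frac{2}{c-2}$ and $\tilde S(t)\lesssim B^2+\log(t)\,t^{2-c/2}$, check that $\beps_m^2\tilde S(t^*)^2$ is dominated by $\loss_{t^*}$, and for $c=2$ rerun the proof with $S$ replaced by $S_T\le 1+B+\log T$, finishing with monotonicity of the loss after $T$. Your $\tilde S$ estimates and the domination check for $2<c<6$ are correct.

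One correction to Step~3. In the proof, $t^*=(8\creg\ctwo\cinf^2 S\teps_m)^{-1/3}$ with $\ctwo\cinf^2\propto e^{5\creg S}$, so $t^*$ \emph{decreases} like $e^{-\Theta(\creg S)}$; the $\exp(3\creg S)$ in the theorem statement has the wrong sign. So you cannot take $t\approx m^{1/6}$ independently of $S$. Take $t=t^*\approx e^{-\creg S}\teps_m^{-1/3}$ instead. The factor $\exp(6\creg(\cdot))$ in the corollary is then just $(t^*)^{-\min(c,6)}$, not crude bounding. This also needs $t^*\ge 2B$, as the paper notes.

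The genuine gap is in Step~4, where the one nontrivial parameter choice is wrong. Rerunning Claim~\ref{claim:delta_t_lin} up to time $T$ requires $T\le t^*(S_T)$. Even in the most favorable accounting, $t^*\approx e^{-\creg S_T}\teps_m^{-1/3}$. Combined with $e^{\creg S_T}\lesssim e^{\creg B}T^{\creg}$, the constraint becomes $T^{1+\creg}\lesssim e^{-\creg B}\teps_m^{-1/3}$, and the proof's actual constants only make it tighter.

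Your choice $T^{\creg}\approx\eps_m^{-1/3}$, i.e.\ $T\approx m^{1/(6\creg)}$, spends the whole budget on the Gr\"onwall factor and violates the constraint by a factor of $T$. So your announced rate $\loss_T\approx m^{-1/(3\creg)}$ is not justified. It has twice the exponent of the corollary's $m^{-1/(6\creg)}$, which should have been a warning sign.

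The paper takes $T=t^*\approx e^{-\creg B}\teps_m^{-1/(6\creg)}\approx m^{1/(12\creg)}$, which fits this budget. Then $\loss_T\lesssim T^{-2}\approx e^{2\creg B}\teps_m^{1/(3\creg)}\approx e^{2\creg B}m^{-1/(6\creg)}$, exactly the stated rate. The second term of $\mathcal{L}_R$, using only the crude bound $\tilde S(t)\le 1+t$, is at most $e^{2\creg S_T}\beps_m^2T^2\lesssim e^{2\creg B}T^{2\creg+2}\teps_m^2\approx\teps_m^{5/3-1/(3\creg)}$, which is negligible. With this choice, the rest of your Step~4 goes through as in the paper: replace $\infty$ by $T$ in the auxiliary lemmas (e.g.\ Lemma~\ref{lemma:concentration_new}, since $\xi_\infty$ and $g_\infty$ need not exist), then use monotonicity after $T$.
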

\begin{remark}
For the Wasserstein gradient flow on the kernel mean discrepancy problem, we do not need to assume Assumption~\ref{uc:n} since we have no data, and can treat $n$ as $\infty$. (In general, this assumption can also be omitted if we define $\rtmf$ to be the trajectory on the empirical loss, ie. setting $\md = \hat{\md}$).    
\end{remark}

\begin{remark}[Comparison to  \cite{chen2020dynamical}]\label{rem:asymtotic_comp}
The closest existing result to this is Theorem 3.5 in \cite{chen2020dynamical}, which yields uniform-in-time asymptotic PoC under the assumption $\mathcal{L}_t \lesssim t^{-4}$ and in the ERM setting. An important difference is that in their setting, the order of the limits in $t$ and $m$ is exchanged: the authors establish that $\lim_{t \to \infty} \lim_{m \to \infty} m \mathcal{E}(\rtmf, \rtm) \leq C_{\rho^*}$ \footnote{More precisely, they show that time-averages of the renormalized errors converge.}. 
Taking the $m$-limit first eliminates the need to handle high-order terms in the coupling expansion \eqref{eq:dyn_intro}, precluding a non-asymptotic (in $m$) control. Additionally, the asymptotic-in-$t$ result hides dependence on the smallest non-zero eigenvalue of $\lim_{t \rightarrow \infty} H_t$, which may depend on the dimension $d$, or, in an ERM setting, the number of data points $n$. Our $O(1/m)$ result gives explicit dependence on $d$ and makes no assumption on $\lim_{t \rightarrow \infty} H_t$, but requires $\loss_t \lessapprox t^{-6}$, due to accounting for higher order terms that appear in the non-asymptotic analysis. Our result also goes beyond \cite{chen2020dynamical} in that it holds for slower rates $\loss_t \lessapprox t^{-2}$, though in this case we attain a slower than $1/m$ PoC guarantee.
\end{remark}

\begin{remark}[Second Layer Weights]\label{rem:second_layer}
While training both second and first layer weights can obstruct propagation of chaos in settings where the weights grow exponentially large (see Section 5.3 in \cite{glasgow2025mean}), under our convergence rate assumption $\int_{t = 0}^{\infty} \sqrt{\loss_t}dt < \infty$ and standard regularity assumptions, we can show that the weights stay bounded. Thus we believe that Theorem~\ref{prop:uniform} should hold in this case too. 

\end{remark}

\begin{remark}[Burn in time growing with $d$] In many feature learning problems in high dimensions, there is a ``burn-in'' time, or search phase, of order $\log(d)$, before the loss begins to decay rapidly. This includes for example single-index models or multi-index models with information exponent or leap complexity $2$ (see eg. \cite{benarous2021online, abbe2023sgd}). In these cases, so long as the loss decays like $t^{-2}$ or faster \em after \em the burn-in time, Corollary~\ref{corr:poly_rates} guarantees a uniform-in-time PoC bound of $\on{poly}(d)/m^{1/6}$. This guarantees that networks of width $\on{poly}(d/\eps)$ can attain a loss of $\eps$.
\end{remark}

\subsection{Preliminary Lemmas and Notation}\label{sec:uniform}

\paragraph{Notation}
For a tensor $M \in (\R^d)^{\otimes k}$, let $\|M\| := \sup_{u \in (\sd)^{\otimes k}}\langle{M, u\rangle}$ denote it operator norm. Let $D \Phi [u]$ denote the directional derivative of $\Phi$ in the direction $u$. For functions of two arguments, we will sometimes use $\nabla_i$ to denote the gradient with respect to the $i$th argument; by default we will use $\nabla = \nabla_1$, or otherwise $\nabla_w$ to denote gradient with respect to $w$. For a bounded linear operator $A$, we use $A^\ast$ to denote its adjoint. 

\paragraph{Definitions} Let $\Phi(w):=K(w,\cdot)\in \mc H$. For $w \in \nspace$, define the operator $V_t(w):\R^d\to \mc H$ by
$V_t(w)u := D\Phi(\xi_t(w))[P_{\xi_t(w)}u]$. Then $H_t(w,w')=V_t(w)^\ast V_t(w')$ as an operator on $\R^d$. Indeed, for any $u,u'\in\R^d$, by the reproducing kernel property, and the fact that derivatives commute with inner products, we have
\begin{align}\label{eq:HVV}
\langle{V_t(w)u, V_t(w')u'}\rangle_{\mc H}
&= \left\langle D\Phi(\xi_t(w))[P_{\xi_t(w)}u], D\Phi(\xi_t(w'))[P_{\xi_t(w')}u']\right\rangle_{\mc H}\\
&= u^{\top}P_{\xi_t(w)}\nabla_1 \nabla_2 K(\xi_t(w), \xi_t(w'))P_{\xi_t(w')}u'\\
&= u^\top H_t(w,w')u'.
\end{align}

Let $V_t := \begin{pmatrix}V_t(w_1)| \cdots
|V_t(w_m)\end{pmatrix}$, viewed as an operator from $\ell_2([m];\R^d)$ with normalized inner product to $\mc H$.
We can expand $\bm{\beta}_{t} = V_t^\ast g_{t}$, where $g_t = m_{\brt}- m_{\rtmf}$:
\begin{align}
    \bm{\beta}_{t}(i) &= P_{\bwti}\left(\mathbb{E}_{j \sim [m]} \nabla_1 K(\bwti, \bwtj) - \mathbb{E}_{w \sim \rtmf}\nabla_1 K(\bwti, w)\right)\\
    &= P_{\bwti}\left(\nabla m_{\brt}(\bwti) - \nabla m_{\rtmf}(\bwti)\right)\\
    &= P_{\bwti}\nabla \left\langle K(\bwti, \cdot), m_{\brt}-m_{\rtmf}\right\rangle_{\mc H}\\
    &= (V_t^\ast g_t)(i).
\end{align}

Define $\kappa_{\infty}$ to be the the sub-Gaussian norm of $\sup_{t < \infty} \|\xi_t(w)\|$ for $w \sim \rho_0$. Note that this supremum is guaranteed to exist under the assumptions of Theorem~\ref{prop:uniform} because as is clear from Lemma~\ref{lemma:loss_to_D} below, $\xi_t(w)$ converges to some limit $\xi_{\infty}(w)$.

We need several preliminary lemmas to prove Theorem~\ref{prop:uniform}. They are straightforward, so we defer their proofs until after the main proof. We assume in all these lemmas that the assumptions of Theorem~\ref{prop:uniform} hold.
\begin{lemma}[cf. Lemma 15 in \cite{glasgow2025mean}]\label{lemma:smoothness}
Assume Assumption~\ref{assm:smoothness} holds for some constant $\creg \geq 1$. Then we have the following for any $w, w' \in \bar{\nspace}$:
\begin{enumerate}[{\bfseries{S\arabic{enumi}}}]
    \item\label{S1} $\left\|\nabla^2_{w'}P_w \nabla_w K(w, w')\right\| \leq \creg$
    \item\label{S3} $\left\|\nabla_{w'}\nabla_wP_w \nabla_w K(w, w')\right\| \leq \creg$
    \item\label{S5} For $\rho \in \mathcal{P}(\nspace)$, we have $\left\|\nabla^2_{w} \nu(w, \rho)\right\|_{op}, \left\|\nabla_{w} \nu(w, \rho)\right\|_{op}, \left\| \nu(w, \rho)\right\|_{op} \leq (2 + \kappa_{\rho})\creg$, where $\|w'\|$ for $w' \sim \rho$ is $\kappa_{\rho}$ subgaussian.
    \item \label{S0} $\left\|K(w, \cdot)\right\|_{\mc H} \leq \creg (1 + \|w\|)$
    \item\label{SD} $\left\|u^{\top}\nabla K(w, \cdot)\right\|_{\mc H} \leq \creg$ for any $u \in \sd$.
    \item\label{SDD}$ \left\|u^{\top}\nabla^2 K(w, \cdot)v\right\|_{\mc H} \leq \creg$ for any $u, v \in \sd$.
\end{enumerate}
\end{lemma}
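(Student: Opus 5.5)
Each item follows by differentiating under the expectation defining $K$ and $\nu$ and applying the bounds of Assumption~\ref{assm:smoothness}. The one place requiring care is the projection $P_w$, which depends on $w$ when $\nspace = \sd$. This is why Assumption~\ref{assm:smoothness} carries the factor $\creg/11$: after expanding products via the Leibniz rule, at most about eleven terms, each bounded by $\creg/11$, should appear. When $\nspace = \R^d$ we have $P_w = I$ and all derivatives of $P_w$ vanish. When $\nspace = \sd$, $P_w = I - ww^\top$ on $\bar\nspace$, so $\|P_w\| \le 1$, $\|\nabla_w P_w\| \le 2\|w\| \le 2$, $\|\nabla^2_w P_w\| \le 2$, and all higher derivatives vanish.

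For \ref{S1}, $\nabla^2_{w'}$ does not act on $P_w$. Hence $\|\nabla^2_{w'} P_w\nabla_w K\| \le \|\nabla_w\nabla^2_{w'}K\| \le \creg/11$, using the mixed-derivative bound with the roles of $w, w'$ exchanged via the symmetry of $K$. For \ref{S3}, Leibniz gives $\nabla_w P_w \cdot \nabla_{w'}\nabla_w K + P_w\nabla_{w'}\nabla^2_w K$, which is bounded by $3\creg/11 \le \creg$. For \ref{S0}, the reproducing property gives $\|K(w,\cdot)\|_{\mc H}^2 = K(w,w) \le \creg(1+\|w\|)^2$, and $\sqrt{\creg} \le \creg$ since $\creg \ge 1$. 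For \ref{SD}, $\|u^\top\nabla K(w,\cdot)\|_{\mc H}^2 = u^\top \nabla_1\nabla_2 K(w,w) u \le \creg/11$. Here derivatives commute with the inner product, exactly as in \eqref{eq:HVV}. For \ref{SDD}, similarly $\|u^\top\nabla^2K(w,\cdot)v\|_{\mc H}^2 = \langle \nabla^2_1\nabla^2_2 K(w,w), u\otimes v\otimes u\otimes v\rangle \le \creg/11$.

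For \ref{S5}, write $\nu(w,\rho) = P_w\big(\nabla F(w) - \mathbb{E}_{w'\sim\rho}\nabla_w K(w,w')\big)$; the projection is included in the dynamics $\frac{d}{dt}w = P_w\nu$. Each derivative of the bracket, of order $0$, $1$, or $2$, is bounded by $\creg/11 + \mathbb{E}_{w'}\creg(1+\|w'\|)/11$. Since $\|w'\|$ is $\kappa_\rho$-subgaussian, $\mathbb{E}\|w'\| \le \kappa_\rho$ up to an absolute constant absorbed in the definition of the subgaussian norm, so each is at most $(2+\kappa_\rho)\creg/11$. The second derivative of $P_w(\cdot)$ has at most four Leibniz terms, with coefficients from $\|P_w\|,\|\nabla P_w\|,\|\nabla^2P_w\| \le 2$. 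Their sum stays below $11 \cdot (2+\kappa_\rho)\creg/11$. In the KMD alternative, where $F(w) = \mathbb{E}_{w^*\sim\rho^*}K(w,w^*)$, the $F$ terms are handled the same way using the subgaussianity of $\rho^*$.

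The main obstacle is bookkeeping rather than substance. The task is to check that the Leibniz expansion of $\nabla^2_w[P_w(\cdot)]$ stays within the $11\times$ budget. It also requires that differentiation under $\mathbb{E}_x$ and $\mathbb{E}_{w'}$ is justified, which follows by dominated convergence from the same bounds.
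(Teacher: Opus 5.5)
Your proposal is correct and follows essentially the same route as the paper. It uses a Leibniz expansion of $P_w$ against the derivative bounds of the kernel-smoothness assumption for the three derivative estimates, and the (derivative) reproducing property, e.g.\ $\|K(w,\cdot)\|_{\mathcal H}^2 = K(w,w)$, for the three RKHS-norm estimates. Your bookkeeping is slightly more careful than the paper's: its displayed bound $\|\nabla^2_w (I-ww^\top)\xi(w)\| \le 3\|\nabla^2_w\xi(w)\| + 8\|\nabla_w\xi(w)\|$ omits the $(\nabla^2_w P_w)\,\xi$ term that you keep. You also make explicit the symmetry of $K$ needed to bound $\nabla_w\nabla^2_{w'}K$ in the first item.
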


\begin{lemma}[Controlling $D_t$, $\xi_t - \xii$ and $\dot{V}_t$ from the loss]\label{lemma:loss_to_D}
For all $w \in \nspace$,
\begin{equation}
   \| D_t(w)\| \leq \creg \left(\loss(\rtmf)\right)^{1/2}.
\end{equation}
Further, under the gradient flow dynamics $\rtmf$, for all $0 \leq t \leq \infty$ and $w \in \nspace$,
\begin{align}
    \|\xi_t(w) - \xii(w)\| \leq \creg \int_{s = t}^{\infty} \sqrt{\loss(\rsmf)}ds
\end{align}

Finally, we have
\begin{align}
    \|\dot{V_t}\|_{2\to\mc H} \leq \creg^2 \sqrt{\loss(\rtmf)}.
\end{align}
    
\end{lemma}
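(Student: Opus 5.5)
We prove the three bounds in order. Each follows from writing the relevant quantity as an $\mc H$-inner product (or an $L^2(\md)$-inner product) against the residual, and then applying Cauchy--Schwarz together with the kernel-derivative bounds of Lemma~\ref{lemma:smoothness}.

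\emph{Step 1: the velocity as a pairing with the residual.} Let $g_\rho := f_\rho - f^*$. From \eqref{eq:GF_dynamics}, $F(w) = \mathbb{E}_{x,y}\, y\,\sigma(x;w)$ and $f^*(x) = \mathbb{E}[y\mid x]$, so
\begin{align}
\nu(w,\rho) = -\nabla_w \mathbb{E}_x\left[g_\rho(x)\,\sigma(x;w)\right].
\end{align}
When $f^*$ is realizable (Lemma~\ref{fact:kmd}), this reads $\nu(w,\rho) = -\nabla_w \langle K(w,\cdot), m_\rho - m_{\rho^*}\rangle_{\mc H}$, and $\|m_\rho - m_{\rho^*}\|_{\mc H}^2 = \loss(\rho)$. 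For any $u\in\sd$, Cauchy--Schwarz and \ref{SD} give
\begin{align}
|u^\top \nu(w,\rho)| \leq \|u^\top \nabla K(w,\cdot)\|_{\mc H}\,\sqrt{\loss(\rho)} \leq \creg \sqrt{\loss(\rho)}.
\end{align}
The projected velocity $P_w\nu$ satisfies the same bound, since $\|P_w\|\le 1$. In the non-realizable case, the same Cauchy--Schwarz argument in $L^2(\md)$ gives the same bound, using the moment bounds on $\sigma'$ and $x$.

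\emph{Step 2: the bound on $D_t$.} Differentiate once more in $w$. The term $\nabla_w\nu\, P_w$ is, up to sign, $\langle P_w\nabla^2_w K(w,\cdot)P_w,\, m_\rho - m_{\rho^*}\rangle$. In the spherical case it also picks up terms from differentiating $P_w = I - ww^\top$. Those extra terms are bounded linear maps of size $O(\|w\|)\le O(1)$, applied to $\nabla_w\langle K(w,\cdot), m_\rho - m_{\rho^*}\rangle$. Testing against unit vectors $u,v$, \ref{SDD} bounds the Hessian term by $\tfrac{\creg}{11}\sqrt{\loss}$, and \ref{SD} bounds the projection terms similarly. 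Because Assumption~\ref{assm:smoothness} carries the slack factor $1/11$, the constants sum to at most $\creg$. Evaluating at $w=\xi_t(w)$ and $\rho=\rtmf$ gives $\|D_t(w)\|\le \creg\sqrt{\loss(\rtmf)}$.

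\emph{Step 3: the bound on $\xi_t - \xii$.} Step 1 gives the pointwise bound $\|\dot\xi_s(w)\| = \|\nu(\xi_s(w),\rsmf)\| \le \creg\sqrt{\loss(\rsmf)}$. This is integrable on $[t,\infty)$ by the assumption $S<\infty$. Hence $\xi_s(w)$ is Cauchy as $s\to\infty$ for every $w$, not merely $\rho_0$-a.e., and its limit is the $\xii(w)$ of Lemma~\ref{fact:kmd}. It follows that
\begin{align}
\|\xi_t(w)-\xii(w)\| \le \int_t^\infty \|\dot\xi_s(w)\|\,ds \le \creg\int_t^\infty \sqrt{\loss(\rsmf)}\,ds.
\end{align}

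\emph{Step 4: the bound on $\dot V_t$.} Differentiating $V_t(w)u = D\Phi(\xi_t(w))[P_{\xi_t(w)}u]$ in $t$ gives two terms:
\begin{align}
\dot V_t(w)u = D^2\Phi(\xi_t(w))\left[\dot\xi_t(w),\, P_{\xi_t(w)}u\right] + D\Phi(\xi_t(w))\left[\dot P_{\xi_t(w)}u\right].
\end{align}
By \ref{SDD}, the first term has $\mc H$-norm at most $\creg\|\dot\xi_t(w)\|\|u\|$. For the second term, $\|\dot P\|\le 2\|\xi_t\|\|\dot\xi_t\|\le 2\|\dot\xi_t\|$ on the sphere, and $\dot P = 0$ on $\R^d$, so \ref{SD} bounds it by $O(\creg)\|\dot\xi_t(w)\|\|u\|$. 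Combining with Step 1 gives $\|\dot V_t(w)\|\le O(\creg)\cdot\creg\sqrt{\loss(\rtmf)}$ uniformly in $w$. Then for $\Lambda\in\ell_2([m];\R^d)$ with the normalized norm,
\begin{align}
\|\dot V_t\Lambda\|_{\mc H} \le \mathbb{E}_i\|\dot V_t(w_i)\|\,\|\Lambda(i)\| \le \sup_w\|\dot V_t(w)\|\,\|\Lambda\|_2,
\end{align}
by Jensen's inequality.

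The only delicate point is constant bookkeeping. The spherical-projection terms in Steps 2 and 4 must fit inside $\creg$ and $\creg^2$, respectively. I expect the $1/11$ slack in Assumption~\ref{assm:smoothness} and the assumption $\creg\ge 1$ to absorb these contributions.
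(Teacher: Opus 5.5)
Your proposal is correct and follows essentially the same route as the paper's proof. It writes the (projected) velocity as an $\mc H$-pairing of $\nabla\Phi(\xi)$ with the residual $m_\rho - m_{\rho^*}$, applies Cauchy--Schwarz with \ref{SD} and \ref{SDD} to get the bounds on $\nu$ and $D_t$ (plus the projection-derivative term on the sphere), integrates $\|\dot\xi_s\|\le\creg\sqrt{\loss_s}$ for the second bound, and differentiates $V_t$ by the chain rule for the third. Your constant bookkeeping is actually more careful than the paper's; the only slip is that the assumption gives $\sqrt{\creg/11}$ rather than $\creg/11$ for the \ref{SD} and \ref{SDD} norms, but $3\sqrt{\creg/11}\le\creg$ for $\creg\ge 1$, so your conclusions are unaffected.
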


\begin{lemma}[Concentration of $g_t - g_{\infty}$]\label{lemma:concentration_new}
Let $g_t := m_{\brt} - m_{\rtmf} \in \mc H$. With probability $1 - 1/m$, uniformly over $t < \infty$, we have 
\begin{align}
    \|g_t - g_{\infty}\|_{\mc H} \leq 2\creg^2 \sqrt{\log(t + 1)} \eps_m \min\left(1 + \kappa_{\infty}, \int_{s = t}^{\infty} \sqrt{\loss(\rsmf)}ds\right),
\end{align}
and 
\begin{align}
    \|g_{\infty}\|_{\mc H} \leq \creg\kappa_{\infty}\eps_m.
\end{align}
\end{lemma}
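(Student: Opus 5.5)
The plan is to write both quantities as centered empirical means of i.i.d. $\mc H$-valued random vectors and then apply a Hilbert-space concentration inequality, made uniform in $t$ by a union bound over a time grid. Since $\brt = \frac1m\sum_i \delta_{\xi_t(w_i)}$ and $\rtmf = (\xi_t)_\#\rho_0$, we have
\begin{align}
g_t - g_\infty = \frac1m\sum_{i=1}^m h_t(w_i) - \mathbb{E}_{w\sim\rho_0} h_t(w), \qquad h_t(w) := \Phi(\xi_t(w)) - \Phi(\xii(w)) \in \mc H,
\end{align}
with $w_i \sim \rho_0$ i.i.d. Similarly, $g_\infty$ is the centered empirical mean of $\Phi(\xii(w_i))$.

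\textbf{Step 1: pointwise size of the summands.} I will use the two bounds on $\|h_t(w)\|_{\mc H}$ that give the two branches of the $\min$.
\begin{itemize}
\item By \ref{SD}, $\Phi$ is $\creg$-Lipschitz from $\bar\nspace$ (convex, so the segment joining $\xi_t(w)$ and $\xii(w)$ lies in it) into $\mc H$. Combined with Lemma~\ref{lemma:loss_to_D}, this gives $\|h_t(w)\|_{\mc H} \le \creg\|\xi_t(w)-\xii(w)\| \le \creg^2\int_t^\infty\sqrt{\loss(\rsmf)}\,ds$ deterministically.
\item By \ref{S0}, $\|h_t(w)\|_{\mc H} \le \creg(2+\|\xi_t(w)\|+\|\xii(w)\|) \le 2\creg(1+\sup_s\|\xi_s(w)\|)$, which is sub-Gaussian with parameter $O(\creg(1+\kappa_\infty))$.
\end{itemize}
In the same way, $\|\Phi(\xii(w))\|_{\mc H} \le \creg(1+\|\xii(w)\|)$ is sub-Gaussian with norm $\lesssim \creg\kappa_\infty$.

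\textbf{Step 2: concentration at a fixed time.} I will use a Pinelis-type Bernstein/Hoeffding inequality for sums of independent, centered, norm-sub-Gaussian vectors in a Hilbert space. Alternatively, I can truncate at radius $\kappa_\infty\sqrt{\log m}$, where the truncation changes things only with probability $m^{-\Theta(1)}$, and then apply Pinelis' bounded-difference inequality. Either way, at a fixed $t$ and with failure probability $\delta$,
\begin{align}
\|g_t-g_\infty\|_{\mc H} \lesssim \frac{B_t\sqrt{\log(1/\delta)}}{\sqrt m}, \qquad B_t := \min\left(\creg^2\int_t^\infty\sqrt{\loss_s}\,ds,\; \creg(1+\kappa_\infty)\sqrt{\log m}\right).
\end{align}
The bound on $\|g_\infty\|_{\mc H}$ follows from the same argument at the single time $t=\infty$. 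The stray $\sqrt{\log(m/\delta)}$ factors are absorbed into $\eps_m$, which already carries $\creg^6\sqrt d\log(dm)/\sqrt m$; the $\sqrt d$ factor in $\eps_m$ provides the needed slack.

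\textbf{Step 3: uniformity in $t$.} I take the grid $t_k = k/m$ and assign failure probability $\delta_k \asymp 1/(m(1+t_k)^2 m)$. Summing over $k$ gives total failure probability at most $1/m$, and $\log(1/\delta_k) \lesssim \log m + \log(1+t_k)$, which produces the $\sqrt{\log(t+1)}$ factor (the $\log m$ part is absorbed as in Step 2). Between grid points I control the variation of $g_t - g_\infty$ in $t$ deterministically. By \ref{SD} and \ref{S5}, $\|\partial_t\Phi(\xi_t(w))\|_{\mc H} \le \creg\|\nu(\xi_t(w),\rtmf)\| \le \creg^2(2+\kappa_t)$, so both the empirical mean and the population mean move by $O(\creg^2(1+\kappa_\infty)/m)$ over an interval of length $1/m$, and this is negligible against $\eps_m$. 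The branch $\int_t^\infty\sqrt{\loss}$ is monotone in $t$, so evaluating it at the left grid point loses at most the same $O(1/m)$ amount.

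\textbf{Main obstacle.} The step needing the most care is this interpolation in the $\nspace=\R^d$ case, because the velocity bound in \ref{S5} involves the unbounded $\|\xi_t(w_i)\|$. I would handle it by intersecting with the event $\max_i \sup_t \|\xi_t(w_i)\| \le \kappa_\infty\sqrt{\log m}$, which holds with probability $1-m^{-\Theta(1)}$ by the definition of $\kappa_\infty$. On this event every Lipschitz and size bound is deterministic up to $\sqrt{\log m}$ factors, and these are absorbed into the generous constant $2\creg^2$ and into $\eps_m$.
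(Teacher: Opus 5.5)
Your structure matches the paper's. You write $g_t-g_\infty$ and $g_\infty$ as centered empirical means of i.i.d.\ $\mc H$-valued variables. You bound the summands by $\creg^2R_t$, with $R_t:=\int_t^\infty\sqrt{\loss_s}\,ds$ (via \ref{SD} and Lemma~\ref{lemma:loss_to_D}), and by $O(\creg(1+\kappa_\infty))$ up to truncation (via \ref{S0}). You then apply a Pinelis-type inequality at fixed $t$ and union bound over a time grid with weights $(1+t)^{-2}$.

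The step that fails as written is the interpolation between grid points. You bound the drift of $g_t-g_\infty$ across a cell by the $t$-independent quantity $O(\creg^2(1+\kappa_\infty)/m)$ and call it negligible against $\eps_m$. But the target is $2\creg^2\sqrt{\log(t+1)}\,\eps_m\min(1+\kappa_\infty,R_t)$, and for the rates of interest ($\loss_t\lesssim t^{-c}$, $c>2$) one has $R_t\sqrt{\log(1+t)}\to0$. Once $R_t\sqrt{\log(1+t)}\lesssim(1+\kappa_\infty)/(m\eps_m)$ (roughly $m^{-1/2}$ up to $\creg$, $d$ and logarithmic factors), your additive error dominates. So the $R_t$-branch of the bound is not established for large $t$. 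Evaluating at the left grid point $t_k\le t$ also goes the wrong way: $R_{t_k}\ge R_t$, and the discrepancy is again only additively small.

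The paper instead uses a loss-controlled modulus of continuity. The proof of Lemma~\ref{lemma:loss_to_D} gives $\|\dot\xi_s(w)\|\le\creg\sqrt{\loss_s}$ uniformly in $w$, hence $\|(g_t-g_\infty)-(g_{t'}-g_\infty)\|_{\mc H}\le 2\creg^2\int_t^{t'}\sqrt{\loss_r}\,dr$. This modulus shrinks together with the target. Compare with the right endpoint $t'$ of a cell of length $h\lesssim\eps_m$, so that $R_{t'}\le R_t$. The drift is then at most $2\creg^2 h\sqrt{\loss_t}$, which is $O(\creg^2\eps_m R_t)$ whenever $\sqrt{\loss_t}\lesssim R_t$, e.g.\ for genuinely polynomial decay. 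The paper's $\eps_m$-net argument implicitly relies on a condition of this type.

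Alternatively, keep your argument but add an additive $O(\creg^2(1+\kappa_\infty)/m)$ term to the statement. This is harmless downstream. In Claim~\ref{claim:func_err_integral} it is integrated over $s\le t^*$ and contributes $O(\creg^3(1+\kappa_\infty)t^*/m)$. That is $o(\eps_m)$, since $1/m\le\eps_m^2$ while $t^*$ grows only like $\eps_m^{-1/3}$.

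Two minor points:
\begin{itemize}
\item Your ``main obstacle'' is not one. \ref{S5} (and a fortiori Lemma~\ref{lemma:loss_to_D}) bounds the velocity uniformly over the evaluation point, so no truncation is needed for the interpolation.
\item Your union bound, like the paper's, actually yields $\sqrt{\log m+\log(1+t)}$ rather than $\sqrt{\log(1+t)}$. The stated factor vanishes at $t=0$ although $g_0\ne g_\infty$ in general, so the statement should be read with, e.g., $\log(t+2)$.
\end{itemize}
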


We now state Lemma~\ref{lemma:function_err}, restated here for the reader's convenience. Note that here thanks to Observation~\ref{fact:kmd}, we replace $\|f_{\rtm} - f_{\rtmf}\|^2$ by $\|m_{\rtm} - m_{\rtmf}\|^2_{\mc H}$.
\begin{lemma}[cf. Lemma~14 in \cite{glasgow2025mean}]\label{lemma:kmmd_error}
With probability $1 - m^{-\Theta(1)}$ over $\rzm$, for any $t \in [0, \infty)$ we have
\begin{align}
\|m_{\rtm} - m_{\rtmf}\|^2_{\mc H} &\leq 2\Delta_t^\top  H_t \Delta_t  + \frac{\creg^4\kappa_{\infty}^2\log(m(1 + t))}{m}  + O\left(\creg^2 \|\Delta_t\|_4^4\right).
\end{align}
\end{lemma}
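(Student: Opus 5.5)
The statement to prove is Lemma~\ref{lemma:kmmd_error}. I would prove it by inserting the auxiliary measure $\brt$, whose particles $\bwti$ are i.i.d. from $\rtmf$, and splitting
\begin{align}
m_{\rtm} - m_{\rtmf} = \left(m_{\rtm} - m_{\brt}\right) + \left(m_{\brt} - m_{\rtmf}\right) = A_t + g_t,
\end{align}
so that $\|m_{\rtm} - m_{\rtmf}\|_{\mc H}^2 \le 2\|A_t\|_{\mc H}^2 + 2\|g_t\|_{\mc H}^2$. The Monte-Carlo term $g_t$ is handled by Lemma~\ref{lemma:concentration_new} together with the triangle inequality $\|g_t\| \le \|g_\infty\| + \|g_t - g_\infty\|$. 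This gives $\|g_t\|_{\mc H}^2 \lesssim \creg^4(1+\kappa_\infty)^2 \log(t+1)\,\eps_m^2$.

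That bound carries the extra factor $d\log^2$ hidden in $\eps_m$. To get exactly the stated $\creg^4\kappa_\infty^2\log(m(1+t))/m$, I would instead compute directly: $\mathbb{E}\|g_t\|_{\mc H}^2 = \frac1m \operatorname{Var}$ of $K(\xi_t(w),\cdot)$ in $\mc H$, which by \ref{S0} is at most $\creg^2(1+\kappa_t)^2/m$. I would then upgrade this to a high-probability bound via a Hilbert-space Bernstein/Pinelis inequality for sums of independent subgaussian-norm vectors, which costs a $\log m$ factor. To make the bound uniform in $t$, I would union bound over a geometric time grid and use Lipschitz continuity in $t$ between grid points, supplied by Lemma~\ref{lemma:loss_to_D}. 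This is where the $\log(m(1+t))$ factor comes from.

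For $A_t$, I would Taylor expand the feature map $\Phi(w)=K(w,\cdot)$ around each $\bwti$:
\begin{align}
A_t = \frac1m\sum_i \left(\Phi(\bwti+\dit) - \Phi(\bwti)\right) = \frac1m \sum_i D\Phi(\bwti)[\dit] + R_t,
\end{align}
with $\|R_t\|_{\mc H} \le \frac{1}{2m}\sum_i \sup\|D^2\Phi\|\,\|\dit\|^2 \le \creg\,\mathbb{E}_i\|\dit\|^2$ by \ref{SDD}. The segment lies in $\bar{\nspace}$, which is where Assumption~\ref{assm:smoothness} is stated. In the spherical case I need to replace $D\Phi(\bwti)[\dit]$ by $D\Phi(\bwti)[P_{\bwti}\dit]$. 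Since both endpoints lie on $\sd$, the normal component satisfies $\bwti^\top\dit = -\|\dit\|^2/2$, so it is second order and can be absorbed into $R_t$ using \ref{SD}. The linear term is then exactly $V_t\Delta_t$, and \eqref{eq:HVV} gives $\|V_t\Delta_t\|_{\mc H}^2 = \Delta_t^\top H_t\Delta_t$. Hence
\begin{align}
\|A_t\|_{\mc H}^2 \le (1+\delta)\Delta_t^\top H_t\Delta_t + (1+\delta^{-1})\creg^2\left(\mathbb{E}_i\|\dit\|^2\right)^2 .
\end{align}
By Jensen, $\left(\mathbb{E}_i\|\dit\|^2\right)^2 \le \|\Delta_t\|_4^4$.

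The remaining issue is the constants: a naive combination produces $4\Delta^\top H\Delta$ rather than $2$. To fix this, I would expand the square exactly rather than via $2a^2+2b^2$:
\begin{align}
\|V_t\Delta_t + R_t + g_t\|^2 \le \|V_t\Delta_t\|^2 + 2\langle V_t\Delta_t, g_t\rangle + \dots
\end{align}
I would then use Young's inequality with weight $1$ on the cross terms, which gives coefficient $2$ on $\Delta^\top H\Delta$, coefficient $O(1)$ on $\|g_t\|^2$, and $O(\creg^2)\|\Delta\|_4^4$. I expect the main obstacle to be the uniform-in-$t$ high-probability control of $\|g_t\|_{\mc H}^2$ at the sharp $1/m$ scale, rather than at the $\eps_m^2$ scale, since the other steps are deterministic Taylor estimates.
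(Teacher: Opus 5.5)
Your proposal is correct and follows the paper's proof. It uses the same split through $\brt$, a second-order Taylor expansion of $\Phi$ whose linear part is $V_t\Delta_t$ with $\|V_t\Delta_t\|_{\mc H}^2=\Delta_t^\top H_t\Delta_t$ and whose remainder is bounded via Lemma~\ref{lemma:smoothness}~\ref{SDD} and Jensen, and the truncated Hilbert-space concentration and time-net argument of Lemma~\ref{lemma:concentration_new} re-run for $\|m_{\brt}-m_{\rtmf}\|_{\mc H}^2$.

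Two of your refinements tighten the written proof, which applies $\|a+b\|^2\le 2\|a\|^2+2\|b\|^2$ twice (ending with $4\Delta_t^\top H_t\Delta_t$) and ignores $P_{\bwti}$ on the sphere. Grouping $R_t+g_t$ against $V_t\Delta_t$ recovers the stated factor $2$, and your correction $\bwti^\top\dit=-\|\dit\|^2/2$ handles the projection. One caution: the time grid must be fine enough (e.g.\ uniform spacing $1/m$, not a coarse geometric grid) that the Lipschitz increments from Lemma~\ref{lemma:loss_to_D} stay below the $m^{-1/2}$ scale.
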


\subsection{Proof of Theorems~\ref{thm:informal} and \ref{prop:uniform}}\label{sec:uniform_proof}

\begin{proof}[Proof of Theorem~\ref{prop:uniform} and Theorem~\ref{thm:informal}]
First observe that by Lemma~\ref{lemma:loss_to_D} for all neurons, 
\begin{align}
    \|\xi_t(w)\| &\leq \|w\| + \creg \int_{s = t}^{\infty}\sqrt{\loss_s}ds \leq \creg(1 + S),
\end{align}
so $\kappa_{\infty} \leq \creg(1 + S)$.
By Lemma~\ref{lemma:kmmd_error}, we have for all $t \leq m$,
\begin{align}\label{eq:func_err}
\|m_{\rtm} - m_{\rtmf}\|^2_{\mc H} &\leq 2\Delta_t^\top  \hpt \Delta_t  + O\left(\frac{\creg^6(1 + S)^2\log(m)}{m}\right)  + O\left(\creg^2 \|\Delta_t\|_4^4\right).
\end{align}
Thus our main goal in this proof will be bounding $\Delta_t^{\top}H_t\Delta_t$. Define 
\begin{align}
    \beps_m &:= 2\creg (1 + S) \log(m)\eps_m\\
    \beps &:= \creg(1 + S)\eps_n + 4\creg (1 + S)^2 \eps_{\eta}.
\end{align}

Recall that by Lemma~\ref{lemma:errdynamicstight}, for all $t$, we have
\begin{align}\label{eq:dynamics_full}
    \frac{d}{dt}\Delta_t = D_t \Delta_t - H_t \Delta_t + \bm{\beta}_t + \bm{\eps}_{t}, 
\end{align}
where
\begin{align}
    \bm{\beta}_t(i) = \left(\mathbb{E}_{j \sim [m]} \nabla K_t(i, j) - \mathbb{E}_{w \sim \rho_0}\nabla K_t(w_i, w) \right),
\end{align}
and for all $t \leq m$, we have the bounds
\begin{align}\label{eq:eps_b_bounds}
    \|\bm{\eps}_t\|_{\infty} &\leq \beps_m \|\Delta_t\|_{\infty} +  4\creg(\|\Delta_t\|^2_{\infty}) + \beps\\
    \|\bm{\eps}_t\|_{2} &\leq \beps_m\|\Delta_t\|_2 +  4\creg(\|\Delta_t\|^2_{4}) + \beps\\
    \|\bm{\beta}_t\|_{\infty} &\leq \beps_m.
\end{align}

For $w \in \nspace$, recall that $V_t(w):\R^d\to\mc H$ is defined by $V_t(w)u=D\Phi(\xi_t(w))[P_{\xi_t(w)}u]$, so that $H_t(w,w')=V_t(w)^\ast V_t(w')$ (see \eqref{eq:HVV}). Let $V_t := \begin{pmatrix}V_t(w_1)| \cdots
|V_t(w_m)\end{pmatrix}$, viewed as an operator from $\ell_2([m];\R^d)$ with normalized inner product to $\mc H$. Recall also that we have $\bm{\beta}_{t} = V_t^\ast g_{t}$, where $g_t = m_{\brt}- m_{\rtmf}$, and $m_{\rho} := \mathbb{E}_{w' \sim \rho} K(\cdot, w')$.

Our first claim proves a bound on $\sqrt{\Delta_t^{\top}H_t \Delta_t} = \|V_t\Delta_t\|_{\mc H}$ which is self-referential.
\begin{claim}\label{claim:func_err_integral}
For any $0 \leq t \leq m$, we have
\begin{align}
    \|V_t\Delta_t\|_{\mc H} &\leq 2\creg^3 \beps_m \tilde{S}(t) + \creg^3\sqrt{d} \int_{s = 0}^{t} \sqrt{\loss_s}\|\Delta_s\|ds + \creg \int_{s = 0}^t \left(\beps_m \|\Delta_s\|_2 +  4\creg\|\Delta_s\|\|\Delta_s\|_{\infty} + \beps\right) ds.
\end{align}
\end{claim}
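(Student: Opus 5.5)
We will differentiate $\|V_t\Delta_t\|_{\mc H}$ along the flow and integrate, arguing in the spirit of Example (Part 2): the dissipative term $-H_t\Delta_t$ should only help, and the source $\bm{\beta}_t = V_t^\ast g_t$ is absorbed because $g_t$ is nearly constant and $O(\eps_m)$ in $\mc H$. Set $Y_t := V_t\Delta_t \in \mc H$. Using \eqref{eq:dynamics_full},
\begin{align}
\frac{d}{dt} Y_t = \dot V_t \Delta_t + V_t D_t\Delta_t - V_tV_t^\ast Y_t + V_tV_t^\ast g_t + V_t\bm{\eps}_t .
\end{align}
The $\bm{\beta}_t$ term can be combined with the dissipation: writing $Z_t := Y_t - g_t$, we get $\frac{d}{dt}Z_t = -V_tV_t^\ast Z_t - \dot g_t + \dot V_t\Delta_t + V_tD_t\Delta_t + V_t\bm{\eps}_t$. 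Since $V_tV_t^\ast$ is PSD on $\mc H$, we have $\frac{d}{dt}\|Z_t\|_{\mc H} \le \|\dot g_t\|_{\mc H} + \|\dot V_t\Delta_t\|_{\mc H} + \|V_tD_t\Delta_t\|_{\mc H} + \|V_t\bm{\eps}_t\|_{\mc H}$. (To avoid differentiating $g_t$, it may be cleaner to subtract $g_\infty$ and treat $V_tV_t^\ast(g_t-g_\infty)$ as a source; I will pick whichever integrates more naturally.) Since $Z_0 = -g_0$ ($\Delta_0=0$), we get $\|Y_t\| \le \|g_t\| + \|g_0\| + \int_0^t(\cdots)$.

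Next I bound each term. The norms $\|g_0\|,\|g_t\|$ are $O(\creg\kappa_\infty\eps_m\sqrt{\log})\lesssim \beps_m$ by Lemma~\ref{lemma:concentration_new}. Using Lemma~\ref{lemma:loss_to_D}, $\|\dot V_t\Delta_t\|\le \creg^2\sqrt{\loss_t}\|\Delta_t\|$, and since $\|V_t\|_{2\to\mc H}\le \creg\sqrt d$ (or just $\creg$ via \ref{SD}, with the $\sqrt d$ as slack), $\|V_tD_t\Delta_t\|\le \creg^2\sqrt{d}\sqrt{\loss_t}\|\Delta_t\|$. Together these give the term $\creg^3\sqrt d\int\sqrt{\loss_s}\|\Delta_s\|$. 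Finally, $\|V_t\bm{\eps}_t\|\le \creg\|\bm{\eps}_t\|_2$, and \eqref{eq:eps_b_bounds} with $\|\Delta\|_4^4\le\|\Delta\|_2^2\|\Delta\|_\infty^2$, i.e.\ $\|\Delta\|_4^2\le\|\Delta\|\|\Delta\|_\infty$, yields the last integral.

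The main obstacle is getting the $\tilde S(t)$ factor rather than a factor of $t$ from the source. If I keep $g_t$ inside $Z_t$, the cost is $\int_0^t\|\dot g_s\|ds$, and I expect $\|\dot g_s\|\lesssim \eps_m\creg^2\sqrt{\loss_s}$ (the derivative of the empirical-minus-population embedding along characteristics, controlled as in Lemma~\ref{lemma:concentration_new}). This gives $\eps_m\int_0^t\sqrt{\loss_s}\le \beps_m$, which is even better. If $\dot g$ is not directly available, I will instead use the bound $\|g_s-g_\infty\|\le \beps_m\min(1,\int_s^\infty\sqrt{\loss_r}dr)$ together with $\|V_tV_t^\ast\|\le\creg^2$. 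Integrating the source $V_sV_s^\ast(g_s-g_\infty)$ gives $\creg^2\beps_m\int_0^t\min(1,\int_s^\infty\sqrt{\loss_r}dr)ds$, and by Fubini this is at most $\min(t,\int_0^\infty\min(t,r)\sqrt{\loss_r}dr)\le\tilde S(t)$. The constant $g_\infty$ part is handled as in Example~Part~2 through $Z_t = Y_t - g_\infty$, costing $2\|g_\infty\|$. Collecting the terms gives the claimed $2\creg^3\beps_m\tilde S(t)$.
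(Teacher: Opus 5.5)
Your fallback route is the paper's proof. The paper also sets $Y_t=V_t\Delta_t$, subtracts the constant $g_\infty$, and uses that $V_tV_t^\ast$ is PSD so the homogeneous part contracts. It then bounds the source $V_sV_s^\ast(g_s-g_\infty)$ with Lemma~\ref{lemma:concentration_new} and turns $\int_0^t\min(1,\int_s^\infty\sqrt{\loss_u}\,du)\,ds$ into $\tilde S(t)$; this is the Fubini step you write out, which the paper leaves implicit. Two bookkeeping remarks:
\begin{itemize}
\item Your $2\|g_\infty\|_{\mc H}$, coming from $Y_0-g_\infty=-g_\infty$, is more careful than the paper's single $\|g_\infty\|_{\mc H}$.
\item Lemma~\ref{lemma:concentration_new} gives $\|g_s-g_\infty\|_{\mc H}\le 2\creg^2\beps_m\min(\cdot)$, not $\beps_m\min(\cdot)$. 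To reach the constant $2\creg^3$ you should therefore use $\|V_sV_s^\ast\|=\|H_s\|\le\creg$, as the paper does, rather than $\creg^2$.
\end{itemize}

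Your primary route, tracking the moving target $Z_t=Y_t-g_t$, is genuinely different, and on this term it is better. The source then costs only the path length $\int_0^t\|\dot g_s\|_{\mc H}\,ds$, so the dissipation is actually used and the $g$-contribution is bounded uniformly in $t$. The paper instead pays $\|V_sV_s^\ast\|\,\|g_s-g_\infty\|_{\mc H}$ at every time, which is where $\tilde S(t)$ comes from.

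As written, however, your bound on $\dot g_s$ is only asserted. Lemma~\ref{lemma:concentration_new} controls $g_s-g_\infty$, not $\dot g_s$, and fixed-$s$ concentration does not by itself control the time integral. The missing step is easy to supply directly:
\begin{itemize}
\item $\dot g_s$ is the centered empirical average of $a_s(w):=D\Phi(\xi_s(w))[\dot\xi_s(w)]$. By Lemma~\ref{lemma:smoothness}~\ref{SD} and Lemma~\ref{lemma:loss_to_D}, $\|a_s(w)\|_{\mc H}\le\creg^2\sqrt{\loss_s}$.
\item Hence $\mathbb{E}\int_0^\infty\|\dot g_s\|_{\mc H}\,ds\le\creg^2S/\sqrt m$.
\item Replacing a single $w_i$ changes this integral by at most $2\creg^2S/m$.
\item McDiarmid's inequality then gives $\int_0^\infty\|\dot g_s\|_{\mc H}\,ds\lesssim\creg^2S\sqrt{\log m/m}$ with high probability, which is far below $\beps_m$.
\end{itemize}
With that bounded-differences step added, your primary route proves the claim, and the $g$-part of the bound no longer carries $\tilde S(t)$. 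Without it, your proof rests on the fallback, which suffices for the claim as stated.
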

\begin{proof}
Lets track the evolution of $Y_t := V_t\Delta_t$. We have
\begin{align}
    \dot{Y}_t &= V_t\dot{\Delta}_t + \dot{V}_t \Delta_t = - V_tV_t^\ast Y_t + V_tV_t^\ast g_t + V_tD_t \Delta_t + V_t \bm{\eps}_t + \dot{V}_t \Delta_t,
\end{align}
so 
\begin{align}
    \frac{d}{dt} (Y_t - g_{\infty}) = - V_tV_t^\ast(Y_t - g_{\infty}) + V_tV_t^\ast(g_t - g_{\infty}) +  V_tD_t \Delta_t + V_t \bm{\eps}_t + \dot{V}_t \Delta_t.
\end{align}
Thus 
\begin{align}
    &\|V_t\Delta_t\|_{\mc H} = \|Y_t\|_{\mc H} \leq \|Y_t - g_{\infty}\|_{\mc H} + \|g_{\infty}\|_{\mc H} \\
    &\quad\leq \int_{s  = 0}^t \|V_sV_s^\ast\|_{\mc H \to \mc H}\|g_s - g_{\infty}\|_{\mc H} ds + \int_{s  = 0}^t \|V_s\|_{2 \rightarrow \mc H}(\|D_s\|\|\Delta_s\| + \|\bm{\eps}_s\|)ds + \int_{s  = 0}^t \|\dot{V}_s\|_{2 \to \mc H}\|\Delta_s\| ds + \|g_{\infty}\|_{\mc H}.
\end{align}
Now $\|V_t^\ast\|_{\mc H \to 2} = \|V_t\|_{2 \to \mc H} = \sqrt{\|V_t^\ast V_t\|} = \sqrt{\|H_t\|} \leq \sqrt{\creg}$. Further, using Lemma~\ref{lemma:loss_to_D}, we have that $\|D_t\|_{\infty} \leq \creg\sqrt{\loss_t}$  and $\|\dot{V_t}\|_{2 \rightarrow \mc H} \leq \creg^2 \sqrt{\loss_t}$. Finally, Lemma~\ref{lemma:concentration_new} yields $\|g_t - g_{\infty}\|_{\mc H} \leq 2\creg^2 \beps_m \min\left(1, \int_{u = t}^{\infty}\sqrt{\loss_u}du\right)$ and $\|g_{\infty}\|_{\mc H} \leq \creg \beps_m$.

It follows that 
\begin{align}
    \|V_t\Delta_t\|_{\mc H} &\leq \creg \int_{s  = 0}^t 2\creg^2 \beps_m \min\left(1, \int_{u = s}^{\infty}\sqrt{\loss_u}du\right) ds\\
    &\qquad + (\sqrt{\creg} + 1)\int_{s  = 0}^t \creg^2  \sqrt{\loss_s}\|\Delta_s\|ds + \sqrt{\creg}\int_{s = 0}^t \|\bm{\eps}_s\| ds + \creg\beps_m\\
    &\leq 2\creg^3 \beps_m \int_{s  = 0}^t \min\left(1, \int_{u = s}^{\infty}\sqrt{\loss_u}du\right) ds + \creg^3 \int_{s = 0}^{t} \sqrt{\loss_s}\|\Delta_s\|ds + \creg \int_{s = 0}^t \|\bm{\eps}_s\| ds\\
    &\leq 2\creg^3 \beps_m \tilde{S}(t) + \creg^3 \int_{s = 0}^{t} \sqrt{\loss_s}\|\Delta_s\|ds + \creg \int_{s = 0}^t \|\bm{\eps}_s\| ds.
\end{align}
Now 
\begin{align}
    \int_{s = 0}^t \|\bm{\eps}_s\|ds &\leq \int_{s = 0}^t\left(\beps_m \|\Delta_s\|_2 +  4\creg\|\Delta_s\|^2_4 + \beps\right)ds\\
    &\leq \int_{s = 0}^t\left(\beps_m \|\Delta_s\|_2 +  4\creg\|\Delta_s\|\|\Delta_s\|_{\infty} + \beps\right)ds.
\end{align}
Plugging this in yields the claim.
\end{proof}

The following claim builds upon Claim~\ref{claim:func_err_integral} to give bounds on $\|V_t\Delta_t\|_{\mc H}$ and $\|\Delta_t\|$ by induction.
\renewcommand{\tstar}{\min\left((8 \creg \ctwo \cinf^2 S\teps_m)^{-1/3}, (8\creg^3 \ctwo^2 \beps)^{-1/4}\right)}

\begin{claim}\label{claim:delta_t_lin}
For $t \leq t^* := \tstar$, for $m \geq \Theta_{\creg}(1)$, we have
\begin{align}
    \|\Delta_t\| &\leq \ctwo (t+1)\teps_m;\\
    \|V_t\Delta_t\|_{\mc H} &\leq \cfunc \tilde{S}(t)\beps_m + \creg t \beps \\
    \|\Delta_t\|_{\infty} & \leq \cinf \tilde{S}(t) (t+1) \teps_m + \ctwo \creg t^2 \beps \leq 1.
\end{align}
for some constants $\ctwo = 4\exp(\creg S)$, $\cfunc = 3\creg^3 \ctwo$, and $\cinf = 4\exp(\creg S) \creg \cfunc$, and with $\teps_m := \beps_m + \beps$.
\end{claim}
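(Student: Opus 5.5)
I will argue by a continuity (bootstrap) argument on $[0,t^*]$. Let $T$ be the supremum of times $\tau\le t^*$ such that all three bounds hold, with some slack (say, a factor $2$ larger constants) on $[0,\tau]$. At $t=0$ we have $\Delta_0=0$, so $T>0$ by continuity of $\Delta_t$. It then suffices to show that on $[0,T]$ the bounds actually hold with the stated constants, i.e.\ strictly improved. Continuity then forces $T=t^*$.

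\textbf{Step 1: the $\ell_2$ bound.} I take the inner product of \eqref{eq:dynamics_full} with $\Delta_t$. Since $H_t$ is PSD, Lemma~\ref{lemma:loss_to_D} gives
\begin{align}
\tfrac{d}{dt}\|\Delta_t\|\le \creg\sqrt{\loss_t}\,\|\Delta_t\| + \|\bm{\beta}_t\| + \|\bm{\eps}_t\|_2 .
\end{align}
Using \eqref{eq:eps_b_bounds} with the inductive $\|\Delta_s\|_\infty\le 1$, we get $\|\bm{\eps}_s\|_2\le \beps_m\|\Delta_s\|+4\creg\|\Delta_s\|\|\Delta_s\|_\infty+\beps$. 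Together with $\|\bm\beta_s\|\le\beps_m$, the source is at most $\teps_m$ plus perturbations $(\beps_m+4\creg\|\Delta_s\|_\infty)\|\Delta_s\|$. Grönwall with factor $\exp(\creg S)$ then gives $\|\Delta_t\|\le \exp(\creg S)\,(t\teps_m + \text{pert})$. On $t\le t^*$, the inductive bounds make the perturbation integral at most $\ctwo t\teps_m\cdot \int_0^t(\beps_m+4\creg\|\Delta_s\|_\infty)ds$. This is $\ll 1$ by the choice of $t^*$: the term $\cinf\tilde S(t)t^2\teps_m\le \cinf S t^3\teps_m$ is small since $t\le(8\creg\ctwo\cinf^2S\teps_m)^{-1/3}$, and the term $\ctwo\creg t^3\beps$ is small since $t\le(8\creg^3\ctwo^2\beps)^{-1/4}$. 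Hence $\|\Delta_t\|\le 2\exp(\creg S)(t+1)\teps_m<\ctwo(t+1)\teps_m$.

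\textbf{Step 2: the $\|V_t\Delta_t\|$ bound.} I plug the inductive bounds into Claim~\ref{claim:func_err_integral}. The first term is $2\creg^3\beps_m\tilde S(t)$. For the loss-weighted term, I bound $\int_0^t\sqrt{\loss_s}\,\ctwo(s+1)\teps_m\,ds\le \ctwo\teps_m\tilde S(t)$, using $\int_0^\infty\min(t,s)\sqrt{\loss_s}ds$ and $\int\sqrt{\loss_s}\le S$. This is exactly where $\tilde S$ (rather than $t$) appears. I need to be careful that the $\beps$ part of $\teps_m$ is absorbed into $\creg t\beps$, and the $\beps_m$ part into $\cfunc\tilde S\beps_m$. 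The remaining integral contributes $\creg t\beps$ plus terms $\int(\beps_m\|\Delta_s\|+4\creg\|\Delta_s\|\|\Delta_s\|_\infty)$. These are again second order, of size $\lesssim\ctwo\cinf S t^3\teps_m^2$, and thus at most $\beps_m$ by the definition of $t^*$. Collecting terms gives a bound below $3\creg^3\ctwo\tilde S\beps_m+\creg t\beps$. (The stray $\sqrt d$ in the claim I expect to be absorbed into $\creg$ through the assumptions, and I would check this.)

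\textbf{Step 3: the $\ell_\infty$ bound, which I expect to be the main obstacle.} Here the interaction term does not dissipate coordinatewise, so I write it per neuron:
\begin{align}
\tfrac{d}{dt}\Delta_t(i)=D_t(i)\Delta_t(i)-(V_t^*V_t\Delta_t)(i)+\bm\beta_t(i)+\bm\eps_t(i).
\end{align}
The key observation is $\|(V_t^*Y)(i)\|\le\|V_t(w_i)\|\,\|Y\|_{\mc H}\le\sqrt{\creg}\,\|V_t\Delta_t\|_{\mc H}$ by \ref{SD}. So the interaction term is controlled by Step 2 and not by $\|\Delta\|$. Grönwall with the local factor $\exp(\creg S)$ then gives
\begin{align}
\|\Delta_t\|_\infty\le e^{\creg S}\int_0^t\bigl(\creg\,\cfunc\tilde S(s)\beps_m+\creg^2 s\beps+\beps_m+\beps+\beps_m\|\Delta_s\|_\infty+4\creg\|\Delta_s\|_\infty^2\bigr)ds .
\end{align}
Since $\tilde S$ is nondecreasing, the main terms give $e^{\creg S}\creg\cfunc\tilde S(t)(t+1)\teps_m+\creg^2 t^2\beps\, e^{\creg S}$, matching $\cinf$ and $\ctwo\creg t^2\beps$ up to the factor slack. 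The quadratic term requires $t\|\Delta\|_\infty\lesssim 1$, which again follows from $t\le t^*$. Finally, $\|\Delta_t\|_\infty\le1$ holds because $\cinf S t^2\teps_m\le t^{-1}\cdot\tfrac18\cinf^{-1}\ll1$ and $\ctwo\creg t^2\beps\le(\beps)^{1/2}\ll1$ for $m\ge\Theta_{\creg}(1)$. The delicate bookkeeping is verifying that every second-order contribution is indeed dominated at $t=t^*$ with the chosen exponents $1/3$ and $1/4$; this closes the bootstrap.
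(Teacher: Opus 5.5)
Your proposal is correct and follows essentially the same route as the paper: a continuity (real-induction) argument on the three quantities, an $\ell_2$ Grönwall step using $H_t\succeq 0$ and $\|D_t\|\le\creg\sqrt{\loss_t}$, Claim~\ref{claim:func_err_integral} for $\|V_t\Delta_t\|_{\mc H}$, and a per-neuron Duhamel step where the interaction term is controlled through $\|V_s^\ast\|_{\mc H\to\infty}$ and the $\|V_s\Delta_s\|_{\mc H}$ bound; the only cosmetic difference is that you absorb the $\Delta$-dependent parts of $\bm{\eps}_s$ inside each Grönwall step, whereas the paper first isolates them in Claim~\ref{claim:b_eps_bd}. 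One bookkeeping caveat: the $\beps$-part of $\creg^3\ctwo\teps_m\tilde S(t)$ cannot be absorbed into $\creg t\beps$, since that budget is already used up by the last integral in Claim~\ref{claim:func_err_integral}, so the second bound really holds with $\teps_m$ in place of $\beps_m$ --- but this is exactly the form the paper itself uses in its $\ell_\infty$ step and in \eqref{eq:lesststar}, so nothing downstream changes.
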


\begin{proof}[Proof of Claim~\ref{claim:delta_t_lin}]
We prove this by real induction. Note that it holds trivially for $t = 0$ because $\Delta_0 = 0$. Suppose that the claim holds up to time $t$. We must show that for some $t' > t$, the claim continues to hold up to time $t'$. We will begin with the following observation.
\begin{claim}\label{claim:b_eps_bd}
If the inductive hypothesis holds up to time $t$, then for all $s \leq t$, we have
\begin{align}
    \|\bm{\beta}_s\|_{\infty} + \|\bm{\eps}_s\|_{\infty} &\leq 3\tilde{S}(s)\teps_m\\
    \|\bm{\beta}_s\|_2 + \|\bm{\eps}_s\|_2 &\leq 3\teps_m.
\end{align}
\end{claim}
\begin{proof}
From \eqref{eq:eps_b_bounds} and the definition of $\beps_m$, we have
\begin{align}
    \|\bm{\eps}_s\|_{\infty} + \|\bm{\beta}_s\|_{\infty} \leq \beps_m \|\Delta_s\|_{\infty} + 4\creg\|\Delta_s\|^2_{\infty} + \beps_m + \beps \leq \teps_m + 4\creg\|\Delta_s\|^2_{\infty}.
\end{align}
By the induction hypothesis, and the fact that $t \leq \tstar$, we have that for all $s \leq t$,
\begin{align}
    \|\bm{\eps}_s\|_{\infty} + \|\bm{\beta}_s\|_{\infty} &\leq \teps_m + 8\creg \cinf^2 (\tilde{S}(s))^2(s + 1)^2 \teps_m^2 + 8\creg^3 \ctwo^2 s^4 \beps^2\\
    &\leq \teps_m + 2\teps_m \tilde{S}(s) + \beps \leq 3\tilde{S}(s)\teps_m.
\end{align}
Indeed, $(s + 1)^2\tilde{S}(s) \leq (s + 1)^3 S \leq (8 \creg \ctwo \cinf^2 \teps_m)^{-1}$. Similarly, we have
\begin{align}
    \|\bm{\eps}_s\| + \|\bm{\beta}_s\| \leq \beps_m \|\Delta_s\| +  4\creg\|\Delta_s\|^2_4 + \beps_m + \beps \leq 2\teps_m + 4\creg\|\Delta_s\|\|\Delta_s\|_{\infty}.
\end{align}
Again by the induction hypothesis, and the fact that $t \leq t^*$ we have that for all $s \leq t$,
\begin{align}
    \|\bm{\eps}_s\| + \|\bm{\beta}_s\| &\leq 2\teps_m + 4\creg \cinf \ctwo \tilde{S}(s)(s + 1)^2 \teps_m^2 + \beps \\
    &\leq 2 \beps_m + \teps_m + \beps \leq 3\teps_m.
\end{align}
\end{proof}

Now lets prove the inductive step for $\|\Delta_{t'}\|$. Employing \eqref{eq:dynamics_full} with the bound on both $\|\bm{\beta}_s\| + \|\bm{\eps}_s\| \leq 3\beps_m + \beps$ for $t \leq s$ from Claim~\ref{claim:b_eps_bd}, along with Lemma~\ref{lemma:loss_to_D}, yields that for all $s \leq t$,
\begin{align}
    \frac{d}{ds}\|\Delta_s\| \leq \creg\sqrt{\loss_s}\|\Delta_s\| + 3\teps_m.
\end{align}
Let $Q(t, s) := \exp\left(\int_{r = s}^t \sqrt{\loss_r}dr\right)$. Then by Gronwall's inequality and Duhamel, we have that 
\begin{align}
\|\Delta_{t'}\| &\leq Q(t', 0)^{\creg}\|\Delta_0\| + 3\teps_m\int_{s = 0}^{t} Q(t', s)^{\creg}ds + \int_{s = t}^{t'} Q(t', s)^{\creg}(\|\bm{\eps}_s\| + \|\bm{\beta}_s\|) ds\\
&\leq 3\exp(\creg S)t\teps_m + \exp(\creg S)(t' - t)\sup_{s \in [t, t']}(\|\bm{\eps}_s\| + \|\bm{\beta}_s\|)\\
&\leq 4\exp(\creg S)(t' + 1)\teps_m,
\end{align}
for $t' - t$ small enough. This proves the inductive step for $\|\Delta_t\|$ since $\ctwo =  4\exp(\creg S)$.

Now we prove the inductive step for $\|\Delta_{t'}\|_{\infty}$. We can write
\begin{align}
    \Delta_{t'} = - \int_{s = 0}^{t'} J_{t', s}(H_s \Delta_s - \bm{\eps}_s - \bm{\beta}_s)ds,
\end{align}
so 
\begin{align}\label{eq:4norm}
    \|\Delta_{t'}\|_{\infty} &\leq \int_{s = 0}^{t'} \jmax(t') (\|V_s^\ast V_s\Delta_s\|_{\infty} + \|\bm{\eps}_s\|_{\infty} + \|\bm{\beta}_s\|_{\infty})ds \\
    &\leq \exp(\creg S) \int_{s = 0}^{t'} (\|V_s^\ast\|_{\mc H \to \infty} \|V_s\Delta_s\|_{\mc H} + \|\bm{\eps}_s\|_{\infty} + \|\bm{\beta}_s\|_{\infty})ds\\
    &\leq \exp(\creg S) \int_{s = 0}^{t'} (\creg \|V_s\Delta_s\|_{\mc H} + \|\bm{\eps}_s\|_{\infty} + \|\bm{\beta}_s\|_{\infty})ds,
\end{align}
where here we have used the fact that by Lemma~\ref{lemma:loss_to_D}, $\jmax(t') \leq \exp(\int_{s = 0}^{t'} \|D_s\|ds) \leq \exp(\creg \int_{s = 0}^{t'} \sqrt{\loss_s}ds) \leq \exp(\creg S)$, and by Lemma~\ref{lemma:smoothness}~\ref{SD},
\begin{align}
    \|V_s^\ast\|_{\mc H \to\infty}
    &\leq
    \sup_{\xi\in\nspace}
    \sup_{u\in\sd}
    \sup_{\|g\|_{\mc H}=1}
    \left\langle
        g,\,
        u^\top P_\xi \nabla K(\xi,\cdot)
    \right\rangle_{\mc H} \\
    &\leq
    \sup_{\xi\in\nspace}
    \sup_{u\in\sd}
    \left\|
        u^\top P_\xi \nabla K(\xi,\cdot)
    \right\|_{\mc H} \\
    &\leq \creg.
\end{align}

Let $A_{t, t'} :=\exp(\creg S) \int_{s = t}^{t'} (\creg \|V_s\Delta_s\|_{\mc H} + \|\bm{\eps}_s\|_{\infty} + \|\bm{\beta}_s\|_{\infty})ds$, which goes to zero as $t' - t \rightarrow 0$. Plugging Claim~\ref{claim:b_eps_bd} into \eqref{eq:4norm} along with the inductive hypothesis that $\|V_s\Delta_s\|_{\mc H} \leq \cfunc \tilde{S}(s) \teps_m + \creg s \beps$ for $s \leq t$ yields 
\begin{align}
    \|\Delta_{t'}\|_{\infty} &\leq  \exp(\creg S) \int_{s = 0}^{t}(\creg \cfunc \tilde{S}(s)\teps_m + 3\tilde{S}(s)\teps_m + \creg^2 s \beps)ds + A_{t, t'}\\
    &\leq 3\exp(\creg S)\left(t \creg \cfunc \tilde{S}(t)\teps_m +\creg^2 t^2 \beps\right) + A_{t, t'}\\
    &\leq 4\exp(\creg S) \left(\creg \cfunc \tilde{S}(t)(t' + 1)\teps_m + \creg^2{t'}^2\beps\right)
\end{align}
for $t' - t$ small enough. This proves the inductive step for $\|\Delta_t\|_{\infty}$ since $\cinf =  4\exp(\creg S) \creg \cfunc$.

Finally for $\|V_{t'}\Delta_{t'}\|_{\mc H}$, we have by Claim~\ref{claim:func_err_integral} that 
\begin{align}
    \|V_{t'}\Delta_{t'}\|_{\mc H} &\leq 2\creg^3 \beps_m \tilde{S}(t') + \creg^3  \int_{s = 0}^{t'} \sqrt{\loss_s}\|\Delta_s\|ds + \creg \int_{s = 0}^{t'} \left(\beps_m \|\Delta_s\|_2 +  4\creg\|\Delta_s\|\|\Delta_s\|_{\infty} + \beps\right) ds.
\end{align}    
Now by the inductive hypothesis,
\begin{align}
    \int_{s = 0}^{t} \sqrt{\loss_s}\|\Delta_s\|ds &\leq \int_{s = 0}^t \sqrt{\loss_s}(s + 1)\ctwo \teps_m ds \leq \tilde{S}(t)\ctwo\teps_m.
\end{align}
Next,
\begin{align}
    \int_{s = 0}^{t} \beps_m \|\Delta_s\|_2 ds &\leq \beps_m \int_{s = 0}^{t} (s + 1)\ctwo \teps_m \leq \beps_m (t + 1)^2 \ctwo \teps_m \leq \beps_m,
\end{align}
by the upper bound on $t$. Finally, 
\begin{align}
    \int_{s = 0}^{t} &4\creg\|\Delta_s\|\|\Delta_s\|_{\infty} ds\\
    &\leq \int_{s = 0}^{t} 4\creg (s + 1)^2 \tilde{S}(s) \ctwo \cinf \teps_m^2 ds + \int_{s = 0}^{t} 4\creg (s + 1)^3 \ctwo^2 \creg \teps_m \beps ds\\
    &\leq 4\creg(t + 1)^3 \ctwo \cinf \tilde{S}(t) \teps_m^2 +  (t + 1)^4\creg^2 \ctwo^2 \teps_m \beps \\
    &\leq \tilde{S}(t) \teps_m 
\end{align}
since $t \leq \tstar$.

Thus for $t - t'$ small enough, since $V_t\Delta_t$ is continuous in $t$, we have 
\begin{align}
    \|V_{t'}\Delta_{t'}\|_{\mc H} \leq \left(2\creg^3 \teps_m \tilde{S}(t) + \creg^3 \tilde{S}(t')\ctwo \teps_m + \tilde{S}(t') \teps_m + \creg t' \beps\right) \leq 3\creg^3 \ctwo \tilde{S}(t') \beps_m + \creg t' \beps.
\end{align}
This proves the inductive step for $\|V_t\Delta_t\|_{\mc H}$ since $\cfunc = 3\creg^3 \ctwo$. 
This completes the proof.
\end{proof}
Returning to \eqref{eq:func_err} and again leveraging the bound on $\|\Delta_t\|_4^4 \leq \|\Delta_t\|^2 \|\Delta_t\|^2_{\infty}$ from Claim~\ref{claim:delta_t_lin}, we have that for all $t \leq \tstar$,

\begin{align}\label{eq:lesststar}
    \|m_{\rtm} - m_{\rtmf}\|_{\mc H}^2 &\leq 2\cfunc^2 \tilde{S}(t)^2\teps_m^2 + 2\creg^2 t^2 \beps^2 + O\left(\frac{\creg^6 (1 + S)^2\log(m)}{m}\right) + O\left(\creg^2 \|\Delta_t\|_4^4\right)\\
    &\leq O\left(\creg^6\exp(2\creg S)(\beps_m^2\tilde{S}(t)^2 + \beps^{1/2})\right).
\end{align}


Now because the above bounds hold only for $t \lessapprox \min(\beps_m^{-1/3}, \beps^{-1/4})$, we will leverage the fact that this PoC error can also be bounded from the loss of $\rtm$, which is nearly non-increasing, as per the claim below.

\begin{claim}
For any $t \geq s$, we have 
\begin{align}
    \loss(\rtm) \leq \loss(\rtm) + \creg^2(1 + S)^2 \eps_n.
\end{align}
\end{claim}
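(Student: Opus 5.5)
The statement as printed compares $\loss(\rtm)$ with itself, which is trivially true. I read it as the intended near-monotonicity
\begin{align}
\loss(\rtm) \leq \loss(\rsm) + \creg^2(1+S)^2\eps_n \qquad \text{for all } t \geq s,
\end{align}
and that is what I plan to prove. The plan is to use that the finite-width dynamics \eqref{eq:finite_dynamics} are (time-discretized) gradient descent on the \emph{empirical} excess loss $\loss_{\hat{\md}}$. So $\loss_{\hat{\md}}(\rtm)$ is exactly non-increasing in $t$, and the population loss is recovered from it up to a uniform-convergence error controlled by Assumption~\ref{uc:n}.

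\textbf{Step 1: monotonicity of the empirical loss.}
\begin{itemize}
\item For $\eta = 0$, differentiate along the particle flow: $\frac{d}{dt}\loss_{\hat{\md}}(\rtm) = -2\,\mathbb{E}_i\|P_{\hxiti}\nu_{\hat{\md}}(\hxiti,\rtm)\|^2 \leq 0$.
\item For $\eta > 0$, the dynamics are piecewise linear with frozen velocity on each interval $[k\eta,(k+1)\eta)$. I would use a descent lemma: the Hessian of $(w_1,\dots,w_m)\mapsto \loss_{\hat{\md}}$ has normalized operator norm $O(\creg(1+\kappa))$. This follows from the empirical Hessian bounds in Assumption~\ref{uc:n} together with \ref{S5}, and the projection onto $\sd$ contributes only bounded curvature terms.
\item With $\eta \leq 0.1/\creg$, each step then decreases $\loss_{\hat{\md}}$, and within a step the loss is a function of a linear path, so it never rises. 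Hence $\loss_{\hat{\md}}(\rtm) \leq \loss_{\hat{\md}}(\rsm)$ for all $t \geq s$.
\end{itemize}

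\textbf{Step 2: transfer from empirical to population loss.} Expand
\begin{align}
\loss_{\hat{\md}}(\rho) = \mathbb{E}_{w,w'\sim\rho}\,\hat{\mathbb{E}}\,(y-\sigma(w^\top x))(y-\sigma(w'^\top x)) - \hat{\mathbb{E}}(y-f^*(x))^2,
\end{align}
and similarly for $\loss$.
\begin{itemize}
\item Each pair term equals $(1+\|w\|)(1+\|w'\|)$ times a function in the class $\mathcal{F}'$ of Assumption~\ref{uc:n}.
\item The second and third bounds of Assumption~\ref{uc:n} then give
\begin{align}
|\loss(\rho)-\loss_{\hat{\md}}(\rho)| \leq \bigl((1+\mathbb{E}_{\rho}\|w\|)^2+1\bigr)\eps_n .
\end{align}
\item Chaining, $\loss(\rtm) \leq \loss_{\hat{\md}}(\rtm)+E_t \leq \loss_{\hat{\md}}(\rsm)+E_t \leq \loss(\rsm)+E_s+E_t$, where $E_t$ is the deviation bound above at time $t$.
\item If the neurons satisfy $\|\hxiti\| \lesssim \creg(1+S)$, then $E_s + E_t \leq \creg^2(1+S)^2\eps_n$ after adjusting constants, which is the claim.
\end{itemize}

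\textbf{Step 3 (main obstacle): norm control of the finite-width neurons.}
\begin{itemize}
\item On $\sd$ this is trivial, since every neuron has norm $1$.
\item On $\R^d$ with $t \leq t^*$, it follows from the proof's bound $\|\xi_t(w)\| \leq \creg(1+S)$ together with $\|\Delta_t\|_\infty \leq 1$ from Claim~\ref{claim:delta_t_lin}.
\item For $t > t^*$ I have no coupling control, and this is where I expect difficulty. My first attempt is to bound only the averaged quantity $\mathbb{E}_{\rho}\|w\|$ that actually enters $E_t$. The total path length of GD satisfies $\int \|\dot w\| \lesssim \int \sqrt{-\tfrac{d}{dt}\loss_{\hat{\md}}}$, and the decrease of $\loss_{\hat{\md}}$ is bounded after $t^*$. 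Alternatively, one can bootstrap the excursion argument of Lemma~\ref{fact:kmd} applied to the empirical dynamics.
\item If neither works, I would state the claim for times where the neuron norms are a priori bounded by $\creg(1+S)$, which is all that the subsequent argument invokes.
\end{itemize}
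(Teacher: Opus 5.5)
Your Steps 1--2 are the argument the paper gives: $\loss_{\hat{\md}}(\rtm)$ is non-increasing under the (discretized) empirical flow, and you transfer to $\loss$ through Assumption~\ref{uc:n}, i.e.\ the bound $|\loss_{\hat{\md}}(\rho)-\loss(\rho)|\le \eps_n(2+\mathbb{E}_{w\sim\rho}\|w\|)^2$ of Lemma~\ref{lemma:nconcentration}. Your reading of the statement, with $\rsm$ on the right, is the intended one. When $\nspace=\sd$ or $\md=\hat{\md}$, your argument is complete. The gap is Step~3, and it is genuine.

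\textbf{Your fallback does not suffice.} The main proof uses this claim with $s\le t^*$ and \emph{arbitrary} $t\ge t^*$; that is how $\loss(\rtm)\le\mathcal{L}_R$ becomes uniform in time. A version restricted to times with a priori bounded neuron norms covers only $t\le t^*$ (via Claim~\ref{claim:delta_t_lin}), which is exactly where the claim is not needed.

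\textbf{Neither proposed fix bounds $\mathbb{E}_{w\sim\rtm}\|w\|$ uniformly in time for $\nspace=\R^d$.} For the path-length idea, dissipation gives $\mathbb{E}_i\|\dot{\hat{\xi}}_t(w_i)\|^2=-\tfrac12\tfrac{d}{dt}\loss_{\hat{\md}}(\rtm)$. Hence
\begin{align}
\int_{t^*}^{T}\mathbb{E}_i\|\dot{\hat{\xi}}_t(w_i)\|\,dt\le\sqrt{\tfrac{T-t^*}{2}\bigl(\loss_{\hat{\md}}(\hat{\rho}^m_{t^*})-\loss_{\hat{\md}}(\hat{\rho}^m_{T})\bigr)},
\end{align}
which grows like $\sqrt{T}$: a bounded total decrease of the loss does not bound path length. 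The excursion argument of Lemma~\ref{fact:kmd} turns decrease into finite length only through a decay rate ($\int\sqrt{\loss_t}\,dt<\infty$). No such rate is available for the finite-width empirical flow past $t^*$, because the mean-field rate reaches $\rtm$ only through the coupling, and the coupling is controlled only up to $t^*$. The crude bound $\|\nu_{\hat{\md}}\|=O(1)$ gives only linear growth of the norms, so your bound on $E_t$ grows like $t^2\eps_n$.

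The same norm dependence also enters your Step~1 when $\eta>0$. You state a Hessian bound $O(\creg(1+\kappa))$, where $\kappa$ bounds the neuron norms, but then use $\eta\le0.1/\creg$. The descent lemma as you set it up needs $\eta\lesssim1/(\creg(1+\kappa))$, unless you bound the Hessian in a norm-free way.

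The paper's own proof does not supply this ingredient either. It bounds the deviation by $\eps_n(1+\sup_t\kappa_t)^2\le\creg^2(1+S)^2\eps_n$, but $\kappa_t$ is the sub-Gaussian norm of $\|w\|$ under the mean-field measure $\rtmf$, controlled via Lemma~\ref{lemma:loss_to_D}, not under $\rtm$; the displayed deviation bound is even written with $\rtmf$. So your diagnosis of the crux is correct. Closing the $\R^d$, finite-$n$ case still needs a new ingredient: a uniform-in-time bound on $\mathbb{E}_{w\sim\rtm}\|w\|$, some other complexity control of $\rtm$ for $t>t^*$, or an added assumption.
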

\begin{proof}
In the case that $\eta = 0$ and $n = \infty$ (or $\md = \hat{\md}$), this is immediate, since we are running gradient flow on the population loss, so it can never increase. If $\md = \hat{\md}$ but we have a non-zero step size $\eta$, it suffices to show that $\eta$ is smaller than the inverse Lipschitzness of the gradient. Indeed by Lemma~\ref{lemma:smoothness}, the Lipschitzness of $\nu$ with respect to any parameter is at most $\sup_{w, w' \in \nspace}\left(\|\nabla^2 K(w, w')\| + \|\nabla_1 \nabla_2 K(w, w')\|\right) \leq 2\creg$, and we have assumed $\eta < 0.1/\creg$.

Finally, for the case that  $\loss_\md \neq \loss_{\hat{\md}}$, we have that for $\eta = 0$, $\loss_{\hat{\md}}$ is non-increasing, and thus
\begin{align}
    \loss_{\hat{\md}}(\rtmf) \leq \loss_{\hat{\md}}(\rsmf),
\end{align}
and by Assumption~\ref{uc:n}, we have $\sup_{t < \infty}|\loss_{\md}(\rtmf) - \loss_{\hat{\md}}(\rtmf)| \leq \eps_n (1 + \sup_t \kappa_t)^2 \leq \creg^2(1 + S)^2 \eps_n$. It remains to show that even if $0 < \eta \leq 0.1/\creg$, $\loss_{\hat{\md}}$ is non-increasing. This follows from the Lipschitzness bounds on $K_{\hat{\md}}$ and its derivatives in Assumption~\ref{uc:n}.
\end{proof}

From \eqref{eq:lesststar}, we have that for $t \leq t^*$,
\begin{align}
\|m_{\rtm} - m_{\rho^*}\|_{\mc H}^2 &\leq 2\left(\loss_t + \|m_{\rtm} - m_{\rtmf}\|_{\mc H}^2\right)\\
&\leq 2\loss_t +  O_{\creg}\left(\exp(2\creg S)\beps_m^2\tilde{S}(t)^2 + \beps^{1/2}\right),
\end{align}

Employing the claim above, for all $t \geq t^*$, we have $\loss(\rtm) \leq \mathcal{L}_R(m, n, \eta)$, where
\begin{align}\label{eq:inf_L}
    \mathcal{L}_R(m, n, \eta) := \inf_{t \leq \tstar} 2\loss_t +  O\left(\creg^6\exp(2\creg S)(\beps_m^2\tilde{S}(t)^2 + \beps^{1/2})\right)
\end{align}


Thus for any $t$ greater than the argmin of \eqref{eq:inf_L} we have
\begin{align}
   \|m_{\rtm} - m_{\rtmf}\|_{\mc H}^2 &\leq 2\left( \mathbb{E}_x (f_{\rtmf}(x) - f^*(x))^2 + \loss(\rtm)\right)\\
    &\leq 2\loss_t + 2\mathcal{L}_R(m, n, \eta) \leq 4\mathcal{L}_R(m, n, \eta),
\end{align}
while for $t$ less than the argmin, we have
\begin{align}
\|m_{\rtm} - m_{\rtmf}\|_{\mc H}^2 \leq O\left(\creg^6\exp(2\creg S)(\beps_m^2\tilde{S}(t)^2 + \beps^{1/2})\right) \leq \mathcal{L}_R(m, n, \eta).
\end{align}
 The result now follows.

\paragraph{Simplifications for Theorem~\ref{thm:informal}}
To attain the simplifications of the result stated in Theorem~\ref{thm:informal}, in the general case where $\eta = 0, n = \infty$, since $\loss_t$ is non-increasing, we have
\[
S
\ge \int_0^t \sqrt{\loss_s}\,ds
\ge \int_0^t \sqrt{\loss_t}\,ds
= t\sqrt{\loss_t}.
\]
Hence $\sqrt{\loss_t} \le \frac{S}{t}$,
and so $\loss_t \le \frac{S^2}{t^2}$.
Thus 
\begin{align}
    \loss_{t^*} \leq \on{poly}(\creg)S^2O\left( \exp(2\creg S)\teps_m^{2/3} + \exp(\creg S)\beps^{1/2} \right),
\end{align}
Since the $O\left(\creg^6\exp(2\creg S)(\beps_m^2\tilde{S}(t)^2 + \beps^{1/2})\right)$ term is smaller, we have
\begin{align}
    \mathcal{L}_R(m, n, \eta) \leq \on{poly}(\creg)S^2O\left( \exp(2\creg S)\teps_m^{2/3} + \exp(\creg S)\beps^{1/2} \right).
\end{align}
Now in the case that ${S'} := \int_{t = 0}^{\infty}t^2 \sqrt{\loss_t}dt < \infty$, we have
\[
S'
\ge \int_0^t s^2\sqrt{\loss_s}\,ds
\ge \int_0^t s^2\sqrt{\loss_t}\,ds
= \sqrt{\loss_t}\int_0^t s^2\,ds
= \frac{t^3}{3}\sqrt{\loss_t}.
\]
and thus $\loss_t \le \frac{9{S'}^2}{t^6}$. Thus here we have
\begin{align}
    \loss_{t^*} \leq \on{poly}(\creg){S'}^2 O\left( \exp(6\creg S)\teps_m^{2} + \exp(3\creg S)\beps^{3/2} \right),
\end{align}
Combining with the $O\left(\creg^6\exp(2\creg S)(\beps_m^2\tilde{S}(t)^2 + \beps^{1/2})\right)$ term, this yields
\begin{align}
    \mathcal{L}_R(m, n, \eta) \leq \on{poly}(\creg){S'}^4 O\left( \exp(6\creg S)\teps_m^{2} + \exp(3\creg S)\beps^{1/2} \right).
\end{align}

\end{proof}

\begin{proof}[Proof of Corollary~\ref{corr:poly_rates}]
\paragraph{Polynomial convergence rate.} First we consider the case that $\loss_t \leq \max(1, (t + 1 - B))^{-c}$, for $c > 2$.
 
Observe that we have
 \begin{align}
     S \leq \int_{t = 1}^{\infty} \max(1, (t - B))^{-c/2}dt \leq B + \frac{1}{c/2 - 1}
 \end{align}
 
Additionally, we have $\tilde{S}(t) \leq 1 + \min\left(t, \int_{s = 0}^{\infty}\min(t, s)\sqrt{\loss_s}ds\right)$, where for $t \geq 2B$,
 \begin{align}
      \min\left(t, \int_{s = 0}^{\infty}\min(t, s)\sqrt{\loss_s}ds\right)
     &\leq   \min\left(t,\int_{s = 1}^{\infty}\min(t, s) \max(1, (s - B))^{-c/2}ds\right)\\
     &\leq \int_{s = 1}^t s \max(1, (s - B))^{-c/2}ds + t\min\left(1, \int_{s = t}^{\infty} \max(1, (s - B))^{-c/2}ds\right)\\
     &\leq \frac{B^2}{2} + 2\log(t)t^{2-c/2}.
 \end{align}
 Thus so long as $t^* \geq 2B$ (which occurs whenever $n, m \geq \on{poly}(d)$), we have
 \begin{align}
     \mathcal{L}_R(m, n, \eta) &\leq 2\loss_{t^*} + O\left(\creg^6\exp(2\creg S)(\beps_m^2\tilde{S}(t^*)^2 + \beps^{1/2})\right)\\
     &\leq O\left((t^*)^{-c}\right) + O\left(\creg^6\exp(2\creg S)(\beps_m^2\tilde{S}(t^*)^2 + \beps^{1/2})\right)\\
     &\leq O\left(\creg^6\exp(6\creg(B + (c/2-1)^{-1}))\left(\beps_m^{\min(c, 6)/3} + \beps^{1/2}\right)\right)
 \end{align}
This yields the desired results for polynomial decay rates.

\paragraph{The $c = 2$ case where $S = \infty$}
For the case when $c = 2$, we can redefine $$t^* = \tilde{O}(\on{poly}(\creg \log(m))\exp(-\creg B)\teps_m^{-1/(6\creg)}),$$ and $S := \int_{t = 0}^{t^*} \sqrt{\loss_t} dt \leq B + \log(t^*)$. This changes nothing in the proof of the main result since $t^* \leq \tstar$ and $\tilde{S}(s) \leq (s + 1)S$ still holds. Note, because we no longer can assume that $\xi_t(w)$ and $\rtmf$ have limits $\xii(w)$ and $\rimf$, in all auxiliary lemmas which assumed this limit, we can replace $\infty$ with $t^*$, and nothing changes, since we only need these lemmas to hold for $t \leq t^*$.

Now to attain the final loss PoC bound in this case, we use the crude bound $\tilde{S}(t) \leq t$, yielding
 \begin{align}
     \mathcal{L}_R(m, n, \eta) &\leq 2\loss_{t^*} + O\left(\creg^6 \exp(2\creg S)(\teps_m^2 {t^*}^2 + \beps^{1/2})\right)\\
     &\leq \on{poly}(\creg)O\left(\exp(2\creg B)\teps_m^{1/(3\creg)} \right)
 \end{align}
\end{proof}

\subsection{Proof of Preliminary Lemmas}

\begin{proof}[Proof of Lemma~\ref{lemma:smoothness}]
First note that the operator norm of the first and second derivatives of $P_w$ is at most $2$. Thus for any vector-valued function $\xi(w)$, by chain rule, we have
\begin{align}
    \left\|\nabla_w (I - ww^\top )\xi(w)\right\| &\leq \left\|\nabla_w \xi(w)\right\| + 2\left\|\xi(w)\right\|~, \\
    \left\|\nabla^2_{w} (I - ww^\top )\xi(w)\right\| &\leq 3\left\|\nabla^2_{w} \xi(w)\right\| + 8\left\|\nabla_w \xi(w)\right\|.
\end{align}
Thus it is easy to see from Assumption~\ref{assm:smoothness} that \ref{S1} and \ref{S3} hold. For \ref{S5}, since $\nu(w, \rho) = P_w \nabla_w F(w) - \mathbb{E}_{w' \sim \rho}P_w\nabla_w K(w, w')$, we have (for example) $\|\nabla_w^2 P_w \nabla_w F(w)\| \leq 3\|\nabla_w^3 F(w)\| + 8\|\nabla_w^2 F(w)\| \leq \creg$, and similarly $\|\nabla^2_w P_w\nabla_w K(w, w')\| \leq \creg(1 + \|w'\|)$. Clearly $\mathbb{E}_{w' \sim \rho}\|w\| \leq \kappa_{\rho}$, so this yields the bound on $\left\|\nabla^2_{w} \nu(w, \rho)\right\|_{op}$ in \ref{S5}.

In the kernel mean discrepancy setting, we have $$\nu(w, \rho) = P_w \nabla_w \mathbb{E}_{w' \sim \rho^*}P_w\nabla_w K(w, w') - \mathbb{E}_{w' \sim \rho}P_w\nabla_w K(w, w'),$$
and thus the assumption that $\|w\|$ for $w \sim \rho^*$ is subgaussian suffices.

For \ref{S0} by the reproducing kernel property and Assumption~\ref{assm:smoothness},
\begin{align}
    \left\|K(w, \cdot)\right\|_{\mc H}^2
    &= K(w,w) \leq \creg (1 + \|w\|)^2.
\end{align}

Now for \ref{SD}, by the reproducing kernel property and Assumption~\ref{assm:smoothness},
\begin{align}
    \sup_{w \in \nspace,\; u \in \sd}
    \left\|u^{\top}\nabla K(w, \cdot)\right\|_{\mc H}^2
    &=
    \sup_{w \in \nspace,\; u \in \sd}
    u^{\top}
    \nabla_w\nabla_{w'}K(w,w')
    \big|_{w'=w}
    u \leq
    \sup_{w,w' \in \nspace}
    \left\|
        \nabla_w\nabla_{w'}K(w,w')
    \right\| \leq \creg,
\end{align}

Similarly, for \ref{SDD}, with \(\partial_{i,u}\) denoting the directional derivative in the $i$th argument in direction $u$, we have
\begin{align}
    \left\|D^2K(w, \cdot)[u,v]\right\|_{\mc H}^2
    &=
    \left\langle
        \partial_{1,u}\partial_{1,v}K(w,\cdot),
        \partial_{1,u}\partial_{1,v}K(w,\cdot)
    \right\rangle_{\mc H} \\
    &=
    \partial_{1,u}\partial_{1,v}
    \partial_{2,u}\partial_{2,v}
    K(w,w) \\
    &\leq
    \creg^2 \|u\|^2\|v\|^2.
\end{align}

\end{proof}

\begin{proof}[Proof of Lemma~\ref{lemma:loss_to_D}]
Let $h_{\rho}
    :=
    m_{\rho}-m_{\rho^\ast}
    \in \mc H$,
so that under our normalization,
\[
    \loss(\rho)=\|h_{\rho}\|_{\mc H}^2.
\]
The first variation of the loss can be written as
\[
    U(\xi,\rho)
    =
    \left\langle
        \Phi(\xi),
        h_{\rho}
    \right\rangle_{\mc H}.
\]

We first control \(D_t\). Differentiating twice with respect to \(\xi\), we have  for any \(v,v'\in \sd\),
\begin{align}
    v^{\top}\nabla_{\xi}^{2}U(\xi,\rho)v'
    &=
    \left\langle
        D^{2}\Phi(\xi)[v,v'],
        h_{\rho}
    \right\rangle_{\mc H}.
\end{align}
Therefore, by Cauchy--Schwarz and Lemma~\ref{lemma:smoothness}~\ref{SDD},
\begin{align}
   \left\langle
        D^{2}\Phi(\xi)[v,v'],
        h_{\rho}
    \right\rangle_{\mc H}
    &\leq
    \left\|
        D^{2}\Phi(\xi)[v,v']
    \right\|_{\mc H}
    \|h_{\rho}\|_{\mc H} \leq
    \creg \sqrt{\loss(\rho)}.
\end{align}
Now recall that
\[
    D_t(w)
    =
    \nabla_{\xi_t(w)}
    \left(
        P_{\xi_t(w)}
        \nabla_{\xi_t(w)}
        U(\xi_t(w),\rtmf)
    \right).
\]
When \(\nspace=\mathbb R^d\), we have \(P_{\xi}=I\), so the previous bound directly yields $\|D_t(w)\|
    \leq
    \creg \sqrt{\loss(\rtmf)}$.
When \(\nspace=\sd\), differentiating the projection $P_{\xi}=I-\xi\xi^{\top}$
produces an additional term controlled by
$2\|\nabla_{\xi}U(\xi,\rho)\|$, but here we have 
$u^\top \nabla_\xi U(\xi,\rho) =
\left\langle
D\Phi(\xi)[u],
h_\rho
\right\rangle_{\mc H} \leq \|D\Phi(\xi)[u]\|_{\mc H}
\|h_{\rho}\|_{\mc H} \leq \creg \sqrt{\loss(\rho)}$, again by Lemma~\ref{lemma:smoothness}~\ref{SD}.

We next control the velocity of the characteristics. The mean-field velocity satisfies $\nu(\xi,\rho)
    = -P_{\xi}\nabla_{\xi}U(\xi,\rho)$.
Thus for any $\xi, \rho$, we have
\begin{align}
    \|\nu(\xi,\rho)\|
    &\leq
    \|\nabla_{\xi}U(\xi,\rho)\| =
    \|D\Phi(\xi)^{\ast}h_{\rho}\| \leq
    \|D\Phi(\xi)\|_{\op}
    \|h_{\rho}\|_{\mc H} 
    \leq
    \creg\sqrt{\loss(\rho)}.
\end{align}
Thus
\[
    \|\dot\xi_t(w)\|
    \leq
    \creg\sqrt{\loss(\rtmf)}.
\]
Therefore, since $\int \sqrt{\loss_t}dt < \infty$, $\xi_t(w)$ converges to some limit $\xii(w)$, and
\[
    \boxed{
    \|\xi_t(w)-\xii(w)\|
    \leq
    \int_{s=t}^{\infty}
    \|\dot\xi_s(w)\|\,ds \leq
    \creg
    \int_{s=t}^{\infty}
    \sqrt{\loss(\rsmf)}ds.}
\]

Finally, we control \(\dot V_t\). Recall that the kernel-native operator \(V_t\) is given by
\[
    V_t\Lambda
    =
    \mathbb E_{w\sim \rzm}
    \left[
        D\Phi(\xi_t(w))
        \big[
            P_{\xi_t(w)}\Lambda(w)
        \big]
    \right],
\]
for vector fields \(\Lambda \in L_2(\rzm, \R^d)\). Differentiating in time gives
\begin{align}
    \dot V_t\Lambda
    &=
    \mathbb E_{w\sim \rzm}
    \left[
        D^2\Phi(\xi_t(w))
        \big[
            \dot\xi_t(w),
            P_{\xi_t(w)}\Lambda(w)
        \big]
    \right] \\
    &\qquad
    +
    \mathbb E_{w\sim \rzm}
    \left[
        D\Phi(\xi_t(w))
        \big[
            \dot P_{\xi_t(w)}\Lambda(w)
        \big]
    \right].
\end{align}
The first term is bounded by Lemma~\ref{lemma:smoothness} as
\[
    \left\|
    \mathbb E_{w\sim \rzm}
    \left[
        D^2\Phi(\xi_t(w))
        \big[
            \dot\xi_t(w),
            P_{\xi_t(w)}\Lambda(w)
        \big]
    \right]
    \right\|_{\mc H}
    \leq
    \creg
    \sup_w\|\dot\xi_t(w)\|
    \|\Lambda\|.
\]
In the Euclidean case \(P_\xi=I\), the second term vanishes. In the spherical case, $\|\dot P_{\xi_t(w)}\| \lesssim \|\dot\xi_t(w)\|$
and therefore the second term is controlled in the same way using the bound on
\(\|D\Phi(\xi)\|_{\op}\). Hence
\[\boxed{
    \|\dot V_t\|_{\mc H}
    \leq
    \creg
    \sup_w\|\dot\xi_t(w)\| \leq \creg^2\sqrt{\loss(\rtmf)}.}
\]
\end{proof}

\begin{proof}[Proof of Lemma~\ref{lemma:concentration_new}]
Recall that in the kernel mean discrepancy setting, we have $g_t = m_{\brt}-m_{\rtmf} \in \mc H$.
Recall that
\[
    \brt
    =
    \frac1m\sum_{i=1}^m \delta_{\xi_t(w_i)},
    \qquad
    \rtmf
    =
    (\xi_t)_\#\rho_0.
\]
We have
\begin{align}\label{eq:emp_avg}
    g_t-g_\infty
    &=
    \left(
        \mathbb E_{w\sim \brz}
        \big[
            \Phi(\xi_t(w))-\Phi(\xii(w))
        \big]
    \right)
    -
    \left(
        \mathbb E_{w\sim \rho_0}
        \big[
            \Phi(\xi_t(w))-\Phi(\xii(w))
        \big]
    \right).
\end{align}
Thus \(g_t-g_\infty\) is the empirical average of \(m\) i.i.d. mean-zero
\(\mc H\)-valued random variables.

Define
\[
    R_t
    :=
    \int_{s=t}^{\infty}
    \sqrt{\loss(\rho_s^{\mathrm{MF}})}\,ds.
\]
By Lemma~\ref{lemma:loss_to_D},
\[
    \|\xi_t(w)-\xii(w)\|
    \leq
    \creg R_t.
\]
Using Lemma~\ref{lemma:smoothness}~\ref{SD}, we have
\begin{align}
        \|\Phi(\xi_t(w))-\Phi(\xii(w))\|_{\mc H}
    &\leq \|\xi_t(w) - \xii(w)\|\sup_{w' \in \nspace, u \in \sd}\|u^{\top}\nabla \Phi(w')\|_{\mc H} \leq \creg^2 R_t.
\end{align}

Define the truncated random variable
\begin{align}
    \tilde{g}_t := \mathbb E_{w\sim \brz}
        \big[
            \Phi(\xi_t(w))-\Phi(\xii(w))\mathbf{1}(\sup_{s < \infty}\|\xi_s(w)\| \leq \sqrt{\log(m)}\kappa_{\infty}).
        \big]
\end{align}
We have by the definition of $\kappa_{\infty}$ that
\begin{align}
\mathbb{P}[\sup_{t < \infty} \|g_t - \tilde{g}_t\|_{\mc H} \geq 0] \leq 1 - \left(1 - \mathbb{P}[\sup_{s < \infty}\|\xi_s(w)\| \geq 2\sqrt{\log(m)}\kappa_{\infty}]\right)^m \leq \frac{1}{m}.
\end{align}

Now each random variable in $\tilde{g}_t$ is bounded by $\creg^2\min\left(4\sqrt{\log(m)}(1 + \kappa_{\infty}), R_t\right)$, since by Lemma~\ref{lemma:smoothness}~\ref{S0} we can also always bound $\|\Phi(\xi_t(w))\|_{\mc H} \leq \creg(1 + \|\xi_t(w)\|)$.

Applying a Hilbert-space concentration inequality~\cite{pinelis1994optimum, martinez2024empirical} to the empirical average $\tilde{g}_t - g_{\infty}$ in \eqref{eq:emp_avg} yields
\[
    \mathbb P\left[
        \|\tilde{g}_t-g_\infty\|_{\mc H}
        \geq
        2\creg^2\sqrt{\log(1+t)}\eps_m\min(1 + \kappa_{\infty},R_t)
    \right]
    \leq
    \frac{2}{m^5(1+t)^2},
\]
where we used the definition of $\eps_m =  \frac{\creg^6 \sqrt{d}\log(dm)}{\sqrt{m}}$ . It remains to extend this estimate uniformly over \(t<\infty\). Indeed to check the conditions of \cite{martinez2024empirical}, observe that the RKHS is $(2, 1)$-smooth and separable, since $\nspace$ is separable.
For \(s\geq t\),
\begin{align}
    \|(g_t-g_\infty)-(g_s-g_\infty)\|_{\mc H}
    &\leq
    2\sup_w
    \|\Phi(\xi_t(w))-\Phi(\xi_s(w))\|_{\mc H} \\
    &\leq
    2\creg
    \sup_w
    \|\xi_t(w)-\xi_s(w)\| \\
    &\leq
    2\creg^2
    \int_{r=t}^{s}
    \sqrt{\loss(\rho_r^{\mathrm{MF}})}\,dr.
\end{align}
Thus \(g_t-g_\infty\) is uniformly continuous in \(t\), with continuity
modulus controlled by the loss. Taking an
\(\eps_m\)-net in time and union bounding the preceding tail estimate over
the net points gives, with probability \(1-1/m\),
\[
    \|g_t-g_\infty\|_{\mc H}
    \leq
    2\creg^2
    \sqrt{\log(t+1)}
    \eps_m
    \min\left(
        1 + \kappa_{\infty},\,
        \int_{s=t}^{\infty}
        \sqrt{\loss(\rho_s^{\mathrm{MF}})}\,ds
    \right)
\]
uniformly for all \(t<\infty\).

We now bound \(g_\infty\). Since
\[
    g_\infty
    =
    \mathbb E_{w\sim \brz}\Phi(\xii(w))
    -
    \mathbb E_{w\sim \rho_0}\Phi(\xii(w)),
\]
it is again the empirical average of \(m\) i.i.d. mean-zero
\(\mc H\)-valued random variables. By Lemma~\ref{lemma:smoothness},
\[
    \|\Phi(\xi)\|_{\mc H}
    \leq
    \creg (1 + \|\xi\|).
\]
Since \(\|\xii(w)\|\) has sub-Gaussian norm
\(\kappa_\infty\) under \(w\sim \rho_0\), the random variable
\[
    \Phi(\xii(w))
    -
    \mathbb E_{w\sim \rho_0}\Phi(\xii(w))
\]
has a 
\(O(\creg\kappa_\infty)\)-subgaussian tail in the $\mc H$-norm. Thus with probability $1 - \frac{1}{2m}$, all of the random varialbles $\Phi(\xii(w))
    -
    \mathbb E_{w_i \sim \rho_0}\Phi(\xii(w_i))$ are bounded by $O(\creg \kappa_{\infty}\sqrt{\log(m)})$. Again, the Hilbert-space concentration inequality therefore gives
\[
    \mathbb P\left[
        \|g_\infty\|_{\mc H}
        \geq
        \creg \kappa_\infty \eps_m
    \right]
    =
    \frac{1}{m}.
\]
This proves the lemma.
\end{proof}

\begin{proof}[Proof of Lemma~\ref{lemma:kmmd_error}]
We first decompose
\begin{align}
    \|m_{\rtm}-m_{\rtmf}\|_{\mc H}^2
    &\leq
    2\|m_{\brt}-m_{\rtmf}\|_{\mc H}^2
    +
    2\|m_{\brt}-m_{\rtm}\|_{\mc H}^2.
\end{align}

We begin by bounding the coupling term
\[
    \|m_{\brt}-m_{\rtm}\|_{\mc H}^2.
\]
Recall that
\[
    \brt=\frac1m\sum_{i=1}^m \delta_{\bwti},
    \qquad
    \rtm=\frac1m\sum_{i=1}^m \delta_{\bwti+\dit}.
\]
Therefore,
\begin{align}
    m_{\brt}-m_{\rtm}
    &=
    \mathbb E_{i}
    \left[
        \Phi(\bwti)-\Phi(\bwti+\dit)
    \right].
\end{align}
Using the second-order Taylor expansion of the \(\mc H\)-valued map \(\Phi\),
\begin{align}
    \Phi(\bwti)-\Phi(\bwti+\dit)
    &=
    D\Phi(\bwti)[-\dit]
    +
    \int_{s=0}^1
    \int_{s'=0}^{s}
    D^2\Phi(\bwti+s'\dit)[\dit,\dit]
    \,ds'\,ds.
\end{align}
Hence
\begin{align}
    m_{\brt}-m_{\rtm}
    &=
    -V_t\Delta_t + B_t,
\end{align}
where
\[
    V_t\Delta_t
    :=
    \mathbb E_i
    D\Phi(\bwti)[\dit]
\]
and
\[
    B_t
    :=
    \mathbb E_i
    \int_{s=0}^1
    \int_{s'=0}^{s}
    D^2\Phi(\bwti+s'\dit)[\dit,\dit]
    \,ds'\,ds.
\]
Therefore,
\begin{align}\label{eq:kmmd-coupling-upper}
    \|m_{\brt}-m_{\rtm}\|_{\mc H}^2
    &\leq
    2\|V_t\Delta_t\|_{\mc H}^2
    +
    2\|B_t\|_{\mc H}^2.
\end{align}

Recall from \eqref{eq:HVV}
\[
    \|V_t\Delta_t\|_{\mc H}^2
    =
    \Delta_t^\top \hpt \Delta_t.
\]
It remains to bound \(\|B_t\|_{\mc H}^2\). By Jensen's inequality,
\begin{align}
    \|B_t\|_{\mc H}^2
    &\leq
    \mathbb E_i
    \left\|
        \int_{s=0}^1
        \int_{s'=0}^{s}
        D^2\Phi(\bwti+s'\dit)[\dit,\dit]
        \,ds'\,ds
    \right\|_{\mc H}^2.
\end{align}
Using Lemma~\ref{lemma:smoothness}~\ref{SDD}, we have $\left\|D^2\Phi(\xi)[u,v]\right\|_{\mc H}^2 \leq \creg \|u\|\|v\|$ and thus
\begin{align}
    \|B_t\|^2_{\mc H}
    &\leq
    O(\creg^2
    \mathbb E_i \|\dit\|^4).
\end{align}
Combining this with \eqref{eq:kmmd-coupling-upper} yields
\begin{align}\label{eq:kmmd-coupling-final}
    \|m_{\brt}-m_{\rtm}\|_{\mc H}^2
    &\leq
    2\Delta_t^\top \hpt\Delta_t
    +
    O\left(
        \creg^2
        \|\Delta_t\|_4^4\bigr)
    \right).
\end{align}

We now bound the Monte-Carlo term
\[
    \|m_{\brt}-m_{\rtmf}\|_{\mc H}^2.
\]
Since \(\brt\) is the empirical measure of \(m\) i.i.d. samples from \(\rtmf\),
\[
    m_{\brt}-m_{\rtmf}
    =
    \frac1m\sum_{i=1}^m
    \left(
        \Phi(\bwti)-m_{\rtmf}
    \right)
\]
is an average of i.i.d. centered \(\mc H\)-valued random variables. 

Repeating the exact same argument as the one in the proof of Lemma~\ref{lemma:concentration_new} to bound $\|g_t - g_{\infty}\|_{\mc H}$, with the exception each term in the truncated random variable is only bounded by $4\creg^2\sqrt{\log(m)}\kappa_{\infty}$, we attain that with probability $1 - 1/m$, uniformly over $t < \infty$, we have 
\begin{align}
    \|m_{\brt}-m_{\rtmf}\|^2_{\mc H} \leq \frac{\creg^4\kappa_{\infty}^2\log(m(1 + t))}{m}.
\end{align}



Combining this Monte-Carlo estimate with
\eqref{eq:kmmd-coupling-final}, we conclude that with probability
\(1-m^{-\Theta(1)}\),
\begin{align}
    \|m_{\rtm}-m_{\rtmf}\|_{\mc H}^2
    &\leq
    4\Delta_t^\top \hpt \Delta_t
    +
    \frac{\creg^4\kappa_{\infty}^2\log(m(1 + t))}{m} + O\left(\creg^2\|\Delta_t\|_4^4\right).
\end{align}
This concludes the proof.
\end{proof}

%% file: neurips26/uniform_experiments.tex
\section{Experiments}\label{sec:experiments}

In this section, we provide several examples of learning problems which empirically demonstrate fast enough convergence rates to satisfy the conditions of our main theorem. We study two settings, and defer further experimental details to Appendix \ref{sec:exp_app}.

\paragraph{Misspecified Sobolev single-index model} 
We draw data $(x, f^*(x))$, where $x$ is drawn from  $\sqrt{d-1}\sd$, and then clamped to have all coordinates of $x \in [-1, 1]$. We have $f^*(x) = \phi(x_1) = \mathbb{E}_{w \sim \rho^*}\sigma(x^\top w)$, where $\sigma$ is the ReLU activation, and $w \in \R^d$. Inspired by \cite{chizat2026quantitative}, we study a class of problems $f^{\gamma}$ parameterized by $\gamma$, the largest value such that when $d = 2$, the $\gamma$-Sobolev norm of $\rho^*$ is finite~\footnote{The $\gamma$-Sobolev norm of a function $g$ is $\left(\sum_{k \in \mathbb{Z}^{d} \setminus 0} (2\pi|k|)^{2\gamma}|\hat{g}_k|^2\right)^{1/2}$, where $\hat{g}_k := \int_{\sd} g(x)\exp(-2\pi ikx)d\omega(x)$ is the Fourier coefficient of $g$.}. This in turn governs the smoothness of $\phi$, where larger $\gamma$ means more smooth. These target functions are pictured in Figure~\ref{fig:fstar}, and described mathematically in Appendix \ref{sec:exp_app}, along with further experimental details.
\cite{chizat2026quantitative} showed that for a variant of this problem that is equivalent to our setting in the case that $d = 2$, when both layers are trained, the local convergence rate is of order $t^{-(\gamma + 1)}$. 
In Figure \ref{fig:sim_loss2} we plot both the loss, which approximates $\loss(\rtmf)$, and the integral of the square root of the loss $R_t := \int_{s = 0}^t \sqrt{\loss(\rtmf)}ds$, which is the quantity Theorem~\ref{prop:uniform} assumes to be bounded to guarantee uniform-in-time PoC. We train up to $T = 512$ time. We observe that when $d=2$ and $\gamma = 1$, the least smooth setting, the $R_t$ does not converge, while for larger values of $\gamma$, $R_t$ does converge (see Figure~\ref{fig:sim_loss}). For $d = 128$, plotted in Figure~\ref{fig:sim_loss2}, we observe convergence at all values of $\gamma$, though $R_t$ is larger for smaller $\gamma$. In all cases we observe global convergence.

\begin{figure}[t]
    \centering
    \includegraphics[width=0.85\textwidth]{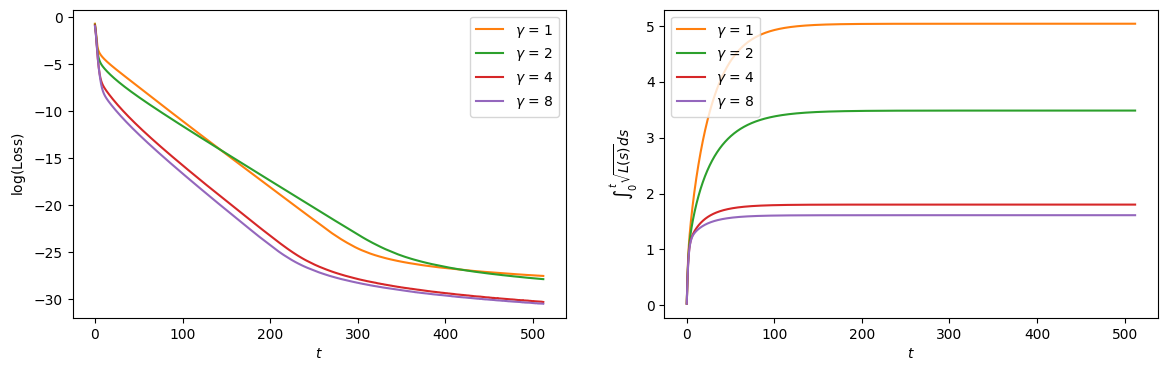}
    \caption{Approximate loss $\loss(\rtmf)$ (left) and $\int_{s = 0}^t \sqrt{\loss(\rsmf)}ds$ (right) for $d = 128$. We train both layers.}
    \label{fig:sim_loss2}
\end{figure}

\paragraph{Two-dimensional examples}
We illustrate low-dimensional examples where $\mathcal{S} = \mathbb{S}^2$, and we choose the arcosine kernel arising from the ReLU activation. We implement the Eulerian dynamics by gridding the domain and performing upwind integration to preserve the probability mass. 
Figure \ref{fig:low-dim} illustrates several targets with varying smoothness. In qualitative agreement with the local analysis of \cite{chizat2026quantitative}, in the smooth settings we observe a sufficiently fast decay of the population loss, leading to an effective PoC rate. In contrast, the singular target measure does not satisfy our decay assumptions, even though we observe global mean-field convergence. 

\begin{figure}
    \centering
    \includegraphics[width=\linewidth]{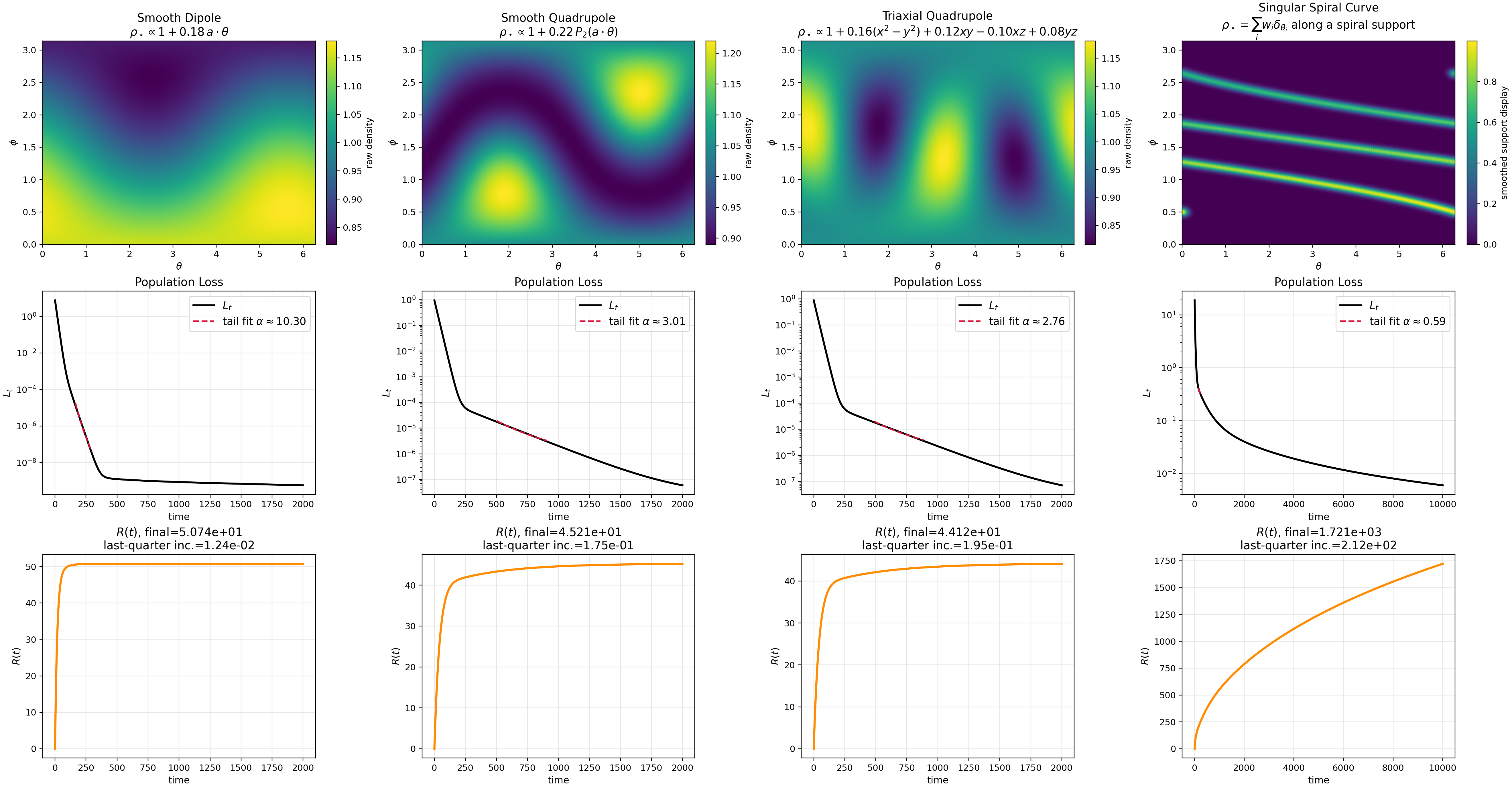}
    \caption{examples of target densities (top), alongside the behavior of $\mathcal{L}_t$ (middle) and the associated $R_t = \int_0^t \sqrt{\mathcal{L}_s} ds$ (bottom). }
    \label{fig:low-dim}
\end{figure}

%% file: neurips26/apx_conc_dyn_new.tex
\section{Proofs of Assumptions~\ref{uc:n}, \ref{uc:m} and Lemma~\ref{lemma:errdynamicstight}}\label{apx:dyn}
\subsection{Notations}
Throughout this section, we will use the following notation, which builds upon the notation in our setup from the main body.
\begin{align}
    F(w) &:= \mathbb{E}_{(x, y) \sim \md} y\sigma(w^\top x)\\ 
    F'(w) &:= P_w\nabla_w F(w)
\end{align}
and 
\begin{align}\label{eq:FKdef}
    K(w, w') &:= \mathbb{E}_{x \sim \md} \sigma(w'^\top x)\sigma(w^\top x)\\
    K'(w, w') &:= P_w \nabla_w K(w, w').
\end{align}
Let $\mc H$ be the RKHS generated by the kernel $K$, with inner product $\langle{,}\rangle_{\mc H}$. By the gradient flow dynamics in Equation~\ref{eq:GF_dynamics}, we have 
\begin{align}\label{eq:Vexpansion}\textstyle
        \frac{d}{dt} w = \nu_{\md}(w, \rho) := P_w\nabla_w F(w) - P_w\nabla_w \mathbb{E}_{w' \sim \rho} K(w, w').
\end{align}

We also define the \em empirical local Hessian \em $\bar{D}_t$ (closely related to $\dpt$), where the expectation is taken over $\brt$ instead of $\rtmf$:
\begin{align}
    \bar{D}_t(w) &:= \nabla_{\xi_t(w)} \nu(\xi_t(w), \brt) = \nabla_{\xi_t(w)} F'(\xi_t(w)) - \mathbb{E}_{w' \sim \brt}\nabla_{\xi_t(w)} K'(\xi_t(w), w').\\
    \dpt(w) &= \nabla_{\xi_t(w)} \nu(\xi_t(w), \rtmf) = \nabla_{\xi_t(w)} F'(\xi_t(w)) - \mathbb{E}_{w' \sim \rtmf}\nabla_{\xi_t(w)} K'(\xi_t(w), w').
\end{align}

\subsection{Concentration Lemmas}
The main goal of this section is to show that under Assumption~\ref{assm:reg}, the following two uniform-convergence guarantees hold with the values $\eps_n$ and $\eps_m$ given in the introduction. We state these guarantees as assumptions, because as per Remark~\ref{rem:regularity}, they (along with Assumption~\ref{assm:smoothness}) suffice to yield our main results.

\ucn*

\ucm*

Before proving that these assumptions hold under Assumption~\ref{assm:reg}, we show that they suffice to yield the desired bounds which will be integral in proving Lemma~\ref{lemma:errdynamicstight}.
\begin{lemma}[cf. \citep{glasgow2025mean}, Lemma 19]\label{lemma:concentration}
Suppose that Assumption~\ref{uc:m} holds and Lemma~{\ref{lemma:smoothness}} \ref{S3},\ref{S5} hold, and that for all $s \leq t < \infty$, we have that the random variable $\|w\|$ for $w \sim \rsmf$ is $\kappa_t$-sub-Gaussian for $\kappa_t \geq 1$. Then with probability $1 - m^{-\Theta(1)}$ over the initialization $\brz$, for all $t < \infty$ and $i \in [m]$, the following holds:
    \begin{align*}
        \|\nu(\bwti, \rtmf) - \nu(\bwti, \brt)\| &\leq \eps_m \kappa_t\log(t + 1).\\
        \|D_t(i) - \bar{D}_t(i)\| &\leq \eps_m \kappa_t\log(t + 1).
    \end{align*}
\end{lemma}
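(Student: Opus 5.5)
The plan is to write both differences as empirical-minus-population averages of functions that belong to the classes $\mathcal{F}$ and $\mathcal{F}'$ of Assumption~\ref{uc:m}. Assumption~\ref{uc:m} applied at each fixed time controls such differences uniformly over the evaluation point $w$. The remaining work is to make the bound uniform in $t$, via a time net plus a Lipschitz-in-time argument.

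\textbf{Step 1 (fixed $t$).} Since $F$ does not depend on $\rho$, the $F$-terms cancel. We are left with
\[
\nu(\bwti, \rtmf) - \nu(\bwti, \brt) = \mathbb{E}_{j \sim [m]} P_{\bwti}\nabla K(\bwti, \bwtj) - \mathbb{E}_{z \sim \rtmf} P_{\bwti}\nabla K(\bwti, z).
\]
Here $\{\bwtj\}_j$ are i.i.d.\ from $\rtmf$, which is $\kappa_t$-subgaussian. Testing against $v \in \sd$ gives the function $v^\top \nabla_w K(w, z)$ at $w = \bwti$; the projection only rotates $v$ to $P v$ with $\|Pv\| \le 1$. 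This is exactly the class $\mathcal{F}$. Because Assumption~\ref{uc:m} is a supremum over $w \in \nspace$, the dependence of the evaluation point $\bwti$ on the sample itself causes no trouble. The projected Hessian difference $D_t(i) - \bar{D}_t(i)$ likewise reduces to $\mathbb{E}_{z}\nabla_w K'(w,z)$ evaluated at $w=\bwti$, which is controlled by $\mathcal{F}'$ with $v \in (\sd)^{\otimes 2}$. Hence, at a fixed $t$, both differences are at most $\eps_m \kappa_t \log(1+t)$ with probability $1 - \eps_m^2 m^{-\Theta(1)}/(1+t)^2$.

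\textbf{Step 2 (time net and union bound).} Next take a net $\{t_k\}$ of $[0,\infty)$ with spacing $\delta = \eps_m/\on{poly}(\creg)$. Summing the failure probabilities gives
\[
\sum_k \eps_m^2 m^{-\Theta(1)}(1+t_k)^{-2} \lesssim \delta^{-1}\eps_m^2 m^{-\Theta(1)} = m^{-\Theta(1)}.
\]
This is exactly why the failure probability in Assumption~\ref{uc:m} carries the factors $\eps_m^2/(1+t)^2$.

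\textbf{Step 3 (between net points).} For $t \in [t_k, t_{k+1}]$, each characteristic moves by at most $\sup\|\nu\|\,\delta \le (2+\kappa_t)\creg\,\delta$, using Lemma~\ref{lemma:smoothness}~\ref{S5}. Both $\nu(\xi_t(w_i), \rtmf)$ and $\nu(\xi_t(w_i), \brt)$ are Lipschitz in the evaluation point, by~\ref{S5}. They are also Lipschitz in the positions of the atoms of $\brt$ (resp.\ the characteristics underlying $\rtmf = (\xi_t)_\#\rho_0$), through $\nabla_{w'}\nabla_w K$, which~\ref{S3} bounds by $\creg$. So the difference changes by $O(\creg^2(2+\kappa_t)\delta) \le \eps_m\kappa_t$ over the interval. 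The same holds for $D_t - \bar D_t$, using the second-derivative bounds~\ref{S1} and~\ref{S3}. Absorbing this into constants (with $\log(t+1)$ replaced by $\log(t+2)$ near $t=0$, or by adjusting constants) gives the claim for all $t$ and all $i$ simultaneously.

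\textbf{Main obstacle.} I expect the main obstacle to be Step 3's Lipschitz control in the spherical case. There, differentiating $P_w$ produces extra terms, and one must check that every term is bounded by the $\creg$-quantities in~\ref{S1}--\ref{S5}, including the third derivative needed for $D_t$. A secondary subtlety is tracking the $(1+\|w'\|)$ growth in Assumption~\ref{assm:smoothness} through the subgaussian parameter $\kappa_t$ rather than a uniform bound.
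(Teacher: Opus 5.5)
Your proposal is correct and follows essentially the same route as the paper. It applies Assumption~\ref{uc:m} at a fixed time, uniformly over the evaluation point $w$ (the $\mathcal{F}$ class for the velocity, the $\mathcal{F}'$ class for the Hessian). It then takes a union bound over a time net of spacing $\Theta(\eps_m/\creg)$, using the $\eps_m^2/(1+t)^2$ failure probability. Between net points it interpolates via $\|\xi_t(w)-\xi_s(w)\| \le 2\kappa_t\creg|t-s|$ from Lemma~\ref{lemma:smoothness}~\ref{S5}, together with \ref{S3} for the Hessian term.

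The one streamlining in the paper: the fixed-time bound is already a supremum over $w \in \nspace$, so between net points it only controls the motion of the atoms of $\brt$ and $\rtmf$, not the motion of the evaluation point $\xi_t(w_i)$.
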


\begin{proof}[Proof of Lemma~\ref{lemma:concentration}]
Fix $t < \infty$ and $w \in \nspace$. By Equation~\eqref{eq:Vexpansion}, we have that
\begin{align}\label{eq:mean0object}
    \nu(w, \rtmf) - \nu(w, \brt) &:= P_w\left( \mathbb{E}_{w' \sim \rtmf}\nabla_w K(w, w') - \mathbb{E}_{w' \sim \brt} \nabla_w K(w, w')\right)
\end{align}
Thus 
\begin{align}
    \|\nu(w, \rtmf) - \nu(w, \brt)\| \leq \sup_{u \in \sd} \mathbb{E}_{w' \sim \rtmf}u^{\top}\nabla_w K(w, w') - \mathbb{E}_{w' \sim \brt} u^{\top}\nabla_w K(w, w').
\end{align}
Plugging in Assumption~\ref{uc:m} yields that 
with probability $1 - \delta/(1 + t)^2$, for $\delta = \eps_m^2 m^{-\Theta(1)}$,
\begin{align}
    \sup_{w \in \nspace} \left\|\mathbb{E}_{w' \sim \rho_0}\nabla_w K(w, \xi_t(w')) - \frac{1}{m}\sum_{i = 1}^m \nabla_w K(w, \bwti)\right\| &\leq \eps_m \kappa_t \log(1 + t)/2.
\end{align}


Now we need to take a union bound over all $t < \infty$. Create a net over $[0, \infty)$ of spacing $\frac{\eps_m}{4\creg}$. By a union bound, with probability at least 
\begin{align}\label{eq:prob}
    1 - &\delta\sum_{k = 0}^{\infty}\left(\frac{1}{1 + k\frac{\eps_m}{6\creg}}\right)^2 \geq 1 - \frac{\delta 16\creg^2}{\eps_m^2} \geq 1 - m^{-\Theta(1)},
\end{align}
for any $t$ in the net, we have
\[
    \|\nu(w, \rtmf) - \nu(w, \brt)\| \leq \frac{\eps_m \kappa_t \log(t+1)}{2}.
\]

Now by Lemma~{\ref{lemma:smoothness}} \ref{S5}, for any $s, t < \infty$, and any $w_0$, we have
\begin{align}\label{eq:vel_bd}
    \|\xi_t(w_0) - \xi_s(w_0)\| &\leq \creg|t - s|\sup_{w \in \nspace}\sup_{r \in [s, t]}\nu(w, \rho_r^{\textsc{MF}})\\
    &\leq \creg|t - s|\left(1 + \sup_{r \in [s, t]}\mathbb{E}_{w' \sim \rho_r^{\textsc{MF}}}\|w'\|\right)\\
    &\leq 2\kappa_t \creg |t - s|.
\end{align}
Thus, for any $t < \infty$, there exists an $s$ in the net of distance at most $\frac{\eps_m}{4\creg}$. By a standard triangle inequality argument, we attain that with the probability $1 - m^{-\Theta(1)}$, for all $w \in \nspace$ and $t < \infty$, we have 
\begin{align}
    \|\nu(w, \rtmf) - \nu(w, \brt)\| \leq \eps_m \kappa_t \log(t + 1).
\end{align}

The argument for proving concentration for $\bar{D}_t(w)$ uniformly over $w$ and $t$ is similar. We can write
\[
    D_t(w) - \bar{D}_t(w) = \mathbb{E}_{w' \sim \rho_0}\nabla_{\xi_t(w)} P_{\xi_t(w)} \nabla_{\xi_t(w)} K(\xi_t(w), 
    \xi_t(w')) - \frac{1}{m}\sum_{i = 1}^m \nabla_{\xi_t(w)} P_{\xi_t(w)} \nabla_{\xi_t(w)} K(\xi_t(w), \bwti),
\]
and thus we care about
\begin{align}
    X_t := \sup_{u, v \in \sd, w \in \nspace} \mathbb{E}_{z \sim \rtmf}u^{\top}\nabla_w P_w \nabla_w K(w, 
    z)v - \mathbb{E}_{z \sim \brt} u^{\top}\nabla_w P_w \nabla_w K(w, z)v
\end{align}
Assumption~\ref{uc:m} now gives the result that with probability $1 - 4\delta/(t + 1)^2$, $X_t \leq \frac{1}{2}\kappa_t \eps_m \log(t + 1)$. Again we need to show Lipschitzness in $t$: we have by Lemma~\ref{lemma:smoothness}~\ref{S3} and \eqref{eq:vel_bd} above that
\begin{align}
    X_t - X_s &\leq 2\sup_{w' \in \nspace} \|\xi_t(w') - \xi_s(w')\| \sup_{z, \xi \in \nspace}\left\|\nabla_z \nabla_{\xi} P_{\xi}\nabla_{\xi} K(\xi, z)\right\|  \\
    &\leq 2(2\kappa_t\creg|t - s|)\creg.
\end{align}
The result now follows by the argument before by taking a net in $t$ over $[0, \infty)$ of spacing $\frac{\eps_m}{8\creg^2}$.

\end{proof}

\begin{lemma}[cf. Lemma 23 in \cite{glasgow2025mean}]\label{lemma:nconcentration}
Suppose Assumption~\ref{uc:n} holds. Then with probability $1 - n^{-\Theta(1)}$, uniformly over all $w \in \nspace$, and all $\rho \in \mathcal{P}(\nspace)$, we have
\[
    \|\nu_{\hat{\md}}(w, \rho) - \nu(w, \rho)\| \leq \eps_n \left(1 + \mathbb{E}_{w \sim \rho}\|w\|\right),
\]
and 
\[
    |\loss_{\hat{\md}}(\rho) - \loss_{\md}(\rho)| \leq \eps_n \left(2 + \mathbb{E}_{w \sim \rho}\|w\|\right)^2,
\]
\end{lemma}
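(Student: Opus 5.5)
The bound will come from rewriting both differences in terms of the two uniform-convergence classes of Assumption~\ref{uc:n}, and then averaging over $w'\sim\rho$ after dividing out the normalizing weights $(1+\|w'\|)$ (resp.\ $(1+\|w\|)(1+\|w'\|)$). We condition once on the event of Assumption~\ref{uc:n}, which has probability $1-n^{-\Theta(1)}$. On that event all the bounds below hold simultaneously for every $w\in\nspace$ and $\rho\in\mathcal{P}(\nspace)$, since the suprema in the assumption are over the whole parameter family.

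\textbf{Velocity.} By \eqref{eq:Vexpansion} and \eqref{eq:FKdefbody},
\[
\nu_{\md}(w,\rho)=P_w\nabla_w\mathbb{E}_{(x,y)\sim\md}\,\mathbb{E}_{w'\sim\rho}(y-\sigma(w'^\top x))\sigma(w^\top x),
\]
and the same holds with $\hat{\md}$ in place of $\md$. Since $\|P_w\|\le 1$, we have
\[
\|\nu_{\hat\md}(w,\rho)-\nu(w,\rho)\|\le\sup_{v\in\sd}\mathbb{E}_{w'\sim\rho}\Big|(\mathbb{E}_{\md}-\mathbb{E}_{\hat\md})(y-\sigma(w'^\top x))\sigma'(w^\top x)(x^\top v)\Big|.
\]
Each inner difference is $(1+\|w'\|)$ times the deviation of $f_{w,w',v}\in\mathcal{F}$, so it is at most $(1+\|w'\|)\eps_n$. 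Integrating over $w'\sim\rho$ gives $\eps_n(1+\mathbb{E}_{\rho}\|w'\|)$.

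\textbf{Loss.} Expand
\[
\loss_{\md}(\rho)=\mathbb{E}_{w,w'\sim\rho}\mathbb{E}_{\md}(y-\sigma(w^\top x))(y-\sigma(w'^\top x))-\mathbb{E}_{\md}(f^*(x)-y)^2,
\]
using $(f_\rho-y)^2=\mathbb{E}_{w,w'}(\sigma(w^\top x)-y)(\sigma(w'^\top x)-y)$. The empirical loss $\loss_{\hat\md}$ is defined with the same $f^*$ subtracted, as in Assumption~\ref{uc:n}.

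The first term differs between $\md$ and $\hat\md$ by at most $\eps_n\,\mathbb{E}_{w,w'}(1+\|w\|)(1+\|w'\|)=\eps_n(1+\mathbb{E}\|w\|)^2$, using the class $\mathcal{F}'$ and independence of $w,w'$. The $f^*$ term contributes at most $\eps_n$ by the last line of Assumption~\ref{uc:n}. Together these give
\[
|\loss_{\hat\md}(\rho)-\loss_{\md}(\rho)|\le\eps_n\big((1+\mathbb{E}\|w\|)^2+1\big)\le\eps_n(2+\mathbb{E}\|w\|)^2.
\]

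\textbf{Main obstacle.} The only delicate step is exchanging $\nabla_w$ with the expectations over $x$ and $w'$. This is justified by dominated convergence, using the integrability provided by Assumption~\ref{assm:smoothness} (or directly by the subgaussian moment bounds).
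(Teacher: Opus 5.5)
Your proposal is correct and follows essentially the same route as the paper. You reduce the velocity difference, via linearity in $\rho$, to the class $\mathcal{F}$ of Assumption~\ref{uc:n} and integrate the weight $(1+\|w'\|)$ over $w'\sim\rho$; for the loss, you expand as a double expectation over independent $w,w'\sim\rho$, apply $\mathcal{F}'$ together with the $f^*$ line of the assumption, and close with $(1+\mathbb{E}\|w\|)^2+1\le(2+\mathbb{E}\|w\|)^2$.
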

\begin{proof}
For the first bound, the velocity is linear in $\rho$, so it suffices to prove that uniformly over $w$ and $w'$, we have
\begin{align}
    \|\nu_{\hat{\md}}(w, \delta_{w'}) - \nu(w, \delta_{w'})\| \leq \eps_n (1 + \|w'\|).
\end{align}
We expand
\begin{align}
    \nu_{\hat{\md}}(w, \delta_{w'}) = P_w\mathbb{E}_{x, y \sim \hat{\md}} (y - \sigma(w'^\top x))\sigma'(w^\top x)x.
\end{align}
The result now follows immediately from Assumption~\ref{uc:n}. For the second bound, we have
\begin{align}
    \loss_{\hat{\md}}(\rho) - \loss_{\md}(\rho) &= \mathbb{E}_{(x, y) \sim \hat{\md}}\mathbb{E}_{w, w' \sim \rho}(\sigma(w^{\top}x) - y)(\sigma({w'}^{\top}x) - y) - \mathbb{E}_{(x, y) \sim \md}\mathbb{E}_{w, w' \sim \rho}(\sigma(w^{\top}x) - y)(\sigma({w'}^{\top}x) - y)\\
    &\qquad - \mathbb{E}_{(x, y) \sim \hat{\md}}(f^*(x) - y)^2 + \mathbb{E}_{(x, y) \sim \md}(f^*(x) - y)^2,
\end{align}
so moving the double expectation over $\rho$ outside, and employing Assumption~\ref{uc:n} we have with probability $1 - n^{-\Theta(1)}$
\begin{align}
    |\loss_{\hat{\md}}(\rho) - \loss_{\md}(\rho)| &\leq \eps_n\mathbb{E}_{w, w' \sim \rho}(1 + \|w\|)(1 + \|w'\|) + |\mathbb{E}_{(x, y) \sim \hat{\md}}(f^*(x) - y)^2 - \mathbb{E}_{(x, y) \sim \md}(f^*(x) - y)^2|\\
    &\leq \eps_n \left(2 + \mathbb{E}_{w \sim \rho}\|w\|\right)^2.
\end{align}
\end{proof}

The following two lemmas use standard techniques from empirical process theory. Assume that all constants $\kappa$ and $\kappa_x$ throughout are at least $1$ in what follows.

\begin{lemma}[Covering Numbers]\label{claim:threshold_vc}
Suppose that $\sigma(0), \sigma' \leq C$, $\text{Var}(\sigma')_{[t, \infty)}, \text{Var}(\sigma')_{[-t, -\infty)} \leq \frac{C}{1 + t^{1/(C d)}}$ and consider the following functions classes from $\R^d$ or $\R^d \times \R$ to $\R$.
\begin{align}
    \mathcal{G}^{(j)} &= \{g_{u, w, s}((x, y)) := y\langle{u, x^{\otimes j}}\rangle\mathbf{1}(w^\top x \geq s):  u \in ({\sd})^{\otimes j}, w \in \R^d, s \in \R\}\\
    \mathcal{G'} &= \{g_{u, w, \xi, s}(x) := \frac{1}{1 + \|\xi\|}(u^\top x)\mathbf{1}(w^\top x \geq s)\sigma(\xi^{\top}x):  u \in \sd, w, \xi \in \R^d, s \in \R\}\\
    \mathcal{G''} &= \{g_{w, \xi}((x, y)) := \frac{1}{1 + \|\xi\|}\frac{1}{1 + \|w\|}(\sigma(\xi^{\top}x) - y)(\sigma(w^{\top}x) - y): w, \xi \in \R^d\}
\end{align}
There exists a universal constant $C$, such for any distribution $P$ on $(x, y)$ with $\mathbb{E}_{x \sim P} \|x\|^2 \leq \kappa_x^2 d$ and $\mathbb{E}y^2 \leq \kappa_x^2$, we have for $\bar{\mathcal{G}} \in \{\mathcal{G}^{(1)}, \mathcal{G}^{(2)}, \mathcal{G}', \mathcal{G}''\}$,
\begin{align}
    \log\left(N\left(\eps, \mathcal{\bar{\mathcal{G}}}, L_2(P)\right)\right) \leq \Theta\left(C d\log\left(\kappa_xCd/\eps\right)\right),
\end{align}
where $N(\eps,\bar{\mathcal{G}}, L_2(P))$ denotes the covering number (ie. there exists a net of this size where for any $g \in \bar{\mathcal{G}}$, there is a $g'$ in the net with $\mathbb{E}_{x \sim P} (g(x) - g'(x))^2 \leq \eps^2$.)
Further, if $x \sim P$ is $\kappa_x$-subguassian, then with probability $1 - \delta$ over $n$ i.i.d. samples $x_i \sim P$ for $n \geq d$, we have for any $\bar{\mathcal{G}}$,
\begin{align}
    \sup_{g \in \bar{\mathcal{G}}} \left|\frac{1}{n}\sum_i g(x_i) - \mathbb{E}_{x \sim P}g(x) \right| \leq \Theta\left(C \kappa_x^2 \sqrt{\frac{C d\log^2(d/\delta)}{n}}\right)..
\end{align}
\end{lemma}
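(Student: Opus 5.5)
The statement to prove is the last one displayed, Lemma~\ref{claim:threshold_vc}. The plan has two parts: a VC/pseudo-dimension bound for each class, which gives the covering numbers, and then a chaining or symmetrization argument, with truncation to handle the unbounded envelopes, which gives uniform convergence.

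\textbf{Covering numbers.} First, the threshold part. The family $\{x \mapsto \mathbf{1}(w^\top x \geq s)\}$ is the class of affine halfspaces, which has VC dimension $d+1$. Its $L_2(Q)$ covering numbers are therefore at most $(C/\eps)^{O(d)}$ uniformly over $Q$ (Haussler's bound).

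Next, the multiplicative factors. The factor $y\langle u, x^{\otimes j}\rangle$ with $u \in (\sd)^{\otimes j}$ is linear in $u$, which lives in a space of dimension $d^j$. For $j=2$ we would rather not pay $d^2$. Since only operator-type test directions are needed, we can restrict to rank-one $u = u_1\otimes u_2$. This is compatible with how the class is used: operator norms are suprema over rank-one tensors. Then $|y\langle u-u',x^{\otimes j}\rangle| \le |y|\,\|x\|^j\,\|u-u'\|$, so an $\eps/(\kappa_x^{j+1} d^{j/2})$-net of the sphere(s) suffices in $L_2(P)$. Here we use Cauchy--Schwarz together with the moment bounds on $\|x\|$ and $y$; for $j=2$ and for the product with $y$ this requires fourth moments, which follow from subgaussianity. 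The net has size $(C\kappa_x d/\eps)^{O(d)}$.

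For $\mathcal{G}'$ and $\mathcal{G}''$, the extra factor $\sigma(\xi^\top x)/(1+\|\xi\|)$ is handled through the bounded-variation structure. Write
\[
\sigma(z) = \sigma(0) + \sigma'(0)z + \int (z-s)_+\,d\sigma''(s) + \int (s-z)_+\,d\sigma''(s)
\]
(splitting into $s>0$ and $s<0$). This expresses $\sigma(\xi^\top x)$ as a mixture, with total mass $\le C$, of functions of the form $(\xi^\top x - s)\mathbf{1}(\xi^\top x\ge s)$. The tail-variation hypothesis $\mathrm{Var}(\sigma')_{[t,\infty)}\le C/(1+t^{1/(Cd)})$ lets us truncate the mixture at $|s| \le T$ with $T = (C/\eps)^{Cd}$, at cost $\eps$; this is exactly why $\log T = O(Cd\log(1/\eps))$ appears in the bound. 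On $[-T,T]$ we discretize $s$ to a grid of polynomial size in $T/\eps$. Then each element becomes, up to $\eps$, a product of a linear function in $x$, a halfspace indicator, and a bounded weight. Products of finitely many classes with covering numbers $(A/\eps)^{O(d)}$ again have such covering numbers, after rescaling by envelopes. The normalization $1/(1+\|\xi\|)$ keeps the envelope at $C(1+\|x\|)$, since $|\sigma(\xi^\top x)| \le C(1+\|\xi\|\|x\|)$. This gives $\log N \le \Theta(Cd\log(\kappa_x C d/\eps))$.

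\textbf{Uniform convergence.} With the covering numbers in hand, we truncate at $\|x\|,|y| \le \kappa_x\sqrt{d\log(n/\delta)}$ (all $n$ samples land here with probability $1-\delta/2$, by subgaussianity), and we bound the bias from truncation in expectation by $O(1/n)$ using tails. On the truncated class the envelope is $B = \mathrm{poly}(\kappa_x)\,d\log(n/\delta)$, and we apply Dudley's entropy integral together with a bounded-differences (Talagrand/McDiarmid) inequality. This yields a deviation of
\[
B\sqrt{\frac{Cd\log(\kappa_x C d n)}{n}} + B\sqrt{\frac{\log(1/\delta)}{n}}.
\]
Absorbing terms under $n\ge d$ produces the stated $C\kappa_x^2\sqrt{Cd\log^2(d/\delta)/n}$. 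Matching the exact power of logs may require a sharper truncation (e.g.\ using $L_2$ rather than sup envelopes via Bernstein-type chaining).

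\textbf{Main obstacle.} The hard step is the covering of $\mathcal{G}'$ and $\mathcal{G}''$, where $\sigma(\xi^\top x)$ has unbounded argument. The key is showing that the tail-variation condition makes the truncation scale $T$ only $\exp(O(Cd\log(1/\eps)))$, so that the dimension-dependent exponent does not blow up the entropy. I would try the mixture-of-ramps representation first, and fall back to a direct Lipschitz-in-$\xi$ net on the bounded region $\|\xi\|\le T$ if the products become messy.
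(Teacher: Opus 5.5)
Your handling of $\mathcal{G}^{(j)}$ is fine: you use a halfspace VC class times a Lipschitz-in-$u$ net, where the paper uses a Warren-based VC-subgraph count. Rank-one $u$ is in fact already the paper's meaning of $(\sd)^{\otimes 2}$. The gap is the covering bound for $\mathcal{G}'$ and $\mathcal{G}''$, which you rightly call the crux: it does not follow from the mixture-of-ramps route. (The ramps should also be integrated against $d\sigma'(s)=\sigma''(s)\,ds$, not $d\sigma''$.) After truncating to $|s|\le T$ and discretizing, an element of $\mathcal{G}'$ is, up to $\eps$,
\[
\frac{(u^\top x)\,\mathbf{1}(w^\top x\ge s_0)}{1+\|\xi\|}\Big(\sigma(0)+\sigma'(0)\,\xi^\top x+\sum_{k=1}^{K}c_k\,\phi_k(\xi^\top x)\Big).
\]
Here $\phi_k$ is the ramp kinked at grid point $s_k$, and $K\gtrsim T/\eps$, which is exponential in $d$.

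This is a sum of $K$ products, not a product of finitely many classes. Covering the summands separately gives $\log N\gtrsim K\,d\log(1/\eps)$. Viewing the sum as a convex hull of a VC class gives entropy polynomial in $1/\eps$. Neither yields $O(Cd\log(\kappa_xCd/\eps))$. What keeps the class small is that all $K$ ramps share the one parameter $\xi$. That sharing cannot be exploited through separate VC covers of the halfspaces $\mathbf{1}(\xi^\top x\ge s_k)$. So you must net over $\xi$ directly, and the difficulty becomes the unbounded range of $\xi$.

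Your fallback, a Lipschitz net in $\xi$ on $\|\xi\|\le T$, points the right way but omits both ingredients the paper relies on.
\begin{itemize}
\item \textbf{Large $\|\xi\|$.} Something must dispose of $\|\xi\|>T$. The paper writes $\xi=r\bar\xi$ and uses the tail-variation hypothesis to show $\bigl|\sigma(ry)/(1+r)-\lim_{r'\to\infty}\sigma(r'y)/(1+r')\bigr|\le 3C(1+|y|)/(1+r^{1/(Cd)})$. Hence any $r\ge R(\eps)=(O(C\kappa_x\sqrt d/\eps))^{Cd}$ is $\eps$-close in $L_2(P)$ to $r=R(\eps)$. Your $s$-truncation could supply this too, since the truncated profile is affine outside $[-T,T]$, but the proposal never makes that argument.
\item \textbf{Net size.} A uniform net on a ball of radius $R$ in $\R^d$ has log-size $d\log(R/\eps)=\Theta(Cd^2\log(\cdot))$, a factor $d$ too large. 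The paper nets $\bar\xi\in\sd$ and $r\in[0,R]$ separately. Because of the $1/(1+\|\xi\|)$ normalization, the $L_2(P)$-Lipschitz constants in $r$ and in $\bar\xi$ are bounded uniformly in $r$. The costs $d\log(C\kappa_x/\eps)$ and $\log(RC\kappa_x/\eps)=O(Cd\log(\cdot))$ therefore add rather than multiply. The same applies to both factors in $\mathcal{G}''$.
\end{itemize}

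Separately, your uniform-convergence step has a quantitative problem. Putting a sup-envelope $B$ that grows polynomially in $d$ into Dudley's bound loses powers of $d$, not just log factors, and $n\ge d$ cannot absorb them. The paper instead runs Dudley over the $L_2$ diameter $O(C\kappa_x^2)$ and uses Adamczak's inequality. The envelope then appears only in a lower-order term of size about $dC\kappa_x^2\log(1/\delta)/n$.
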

\begin{proof}
First we prove that $\mathcal{G} := \mathcal{G}^{(1)}$ is a VC subgraph of VC dimension $\Theta(d)$. It suffices to bound the VC dimension of the subgraph class
\[
    \operatorname{subgraph}(\mathcal G)
    :=
    \left\{
    (x,t)\in \mathbb R^d\times \mathbb R : t < g(x)
    :
    g\in \mathcal G
    \right\}.
\]
Fix points $((x_i, y_i),t_i)\in \mathbb R^d\times \mathbb R$, $i=1,\dots,n$. For fixed
parameters $(u,w,s)$, the label of $(x_i,t_i)$ is determined by whether
\[
    t_i < y_i(u^\top x_i)\mathbf 1\{w^\top x_i\ge s\}.
\]
which occurs if and only if
\[
    \bigl(w^\top x_i-s\ge 0 \ \text{and}\ y_i u^\top x_i-t_i>0\bigr)
    \quad\text{or}\quad
    \bigl(w^\top x_i-s<0 \ \text{and}\ t_i<0\bigr).
\]
Thus, once the sample points are fixed, the induced labeling is determined by
the signs of the $2n$ affine functions
\[
    P_i(u,w,s):=w^\top x_i-s,
    \qquad
    Q_i(u,w,s):=u^\top y_i x_i-t_i,
    \qquad i=1,\dots,n.
\]
These are affine functions of the $2d+1$ real parameters $(u,w,s)$. Since
relaxing the constraint $u \in \sd$ to $u\in \mathbb R^d$ can only increase
the number of sign patterns, Warren's theorem implies that the number of
possible labelings of the $n$ fixed points is at most
$$\left(\frac{8en}{2d+1}\right)^{2d+1}$$
provided $2n\ge 2d+1$; the remaining case is trivial. If the $n$ points were shattered, then all $2^n$ labelings would be realized.
Therefore
\[
    2^n
    \le
    \left(\frac{8en}{2d+1}\right)^{2d+1}.
\]
Writing $k=2d+1$ and $r=n/k$, this gives $2^r \le 8er$.
The latter inequality fails for all $r\ge 7.5$. Hence $n < 7.5(2d+1) \le 24d$
for $d\ge 1$. Thus no set of more than $24d$ points can be shattered, and so
$\on{VCsubgraph}(\mathcal G) \le 24d$, as desired. The bound $\exp\left(\Theta\left(d \log(\kappa_x C d/\eps)\right)\right)$ on the covering numbers now follows immediately from  \cite[Theorem 2.6.7]{vaart2023empirical}, with envelope function $\|yx\|$. 

The proof for $\mathcal{G}^{(2)}$ is nearly identical. The only difference is that  once the sample points are fixed, the induced labeling is determined by
the signs of the $2n$ degree $\leq 2$ functions
\[
    P_i(u,v,w,s):=w^\top x_i-s,
    \qquad
    Q_i(u,v,w,s):=(u^\top x_i)(v^{\top} x_i)-t_i, 
\]
These are polynomial functions of the $3d+1$ real parameters $(u,v,w,s)$. Warren's theorem gives the same result but with a slightly worse constant.

Now to bound the covering numbers of $\mathcal{G}'$, we use the net $N_{\mathcal{G}}$ for $\mathcal{G}$ with $y \equiv 1$, at scale $\eps/3$, as a starting point. Write $\xi = r\bar{\xi}$, where $r \in \R_{+}$, and $\bar{\xi} \in \sd$. Now create nets $N_{\sd}$ and $N_{\R_+}$ over $\sd$ and $[0, R(\eps)]$ respectively of spacing $\frac{\eps}{4C\kappa_x^2}$. We claim that for $R(\eps)$ large enough, $N_{\mathcal{G}} \times N_{\sd} \times N_{\R_+}$ is a covering net for $\mathcal{G}'$ at scale $\eps$. Indeed, for $r \leq r' \in \R_+$, we have
\begin{align}
    \mathbb{E}_x (g_{u, w, r\bar{\xi}, s}(x) - g_{u, w, r'\bar{\xi}, s}(x))^2 &\leq \mathbb{E}_x \sup_{\tilde{r} \in [r, r']}\frac{4}{(1 + \tilde{r})^4}\left((x^{\top}u)\mathbf{1}(w^{\top}x) \geq s)\sigma(\tilde{r}\bar{\xi}^{\top}x) (r - r')\right)^2\\
    &\qquad + \mathbb{E}_x \sup_{\tilde{r} \in [r, r']}\frac{4}{(1 + \tilde{r})^4}\left((x^{\top}u)\mathbf{1}(w^{\top}x) \geq s)\sigma'(\tilde{r}\bar{\xi}^{\top}x)(\bar{\xi}^{\top}x)(r - r')\right)^2\\
    &\leq O\left(\kappa_x^4C^2(r-r')^2\right).
\end{align}
Similarly for $\bar{\xi}, \bar{\xi}' \in \sd$, 
\begin{align}
    \mathbb{E}_x (g_{u, w, r\bar{\xi}, s}(x) - g_{u, w, r\bar{\xi}', s}(x))^2 &\leq \frac{1}{(1 + r)^2}\mathbb{E}_x \sup_{\tilde{\xi}, v \in \sd}\left((x^{\top}u)\mathbf{1}(w^{\top}x) \geq s)\sigma'(r\bar{\xi}^{\top}x) (r\|\bar{\xi} - \bar{\xi}'\|v^{\top}x)\right)^2\\
    &\leq O(\kappa_x^4C^2\|\bar{\xi} - \bar{\xi}'\|^2).
\end{align}
Finally, with for any $u, w, \bar{\xi}, s, x$, we have that $g_{u, w, r\bar{\xi}, s}(x)$ reaches a limit as $r \rightarrow \infty$ in the following quantitative sense:
\[
    g_{u, w, r\bar{\xi}, s}(x) - g_{u, w, r'\bar{\xi}, s}(x) \leq (u^{\top}x)\mathbf{1}(w^{\top}x \geq s)\frac{3C(1 + \|x\|)}{1 + \min(r, r')^\frac{1}{C d}}.
\]

Indeed, for any $y \in \R_+$ (a similar argument holds for $y \leq 0$), with $\sigma(\infty) := \lim_{t \rightarrow \infty} \sigma'(t)$, we have
\begin{align}
\left|\frac{\sigma(ry)}{1 + r} - \lim_{r \rightarrow \infty} \frac{\sigma(ry)}{1 + r}\right| &\leq \left|\frac{\sigma(0)}{1 + r}\right| + \frac{1}{1 + r}\int_{t = 0}^{ry}\left|\sigma'(t) - \sigma'(\infty)\right|dt - \left|\frac{y\sigma'(\infty)}{1 + r}\right|\\
&\leq \frac{C(1 + y)}{1 + r} + \frac{\int_{t = 0}^{yr}\frac{C}{1 + t^{\frac{1}{C d}}}dt}{1 + r}\\
&\leq  \frac{C(1 + y)}{1 + r} + \frac{2C (yr)^{1 - \frac{1}{C d}}}{(1 + r)} \leq \frac{3C(1 + y)}{1 + r^{\frac{1}{C d}}}.
\end{align}
Thus it suffices to choose $R(\eps) \geq \left(O(C \kappa_x\sqrt{d}/\eps)\right)^{C d}$, such that for any $r'$ there exists an $r$ in the net such that $\| g_{u, w, r\bar{\xi}, s} -  g_{u, w, r\bar{\xi}, s}\|_{L_2(P)} \leq \eps$. We thus have that \begin{align}
    \log(|N_{\mathcal{G}} \times N_{\sd} \times N_{\R_+}|) \leq \log(N_{\mathcal{G}}) + O(d \log(C \kappa_x /\eps)) + O(\log(C \kappa_x/\eps) + C d\log(C \kappa_x d/\eps)) \leq O(C d \log(dC \kappa_x/\eps)), 
\end{align}
as desired. The argument for $\mathcal{G}''$ is similar, though we do not need to use the original $N_{G}$. It suffices create nets over $(\sd)^2 \times [0, R(\eps)]^2$ as above, and check the Lipschitzness as above.

The rest of the lemma on uniform convergence follows from standard arguments from empirical process theory. For the high probability uniform convergence bound, first we use symmetrization to show that 
\begin{align}
    \mathbb{E}\left[ \sup_{g \in \bar{\mathcal{G}}} \left|\frac{1}{n}\sum_i g(x_i) - \mathbb{E}_{x \sim P}g(x) \right|\right] \leq 2\mathcal{R}_n(\bar{\mathcal{G}}),
\end{align}
where $\mathcal{R}_n$ denotes the $n$-sample Rademacher complexity. Now by Dudley's entropy integral bound (see eg. \cite[Theorem 5.22]{wainwright2019high}, since $\sup_{g \in \bar{\mathcal{G}}} \|g\|_{L_2(P)} \leq O(C \kappa_x^2)$ we have that
\begin{align}
\mathcal{R}_n(\bar{\mathcal{G}}) \leq \Theta\left(C \kappa_x^2\sqrt{\frac{C d\log(d)}{n}}\right).
\end{align}




Now to upgrade to a high probability bound on the uniform convergence, using the concentration in equality in \cite[Theorem 4]{adamczak2008tail}, yields 
\begin{align}
     \mathbb{P}\left[\sup_{g \in \bar{\mathcal{G}}}|P_n g - \mathbb{E} g| \geq 2\mathbb{E} \sup_{g \in \mathcal{G}}|P_n \tilde{g} - \mathbb{E} \tilde{g}| + t\right] \leq 2\exp\left(-\frac{t^2 n}{4s^2}\right) + 2\exp\left(-\frac{\Theta(tn)}{\kappa_x^2 d C}\right),
\end{align}
where $s^2 := \sup_{g \in \bar{\mc{G}}} \mathbb{E}g(x)^2 \leq C^2 \kappa_x^4$. Plugging in 
\begin{align}
    t = \frac{\Theta(d \kappa_x^2 C \log(1/\delta))}{n} + \sqrt{\frac{4C^2 \kappa_x^2 \log(1/\delta)}{n}},
\end{align}
yields that with probability at most $\delta$, for $n \geq d$,
\begin{align}
    \sup_{g \in \bar{\mathcal{G}}}|P_n g - \mathbb{E} g|\leq \Theta\left(C \kappa_x^2\sqrt{\frac{C d\log(d)}{n}} +  \frac{d \kappa_x^2 C \log(1/\delta)}{n} + \sqrt{\frac{4C^2 \kappa_x^2 \log(1/\delta)}{n}}\right) \leq \Theta\left(C \kappa_x^2 \sqrt{\frac{C d\log^2(d/\delta)}{n}}\right).
\end{align}

\end{proof}

\begin{lemma}\label{lemma:bv_vc}
Suppose $\sigma$ satisfies the conditions of Lemma~\ref{claim:threshold_vc} for some $C$, and $\phi$ has total variation and $|\phi|_{\infty}$ bounded by $C$. Further suppose $x \sim P_x$ is $\kappa_x$ subgaussian, and $y$ is $\kappa_x$-subgaussian, and $\|z\|$ is $\kappa_z$-subgaussian with $z \sim P_z$. Let
\begin{align}
    \mathcal{F} &:= \{f_{u, w}((y, x)) := \sigma'(w^{\top}x)(u^\top x) : u \in \sd, w \in \R^d\}\\
    \mathcal{F'} &:= \{f_{u, w, \xi}(x) := \frac{1}{1 + \|\xi\|}\sigma'(w^{\top}x)(u^\top x)\sigma(\xi^\top x) : u \in \sd, w, \xi \in \R^d\}\\
    \mathcal{H}^{(j)} &:= \{h_{u, w}(z) := \mathbb{E}_{x \sim P_x} \phi(w^{\top}x)\langle{u, x^{\otimes j}\rangle}\sigma(z^\top x) : u \in (\sd)^{\otimes j}, w \in \R^d\}
\end{align}
Then for any $\delta > 0$, with probability at least $1 - \delta$, with $x_i$ drawn i.i.d. from $P_x$, for $\bar{\mathcal{F}} \in \{\mathcal{F}, \mathcal{F}'\}$, we have for $n \geq d$,
\begin{align}
    \sup_{f \in \bar{\mc{F}}} \left|\frac{1}{n} \sum_{i = 1}^n f(x_i) - \mathbb{E}_{x \sim P_x} f(x)\right| \leq \Theta\left(C^2 \kappa_x^2 \sqrt{\frac{C d\log^2(d/\delta)}{n}}\right),
\end{align}
and with $z_i$ drawn i.i.d. from $P_z$, for $j \in \{1, 2\}$,
\begin{align}
    \sup_{h \in \mc{H}^{(j)}} \left|\frac{1}{n} \sum_{i = 1}^n h(z_i) - \mathbb{E}_{z \sim P_z} h(z)\right| \leq \Theta\left(C \kappa_x^{1 + j}\kappa_z \sqrt{\frac{C d\log^2(Cd/\delta)}{n}}\right)
\end{align}
\end{lemma}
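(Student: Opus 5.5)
The plan is to follow the template of the proof of Lemma~\ref{claim:threshold_vc}. First bound the $L_2$ covering numbers of each class by $\exp(\Theta(Cd\log(\kappa_x C d/\eps)))$. Then convert these into a high-probability uniform deviation bound via symmetrization, Dudley's entropy integral, and the Adamczak tail inequality, with the envelope controlled by subgaussianity. The classes $\mathcal{F}$ and $\mathcal{F}'$ are handled by reduction to the threshold classes $\mathcal{G}^{(1)}$ and $\mathcal{G}'$. The classes $\mathcal{H}^{(j)}$, which are indexed by $(u,w)$ but evaluated at the random point $z$, need a separate Lipschitz/net argument.

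\emph{Step 1: reducing $\mathcal{F}$ and $\mathcal{F}'$ to threshold classes.} Since $\sigma'$ has bounded total variation, write $\sigma'(a)=\sigma'(-\infty)+\int \mathbf{1}(a\geq s)\,d\sigma''(s)$ with $\int|d\sigma''|\leq C$. Then each $f_{u,w}$ lies in $C$ times the symmetric closed convex hull of $\{(u^\top x)\mathbf{1}(w^\top x\geq s)\}\cup\{u^\top x\}$, i.e.\ of $\mathcal{G}^{(1)}$ with $y\equiv1$. Similarly, each $f_{u,w,\xi}$ lies in $C$ times the convex hull of $\mathcal{G}'$. Rademacher complexity is invariant under taking convex hulls, so $\mathcal{R}_n(\bar{\mathcal{F}})\leq C\,\mathcal{R}_n(\mathcal{G}^{(1)}\text{ or }\mathcal{G}')$. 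Dudley's integral with the covering bounds of Lemma~\ref{claim:threshold_vc} then gives $\mathcal{R}_n\lesssim C^2\kappa_x^2\sqrt{Cd\log d/n}$. Symmetrization controls the expected supremum, and the Adamczak inequality (variance proxy $s^2\lesssim C^4\kappa_x^4$, envelope $C\|x\|(1+|\sigma|/(1+\|\xi\|))$ with $\psi_1$ norm $\lesssim C\kappa_x^2 d$) upgrades it to probability $1-\delta$. This yields the claimed $C^2\kappa_x^2\sqrt{Cd\log^2(d/\delta)/n}$, using $n\geq d$ to absorb the $d\log(1/\delta)/n$ term.

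\emph{Step 2: the classes $\mathcal{H}^{(j)}$.} Here $h_{u,w}(z)=\mathbb{E}_x\phi(w^\top x)\langle u,x^{\otimes j}\rangle\sigma(z^\top x)$. Since $\sigma$ grows at most linearly, $|h_{u,w}(z)|\lesssim C^2\kappa_x^{1+j}(1+\|z\|)$, giving an envelope that is $C^2\kappa_x^{1+j}\kappa_z$-subgaussian. For the covering bound in $L_2(P_z)$:
\begin{itemize}
\item Lipschitzness in $u$ is immediate: $|h_{u,w}-h_{u',w}|\lesssim C\kappa_x^{1+j}(1+\|z\|)\|u-u'\|$.
\item For $w$, $\phi$ is only BV, so write $w=r\bar w$. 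Decompose $\phi(r\bar w^\top x)=\int\mathbf{1}(r\bar w^\top x\geq s)\,d\phi(s)$ plus a constant, which turns the dependence into thresholds $\mathbf{1}(\bar w^\top x\geq s/r)$.
\item Control the change in $h$ from perturbing $(\bar w,s/r)$ by the probability mass of the $x$-slab between two hyperplanes, weighted by $|\langle u,x^{\otimes j}\rangle\sigma(z^\top x)|$. By Cauchy--Schwarz this is at most $\sqrt{P(\text{slab})}\cdot C\kappa_x^{1+j}(1+\|z\|)$.
\end{itemize}
\textbf{Main obstacle.} $P(\text{slab})$ need not be small for thin slabs, since $P_x$ may have atoms or lack anti-concentration. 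My first attempt is to avoid pointwise Lipschitzness entirely. Swapping expectations, $\frac1n\sum_i h(z_i)-\mathbb{E}h=\mathbb{E}_x\phi(w^\top x)\langle u,x^{\otimes j}\rangle\,G_n(x)$, where $G_n(x):=\frac1n\sum_i\sigma(z_i^\top x)-\mathbb{E}_z\sigma(z^\top x)$. Taking the supremum over $u,w$ and applying Cauchy--Schwarz in $x$ bounds this by $C\kappa_x^{1+j}\|G_n\|_{L^2(P_x)}$. $G_n$ is an average of i.i.d.\ mean-zero $L^2(P_x)$-valued variables, with norm proxy $\|\sigma(z^\top\cdot)\|_{L^2(P_x)}\lesssim C\kappa_x(1+\|z\|)$. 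A Hilbert-space Pinelis-type inequality, as used in Lemma~\ref{lemma:concentration_new}, combined with truncation at $\kappa_z\sqrt{\log(n/\delta)}$, then gives $\|G_n\|\lesssim C\kappa_x\kappa_z\sqrt{\log(1/\delta)/n}$ with probability $1-\delta$.

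This route bypasses covering in $(u,w)$ and is at least as strong as the claimed $\kappa_x^{1+j}\kappa_z\sqrt{Cd\log^2(Cd/\delta)/n}$. If the Cauchy--Schwarz step loses too much because $\phi\langle u,x^{\otimes j}\rangle$ is not uniformly $L^2$-bounded, I would fall back to the VC/Warren argument from Lemma~\ref{claim:threshold_vc}, applied to the slab-indexed threshold functions.
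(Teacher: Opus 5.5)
Your proposal is correct, and for $\mathcal{H}^{(j)}$ it takes a genuinely different route from the paper.

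\textbf{The classes $\mathcal{F}$ and $\mathcal{F}'$.} Here your argument is essentially the paper's. The paper also writes $\sigma'$ as a constant plus a $C$-bounded mixture of thresholds. It is more economical afterwards: it bounds $|P_nf-Pf|\le(|A|+C)\sup_{g}|P_ng-Pg|$ pathwise and quotes the high-probability bound of Lemma~\ref{claim:threshold_vc} for $\mathcal{G}^{(1)}$ and $\mathcal{G}'$ directly. So recomputing Rademacher complexities of convex hulls and re-running the Adamczak step with a new envelope is unnecessary.

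\textbf{The classes $\mathcal{H}^{(j)}$.} The paper decomposes $\phi$ into thresholds, giving a threshold version of $\mathcal{H}^{(j)}$. It then pushes the VC-based $L_2(P_x)$ covers of $\mathcal{G}^{(j)}$ forward to $L_2(P_z)$ covers of that class. The map used is $Tg(z)=\mathbb{E}_x g(x)\sigma(z^\top x)$, which is Lipschitz by the same Cauchy--Schwarz step you use. This is exactly how the paper sidesteps the anti-concentration obstacle you flagged, so your fallback is the paper's proof. It then runs symmetrization, Dudley and Adamczak, paying a $\sqrt{d}$ and extra log factors.

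Your duality argument is cleaner. Since $h_{u,w}(z)=\langle a_{u,w},\sigma(z^\top\cdot)\rangle_{L^2(P_x)}$, the supremum is at most $\sup_{u,w}\|a_{u,w}\|_{L^2(P_x)}\,\|G_n\|_{L^2(P_x)}$. This needs a single Hilbert-space concentration bound, as in Lemma~\ref{lemma:concentration_new}. It uses only $|\phi|_\infty\le C$, with no BV property and no covering in $(u,w)$, and it gives a bound with no $\sqrt{d}$.

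Your worry about the Cauchy--Schwarz step is unfounded. Here $u$ is a rank-one unit tensor, so $\mathbb{E}\langle u,x^{\otimes j}\rangle^2\lesssim\kappa_x^{2j}$ by subgaussianity, and the fallback is never needed.

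The $C^2$ prefactor you obtain is no worse than what the paper's own chain produces. There, the factor $2C$ from decomposing $\phi$ multiplies a $C^{3/2}$ bound, and such polynomial-in-$C$ factors are absorbed downstream anyway.

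\textbf{A bookkeeping point.} Truncating at $\kappa_z\sqrt{\log(n/\delta)}$ and applying the bounded-increment Pinelis inequality costs an extra $\sqrt{\log(n/\delta)}$ factor. To get $\sqrt{\log(1/\delta)/n}$ as you claim, use a Bernstein-type Hilbert-space inequality (Pinelis--Sakhanenko/Yurinskii). It only needs moment growth of $\|\sigma(z^\top\cdot)\|_{L^2(P_x)}\lesssim C\kappa_x(1+\|z\|)$, so truncation becomes unnecessary and only a lower-order $\log(1/\delta)/n$ term remains.
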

\begin{proof}
First observe that because $\sigma$ has total variation bounded by $C$, we can write 
\begin{align}\label{eq:phi_bv}
    \sigma(y) = A + \int_{s = -\infty}^{\infty}\mathbf{1}(y \geq s)\mu(ds),
\end{align}
for some constant $A \in [-C, C]$, and some signed measure $\mu$ with $\int_{s = -\infty}^{\infty}|\mu(ds)| \leq C$. Thus for $\bar{\mathcal{F}} \in \{\mathcal{F}, \mathcal{F}'\}$, we have
\begin{align}
    \sup_{f \in \bar{\mc{F}}}& \left|\frac{1}{n} \sum_{i = 1}^n f(x_i) - \mathbb{E}_{x \sim P_x} f(x)\right| \leq (A + C)\sup_{s \in \R, u \in \sd, w \in \R^d} \left|\frac{1}{n} \sum_{i = 1}^n g_{u, w, s}(x_i) - \mathbb{E}_{x \sim P_x} g_{u, w, s}(x)\right|,
\end{align}
for $g_{u, w, s}$ defined in Lemma~\ref{claim:threshold_vc}. Using the result of Lemma~\ref{claim:threshold_vc} then yields that with probability $1 - \delta$,
\begin{align}
    \sup_{f \in \mc{F}}& \left|\frac{1}{n} \sum_{i = 1}^n f(x_i) - \mathbb{E}_{x \sim P_x} f(x)\right| \leq \Theta\left(C^2 \kappa_x^2 \sqrt{\frac{C d\log^2(d/\delta)}{n}}\right).
\end{align}

For $\mc H := \mc H^{(1)}$, the proof is slightly more complicated. Define
\begin{align}
    h'_{u, w, s}(z) := \mathbb{E}_{x \sim P_x}(u^\top x)\mathbf{1}(w^\top x \geq s)\sigma(z^{\top}x),
\end{align}
and $\mc{H}' = \{h'_{u, w, s}: u \in \sd, w \in \R^d, s \in \R\}$. Then using the decomposition of $\phi$ in \eqref{eq:phi_bv}
\begin{align}\label{eq:HtoHprime}
    \sup_{h \in \mc{H}} \left|\frac{1}{n} \sum_{i = 1}^n h(z_i) - \mathbb{E}_{z \sim P_z} h(z)\right| \leq 2C\sup_{h' \in \mc{H}'}\left|\frac{1}{n} \sum_{i = 1}^n h'(z_i) - \mathbb{E}_{z \sim P_z} h'(z)\right|.
\end{align}
For $\mc{G}^{(1)}$ defined in Lemma~\ref{claim:threshold_vc}, define the linear operator $T : \mc G^{(1)} \rightarrow \mc H'$ by
\begin{align}
    T g_{u, w, s}(z) := \mathbb{E}_{x \sim P_x} g_{u, w, s}(x)\sigma(z^\top x).
\end{align}
Then by Cauchy-Schwartz, 
\begin{align}
    \|T g - T g'\|^2_{L_2(P_z)} &= \mathbb{E}_{z \sim P_z} \left(\mathbb{E}_{x \sim P_x}(g(x)-g'(x))\sigma(z^\top x)\right)^2 \\
    &\leq \mathbb{E}_{z \sim P_z} \mathbb{E}_{x \sim P_x}(g(x)-g'(x))^2 \mathbb{E}_{x \sim P_x}\sigma(z^\top x)^2\\
    &\leq \|g - g'\|^2_{L_2(P_x)}C^2\mathbb{E}_z\mathbb{E}_x (1 + |z^\top x|)^2\\
    &\leq \|g - g'\|^2_{L_2(P_x)} C^2(1 + \kappa_x \mathbb{E}_z \|z\|)^2\\
    &\leq 2\|g - g'\|^2_{L_2(P_x)} C^2 \kappa^2_x \kappa^2_z.
\end{align}
It follows from Lemma~\ref{claim:threshold_vc} that 
\begin{align}
    \log N(\eps, \mc H', L_2(P_z)) &\leq \log N(\eps/(2C\kappa_x\kappa_z), \mc G^{(1)}, L_2(P_x))\\
    &\leq Cd \log(2C\kappa_x \kappa_z d/\eps).
\end{align}
The remainder of the proof is standard, and uses the same steps as the uniform convergence bound in the proof of Lemma~\ref{claim:threshold_vc}: symmetrization, Dudley, and the concentration bound from \cite[Theorem 4]{adamczak2008tail}. Since $\sup_{h' \in \mathcal{H}'} \|h'\|_{L_2(P_z)} \leq O(C \kappa_x^2\kappa_z)$, we have $\mathcal{R}_n(\mathcal{H}') \leq \Theta\left(C \kappa_x^2\kappa_z\sqrt{\frac{C d\log(d)}{n}}\right).$

The result is that with probability $1 - \delta$, 
\begin{align}
    \sup_{h' \in \mc{H}'}\left|\frac{1}{n} \sum_{i = 1}^n h'(z_i) - \mathbb{E}_{z \sim P_z} h'(z)\right| \leq \Theta\left(C \kappa_x^2\kappa_z \sqrt{\frac{C d\log^2(C d/\delta)}{n}}\right).
\end{align}


Returning to \eqref{eq:HtoHprime}, this proves the lemma.

The proof of $\mathcal{H}^{(2)}$ is identical, we simply need to use the appropriate covering number bound for $\mc{G}^{(2)}$ from Lemma~\ref{claim:threshold_vc}, and we lose a factor of $\kappa_x$ due to the extra linear-in-$x$ term.
\end{proof}

\begin{lemma}\label{lemma:regtouc}
Suppose Assumption~\ref{assm:reg} holds. Then for $n \geq d$, Assumption~\ref{uc:n} holds for $\eps_n = \frac{\creg^5 \sqrt{d}\log^2(dn)}{\sqrt{n}}$, and with $\creg$ a polynomial factor in the $\creg$ from Assumption~\ref{assm:reg}.
Further, Assumption~\ref{uc:m} holds for with $\eps_m = \frac{\creg^6 \sqrt{d}\log(dm)}{\sqrt{m}}$.
\end{lemma}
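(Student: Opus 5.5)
The statement to prove is Lemma~\ref{lemma:regtouc}. The plan is to reduce both uniform-convergence assumptions to Lemmas~\ref{claim:threshold_vc} and \ref{lemma:bv_vc}, with $C$ a polynomial in $\creg$.

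\textbf{Assumption~\ref{uc:n}.} First check the hypotheses of Lemma~\ref{claim:threshold_vc} from \ref{assm:sigma} and \ref{assm:data}. These are: $\sigma'$ of bounded total variation with the stated tail-variation decay, $\mathbb{E}\|x\|^2=d$, and subgaussian $x$ and $y$, so $\kappa_x=\on{poly}(\creg)$. Then expand $\sigma'(w^\top x)=A+\int \mathbf 1(w^\top x\ge s)\mu(ds)$ with $|A|,\|\mu\|_{TV}\le\creg$, as in \eqref{eq:phi_bv}. This writes each $f_{w,w',v}$ in Assumption~\ref{uc:n} as a $\mu$-mixture of two kinds of terms:
\begin{itemize}
\item the term $y(v^\top x)\mathbf 1(w^\top x\ge s)\in\mathcal G^{(1)}$;
\item the term $\frac{1}{1+\|w'\|}\sigma(w'^\top x)(v^\top x)\mathbf 1(w^\top x\ge s)\in\mathcal G'$.
\end{itemize}
The sup deviation of the mixture is therefore at most $2\creg$ times the sup deviations over $\mathcal G^{(1)}$ and $\mathcal G'$. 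The class $\mathcal F'$ of Assumption~\ref{uc:n} is exactly $\mathcal G''$.

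The scalar term $(f^*(x)-y)^2$ is handled by a single Bernstein bound, since it is subexponential. The Hessian bounds on $K_{\hat\md}$ follow in the same way. Write $\nabla_w\nabla_{w'}K_{\hat\md}=\mathbb{E}_{\hat\md}\sigma'\sigma' xx^\top$ and apply the $\mathcal G^{(2)}$ deviation bound with $y\equiv\sigma'(w'^\top x)$ expanded once more. This gives $\|\nabla_w\nabla_{w'}K_{\hat\md}\|\le\|\nabla_w\nabla_{w'}K\|+\on{poly}(\creg)\eps_n$, and Assumption~\ref{assm:smoothness} bounds the population term. Taking $\delta=n^{-\Theta(1)}$ in Lemma~\ref{claim:threshold_vc} gives deviation $\on{poly}(\creg)\sqrt{d}\log(dn)/\sqrt n$, which is within $\eps_n$ after enlarging $\creg$ polynomially.

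\textbf{Assumption~\ref{uc:m}.} The class $\mathcal F$ of Assumption~\ref{uc:m} is $z\mapsto v^\top\nabla_wK(w,z)=\mathbb{E}_x\sigma'(w^\top x)(v^\top x)\sigma(z^\top x)$, which is exactly $\mathcal H^{(1)}$ with $\phi=\sigma'$. The class $\mathcal F'$ expands by the product rule through $P_w$ into three pieces:
\begin{itemize}
\item an $\mathcal H^{(2)}$ term with $\phi=\sigma''$;
\item terms of the form $\langle\cdot,w\rangle\cdot\mathcal H^{(1)}$;
\item $w$-dependent projection coefficients.
\end{itemize}
On the sphere the projection coefficients are bounded by $2$, and in the Euclidean case $P_w=I$ removes them. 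So Lemma~\ref{lemma:bv_vc} applies with $C=\creg$, $\kappa_z$ the subgaussian parameter of $\rho$, and $\delta=\eps_m^2m^{-\Theta(1)}/(1+t)^2$. Since $\log(1/\delta)\lesssim\log(dm)+\log(1+t)$, the resulting bound is $\on{poly}(\creg)\kappa_z\sqrt d(\log(dm)+\log(1+t))/\sqrt m\le\eps_m\kappa_z\log(1+t)$. This last inequality needs $\log(1+t)\gtrsim1$; for small $t$ I will absorb the constant by replacing $\log(1+t)$ with $\max(1,\log(1+t))$, or by enlarging the constant in $\eps_m$.

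\textbf{Main obstacle.} The spherical case is the step I expect to be hardest. There \ref{assm:sigma} gives only fourth-moment bounds on $\sigma^{(j)}$, not bounded variation, so the threshold decomposition is unavailable. In that case I will instead discretize $w,v$ on $\sd$ with nets. Lipschitzness in $w$ comes from H\"older's inequality using the $L^4$ bounds on $\sigma^{(j+1)}$ and subgaussian moments of $x$. Each net point is handled with a truncated Bernstein (Adamczak-type) bound, and a union bound over the $\exp(O(d\log(dm)))$ net points finishes the argument. The rate and the polynomial-in-$\creg$ loss stay the same.
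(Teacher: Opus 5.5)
Your proposal is essentially the paper's proof. In the Euclidean case both reduce Assumption~\ref{uc:n} to the classes $\mathcal G^{(1)},\mathcal G',\mathcal G''$ through the threshold decomposition of $\sigma'$ (Lemmas~\ref{claim:threshold_vc} and~\ref{lemma:bv_vc}), reduce Assumption~\ref{uc:m} to $\mathcal H^{(1)},\mathcal H^{(2)}$ with $\phi=\sigma',\sigma''$ at failure probability $\eps_m^2m^{-\Theta(1)}/(1+t)^2$, and defer the spherical case to $\eps$-net arguments. The only deviations are that you bound the empirical Hessians of $K_{\hat{\md}}$ by their deviation from the population Hessians, where the paper uses a crude sup-norm bound times $\|\frac1n\sum_i x_ix_i^\top\|$, and that you treat the $(f^*(x)-y)^2$ term explicitly, which the paper skips.

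Your small-$t$ caveat is a real issue that the paper glosses over: no choice of $\delta$ makes the deviation at most $\eps_m\kappa_z\log(1+t)$, since this vanishes at $t=0$. However, only your first fix, replacing $\log(1+t)$ by $\max(1,\log(1+t))$, resolves it; enlarging the constant in $\eps_m$ cannot compensate for $\log(1+t)\to 0$.
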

\begin{proof}
For Assumption~\ref{uc:n}, it suffices to prove that with probability $1 - n^{-\Theta(1)}$, uniformly over $w, w' \in \nspace$, and $v \in \sd$, we have 
\begin{align}
    \left|\frac{1}{1 + \|w'\|}\left(\mathbb{E}_{x, y \sim \hat{\md}} \sigma(w'^\top x)\sigma'(w^\top x)x^\top v -  \mathbb{E}_{x, y \sim \md}\sigma(w'^\top x)\sigma'(w^\top x)x^\top v\right)\right| &\leq \eps_n/2 \\
    \left|\mathbb{E}_{x, y \sim \hat{\md}} y\sigma'(w^\top x)x^\top v-  \mathbb{E}_{x, y \sim \md}y\sigma'(w^\top x)x^\top v\right| &\leq \eps_n/2\\
    \left|\mathbb{E}_{x, y \sim \hat{\md}}(y - \sigma(w^{\top}x))(y - \sigma(w'^{\top}x)) -  \mathbb{E}_{x, y \sim \md}(y - \sigma(w^{\top}x))(y - \sigma(w'^{\top}x))\right| &\leq \eps_n(1 + \|w\|)(1 + \|w'\|)
\end{align}
For the case that $\nspace = \R^d$, the desired bounds follow immediately from the results for $\mathcal{F}$, $\mathcal{F}'$, $\mathcal{G}''$ in Lemma~\ref{lemma:bv_vc} and Lemma~\ref{claim:threshold_vc} with $\delta = n^{-\Theta(1)}$, $C = \creg, \kappa_x = \creg$.

The result for the case that $\nspace = \sd$ is given by standard epsilon-net arguments eg. similar to \cite[Lemma 23]{glasgow2025mean}, which can be appropriately tightened to attain this improved $\eps_n$ using Dudley's entropy integral. 

Also with probability $1 - n^{-\Theta(1)}$, $\forall w, w' \in \nspace$, $\|\nabla^2_w K_{\hat{\md}}(w, w')\| \leq \creg/11(1 + \|w'\|)$ and $\|\nabla_w \nabla_{w'} K_{\hat{\md}}(w, w')\| \leq \creg/11$. Expanding the first term, we have 
\begin{align}
    \|\nabla^2_w K_{\hat{\md}}(w, w')\| &= \sup_{u, v \in \sd} \frac1n \sum_i \sigma''(w^{\top}x_i)\sigma(w'^{\top}x_i)(v^{\top}x_i)(x_i^{\top}u) \leq \creg^2(1 + \|w'\|) \sup_{\eps_i \in \pm 1} \frac{\|\sum_i x_i x_i^{\top}\|}{n}\\
    &\leq \creg^4(1 + \|w'\|)\left(1 + \sqrt{\frac{d}{n}} + \frac{d}{n}\right) \leq O(\creg^4(1 + \|w'\|),
\end{align}
The calculation for $\|\nabla_w \nabla_{w'} K_{\hat{\md}}(w, w')\|$ is similar. This yields Assumption~\ref{uc:n} up to a polynomial factor in $\creg$.

Now for Assumption~\ref{uc:m} for $\mc{F}$, with $z_i \sim \rho$ iid, we seek to bound
\begin{align}
   \sup_{u \in \sd, w \in \nspace} \mathbb{E}_{z \sim \rho}u^{\top}\nabla_w K(w, z) - \frac1m\sum_{i = 1}^m u^{\top}\nabla_w K(w, z_i).
\end{align}
Now by definition for any $u \in \sd$, we have
\begin{align}
    u^{\top}\nabla_w K(w, z_i) = \mathbb{E}_x \sigma'(w^{\top} x)\sigma(z_i^{\top}x)(u^{\top}x),
\end{align}
so in the case that $\nspace = \R^d$, by Assumption~\ref{assm:sigma}, we can apply the second result in Lemma~\ref{lemma:bv_vc} with $\sigma = \sigma$, $\phi := \sigma'$ and $C = \creg$, $\kappa_z = \kappa_z$, and $\kappa_x := \creg$ to guarantee that
with probability $1 - \delta/(1 + t)^2$, 
\begin{align}
    \sup_{w \in \nspace, u \in \sd} \left\|\mathbb{E}_z  u^{\top}\nabla_w K(w, z_i) - \frac{1}{m}\sum_{i = 1}^m  u^{\top}\nabla_w K(w, z_i)\right\| &\leq \Theta\left(\creg^3\kappa_z \sqrt{\frac{\creg d\log^2(\creg d/\delta)}{m}}\right).
\end{align} 
Choosing $\delta := \frac{\eps_m^2}{16 \creg^2 m (1 + t)^2}$, we have that this bound is at most $\eps_m \kappa_t \log(1 + t)/2$. 

The argument for proving uniform concentration over $\mc{F}'$ is similar. In the case that $\nspace = \R^d$, we have 
\begin{align}
    u^{\top}\nabla_w P_w \nabla_w K(w, z)v = \mathbb{E}_{x \sim \md} \sigma''(w^{\top}x)\sigma(z^{\top}x) (u^{\top}x)(x^{\top}v),
\end{align}
and so we can apply Lemma~\ref{lemma:bv_vc} with the activation $\sigma$, $\phi = \sigma''$, and $\kappa_z = \kappa_t$, and $\kappa_x = \creg$ to guarantee the desired result with probability $1 - \delta/(1 + t)^2$.

In the case that $\nspace = \sd$, the above uniform convergence bounds follow from standard empirical process theory arguments: from Lemma~\ref{lemma:smoothness} all the random variables are bounded since all neurons are on $\sd$, and we can take an $\eps$-net over $\nspace$ of size $\exp(d \log(O(1/\eps))$. We refer also the reader to \cite[Lemma 19]{glasgow2025mean}, where a similar result was carried with a slightly worse dependence on $d$ in $\eps_m$: this can be improved to our current $\eps_m$ using Dudley's entropy integral.

\end{proof}

\subsection{Proof of Lemma~\ref{lemma:errdynamicstight}.}

\begin{figure}[t]
    \centering
    \input{figures/fig_dynamics2}
    \caption{\small Decomposing $\frac{d}{dt}\dit = -\nu(\bwti, \rtmf) + \nu_{\hat{D}}(\hat{\xi}_{t_{\eta}}(w_i), \hat{\rho}_{t_{\eta}}^m)$. Upper bound on the approximate differences between the terms in the rectangles are given above the arrows. }
    \label{fig:dynamics}
\end{figure}

Now we prove Lemma~\ref{lemma:errdynamicstight}, which we restate here. Note that we assume Assumptions~\ref{assm:smoothness}, \ref{uc:n}, and \ref{uc:m} instead of Assumption~\ref{assm:reg}; these three assumptions are implied by Assumption~\ref{assm:reg} as per Lemma~\ref{lemma:regtouc} and the discussion following Assumption~\ref{assm:smoothness}.
\begin{lemma}[Parameter-Space Error Dynamics]\label{lemma:errdynamicstight_assm}
Suppose Assumptions~\ref{assm:smoothness}, \ref{uc:n}, \ref{uc:m} hold. With probability $1 - \min(m, n)^{-\Theta(1)}$, for all $t < \infty$ and $i \in [m]$,
$$
\frac{d}{dt}\dit = D_t(i) \dit - \mathbb{E}_{j \sim [m]}H_t(i, j) \djt +  \bm{\beta}_t(i) + {\bm{\epsilon}_{t}(i)},$$
where 
\begin{align}
    \bm{\beta}_t(i) := P_{\bwti}\left(\mathbb{E}_{j \sim [m]} \nabla K_t(i, j) - \mathbb{E}_{w \sim \rho_0}\nabla K_t(w_i, w) \right),
\end{align}
and
$\|{\bm{\eps}_{t}}(i)\| \leq \kappa_{t_{\eta}}\eps_n + \kappa_t^2\eps_{\eta} + 2\kappa_t\log(t + 1)\eps_m + 2\creg\left(\|\dit\|^2 + \mathbb{E}_j\|\Delta_j\|^2\right)$, $\|\bm{\beta}_t(i)\| \leq \eps_m \log(t + 1)$. 
\end{lemma}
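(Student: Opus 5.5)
We write $\frac{d}{dt}\dit = \nu_{\hat{\md}}(\hat{\xi}_{t_\eta}(w_i), \hat{\rho}^m_{t_\eta}) - \nu(\bwti, \rtmf)$ and telescope through intermediate velocities, following the five-step decomposition of Figure~\ref{fig:dynamics}:
\begin{align}
&\underbrace{\nu_{\hat{\md}}(\hat{\xi}_{t_\eta}(w_i), \hat{\rho}^m_{t_\eta}) - \nu_{\hat{\md}}(\hxiti, \rtm)}_{\text{(a) time discretization}} + \underbrace{\nu_{\hat{\md}}(\hxiti, \rtm) - \nu(\hxiti, \rtm)}_{\text{(b) finite data}} + \underbrace{\nu(\hxiti, \rtm) - \nu(\hxiti, \brt)}_{\text{(c) interaction}}\\
&\qquad + \underbrace{\nu(\hxiti, \brt) - \nu(\bwti, \brt)}_{\text{(d) self}} + \underbrace{\nu(\bwti, \brt) - \nu(\bwti, \rtmf)}_{\text{(e) sampling}}.
\end{align}

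Term (b) is at most $\eps_n(1+\mathbb{E}_{\rtm}\|w\|)\lesssim \kappa_t\eps_n$ by Lemma~\ref{lemma:nconcentration}. Term (a) is bounded by Lipschitzness of $\nu_{\hat{\md}}$, which follows from the empirical-kernel bounds in Assumption~\ref{uc:n}. Over a step of length $\eta$, each particle moves at most $\eta\sup\|\nu\|\lesssim \eta\kappa_t\creg$ by Lemma~\ref{lemma:smoothness}~\ref{S5}. This gives the $\kappa_t^2\eps_\eta$ contribution, and the $\eta^2$ part of $\eps_\eta$ absorbs second-order effects.

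Term (e) is exactly $\bm{\beta}_t(i)$ once the projection is written out, since $\nu$ is affine in $\rho$ and only the $K$ part differs. The bound $\|\bm{\beta}_t(i)\|\le \eps_m\log(t+1)$ is the first bound of Lemma~\ref{lemma:concentration}. We normalize the subgaussian factor into $\eps_m$, or simply carry $\kappa_t$ along as in the statement.

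For (d), we Taylor expand in the first argument: $\nu(\bwti+\dit,\brt)-\nu(\bwti,\brt)=\bar{D}_t(i)\dit + O(\|\nabla^2\nu\|\,\|\dit\|^2)$. By \ref{S5} the remainder is at most $\creg\|\dit\|^2$, up to the $\kappa$ factor. We then replace $\bar{D}_t(i)$ by $D_t(i)$ using the second bound of Lemma~\ref{lemma:concentration}, which costs $\eps_m\kappa_t\log(t+1)\|\dit\|$. This error is not small in absolute terms; we split it via $\eps_m\|\dit\|\le \eps_m\kappa_t\log(t+1)+\|\dit\|^2$-type AM-GM, or more simply use $\|\dit\|\le 1$ in the regime of interest, which explains the $2\kappa_t\log(t+1)\eps_m$ slack in $\bm{\eps}_t$.

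For (c), since $\nu$ is affine in $\rho$, it equals $-\mathbb{E}_j\big[K'(\hxiti,\hat\xi_t(w_j)) - K'(\hxiti,\bwtj)\big]$. We expand in the second argument to first order, which gives $-\mathbb{E}_j \nabla_2 K'(\hxiti,\bwtj)\djt$ with remainder bounded by \ref{S1} as $\creg\mathbb{E}_j\|\djt\|^2$. We then move the first argument from $\hxiti$ back to $\bwti$, using \ref{S3}; the resulting cross term $\creg\|\dit\|\,\mathbb{E}_j\|\djt\|$ is bounded by $\creg(\|\dit\|^2+\mathbb{E}_j\|\djt\|^2)$ via AM-GM. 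What remains is $-\mathbb{E}_j P_{\bwti}\nabla_1\nabla_2K(\bwti,\bwtj)\djt$. To identify it with $-\mathbb{E}_jH_t(i,j)\djt$, we still need the right projection $P_{\bwtj}$. On the sphere, $\djt$ is not tangent, but $\djt-P_{\bwtj}\djt=O(\|\djt\|^2)$ because both endpoints lie on $\sd$, so the discrepancy is second order. The left projection and the derivative of $P_w$ contribute the difference between $\nabla_w K'$ and $P\nabla^2K P$. That difference must be absorbed into $D_t$; this is consistent with the definition of $D_t$ including the projection, and we need to check that the sphere-curvature terms land in $D_t(i)$ rather than in the error.

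The main obstacle is the bookkeeping of the projections on $\sd$ in steps (c) and (d), together with the union over all $t<\infty$, which Lemma~\ref{lemma:concentration} already handles. We would first verify the Euclidean case, where $P=I$ and everything is a plain Taylor expansion, and then treat the sphere as a perturbation using $\|\nabla P_w\|\le 2$. Collecting terms gives the claimed bounds with probability $1-\min(m,n)^{-\Theta(1)}$, by a union bound over Lemmas~\ref{lemma:concentration} and \ref{lemma:nconcentration}.
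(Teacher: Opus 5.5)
You are running the paper's proof with the five differences of Figure~\ref{fig:dynamics} traversed in a different order. The ingredients are the same: Taylor expansions controlled by Lemma~\ref{lemma:smoothness}, Lemma~\ref{lemma:concentration} for $\bm{\beta}_t$ and for $\bar{D}_t\to D_t$, Lemma~\ref{lemma:nconcentration} for the data error, and the fact that on $\mathbb{S}^{d-1}$ the normal part of $\Delta_t(j)$ is $O(\|\Delta_t(j)\|^2)$.

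\textbf{Self and interaction terms.} The paper expands the interaction at the mean-field point $\xi_t(w_i)$, which gives $-\mathbb{E}_j H_t(i,j)\Delta_t(j)$ directly. It then expands the self-term at $\hat{\rho}^m_t$, so it must correct $\nabla_w\nu(\xi_t(w_i),\hat{\rho}^m_t)$ to $\bar{D}_t(i)$ via \ref{S3}. You instead expand the self-term at $\bar{\rho}^m_t$, which lands on $\bar{D}_t(i)$ at once, with a remainder depending only on moments of the i.i.d.\ sample. You then pay the same cross term $\creg\|\Delta_t(i)\|\,\mathbb{E}_j\|\Delta_t(j)\|$ when moving the first argument of $\nabla_2K'$ back to $\xi_t(w_i)$. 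That swap is harmless.

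\textbf{Time discretization and finite data.} Your swap of (a) and (b) needs a fix. Discretizing time with $\nu_{\hat{\mathcal{D}}}$ requires Lipschitzness of the empirical field in $w$. Assumption~\ref{uc:n} controls only derivatives of $K_{\hat{\mathcal{D}}}$, not $\nabla^2F_{\hat{\mathcal{D}}}$. The paper instead discretizes with the population field $\nu$, which is smooth by Assumption~\ref{assm:smoothness}. Only afterwards does it pass to $\nu_{\hat{\mathcal{D}}}$ at $(\hat{\xi}_{t_\eta}(w_i),\hat{\rho}^m_{t_\eta})$, using the zeroth-order bound of Lemma~\ref{lemma:nconcentration}; this is why the statement carries $\kappa_{t_\eta}\epsilon_n$. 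Either use that order, or sandwich $\nu_{\hat{\mathcal{D}}}(a)-\nu_{\hat{\mathcal{D}}}(b)$ between two applications of Lemma~\ref{lemma:nconcentration}, which costs only a constant.

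\textbf{Sign of $\bm{\beta}_t$.} Since $\nu=\nabla F-\nabla\mathbb{E}_{\rho}K$, your term (e) is $\nu(\xi_t(w_i),\bar{\rho}^m_t)-\nu(\xi_t(w_i),\rho_t^{\textsc{MF}})=-\bm{\beta}_t(i)$, not $+\bm{\beta}_t(i)$. The paper's display for its first term has the same sign slip, so the stated identity holds only up to the sign of $\bm{\beta}_t$. This is immaterial downstream: only $\|\bm{\beta}_t\|$ and $\bm{\beta}_t=V_t^\ast g_t$ are used, and the shift by $g_\infty$ in Claim~\ref{claim:func_err_integral} works with either sign.

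\textbf{Projection check.} The check you leave open is automatic. With $\nu(w,\rho)=P_w(\cdots)$ as in the appendix, the derivatives of $P_w$ arising in (d) are part of $D_t$ by definition. The only genuine sphere corrections are therefore the $O(\|\Delta\|^2)$ normal components, which is exactly how the paper handles them.
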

\begin{proof}
We first decompose $\frac{d}{dt}\dit$ into five terms:
\begin{align}
    \frac{d}{dt}(\dit) &= -\nu(\bwti, \rtmf) + \nu_{\hat{\md}}(\hxiti, \rtm)\\
        &=  -\left(\nu(\bwti, \rtmf) -  \nu(\bwti, \brt)\right) - \left(\nu(\bwti, \brt) - \nu(\bwti, \rtm)\right)\\
        &\qquad - \left(\nu(\bwti, \rtm) - \nu(\hxiti, \rtm)\right) - \left(\nu(\hxiti, \rtm) - \nu_{\hat{\md}}(\hxiti, \rtm)\right)\\
        &\qquad -  \nu_{\hat{\md}}(\hxiti, \rtm) + \nu_{\hat{\md}}(\hat{\xi}_{t_{\eta}}(w_i), \hat{\rho}_{t_{\eta}}^m).
\end{align}
Note: We prove the lemma ignoring any higher order terms that arise from the projection $P_{\xi_t(w)}$. In the case that $\nspace = \sd$, following the proof of Lemma 5 in \cite{glasgow2025mean}, since both $\xi_t(w_i)$ and $\hat{\xi}_t(w_i)$ are on the sphere, we have that $\langle{\xi_t(w_i), \dit\rangle} = \frac{1}{2}\|\dit\|^2$. Thus any corrections to this analysis due to the projections will be on the order of $\creg\|\dit\|^2$.

\paragraph{First term:} $\nu(\bwti, \rtmf) -  \nu(\bwti, \brt)$.
By Equation~\eqref{eq:Vexpansion}, we have
\begin{align}\label{eq:term1}
\nu(\bwti, \rtmf) -  \nu(\bwti, \brt) &= P_{\bwti}\left( \mathbb{E}_{w' \sim \rtmf}\nabla_{\bwti} K(\bwti, w') - \mathbb{E}_{w' \sim \brt} \nabla_{\bwti} K(\bwti, w')\right)\\
&= P_{\bwti}\left(\mathbb{E}_{w \sim \rho_0}\nabla K_t(w_i, w)- \mathbb{E}_{j \sim [m]} \nabla K_t(i, j)\right).
\end{align}
Alternatively, by Lemma~\ref{lemma:concentration}, we have with high probability that $\|\nu(\bwti, \rtmf) -  \nu(\bwti, \brt)\| \leq \eps_m \kappa_t \log(t + 1)$.

\paragraph{Second term: $\nu(\bwti, \brt) - \nu(\bwti, \rtm)$.} Here we have
\begin{align}
    \nu(\bwti, \brt) - \nu(\bwti, \rtm) &= \nabla F(\bwti) -  \mathbb{E}_{w' \sim \brt}\nabla K(\bwti, w')\\
    &\qquad - \nabla F(\bwti) + \mathbb{E}_{w' \sim \rtm}\nabla K(\bwti, w') \\
    &=  -\mathbb{E}_{j}\left(\nabla K(\bwti, \bwtj) - \nabla K(\bwti, \bwtj + \djt)\right)\\
    &= \mathbb{E}_{j \sim [m]}\left(H_t(i, j)\djt + \mathbf{v}_j\right),
\end{align}
where $\|\mathbf{v}_j\| \leq \creg\|\djt\|^2.$
Indeed we can plug Lemma~\ref{lemma:smoothness} \ref{S1} into the Lagrange error bound to attain
\begin{align}\label{eq:term_2}
    \|K'(w, w') - K'(w, w' + \Delta) - \nabla_{w'}K'(w, w')\Delta\|
    &\leq \|\Delta\|^2 \sup_{\tilde{w} \in\bar{\nspace}} \left\|\nabla^2_{\tilde{w}}K'(w, \tilde{w})\right\| \leq \|\Delta\|^2\creg.
\end{align}

\paragraph{Third term: $\nu(\bwti, \rtm) - \nu(\hxiti, \rtm)$.}
Here we have
\begin{align}
    \nu(\bwti, \rtm) - \nu(\hxiti, \rtm) &= -\nabla_w \nu(w, \rtm)|_{w = \bwti}\dit  + \mathbf{v},
\end{align}
where here again we use the Lagrange error bound and \ref{S5} to bound
\begin{align}
    \|\mathbf{v}\| \leq \|\dit\|^2 \sup_{w \in \bar{\nspace}} \left\|\nabla^2_{w} \nu(w, \rtm)\right\|_{op} \leq \creg\|\dit\|^2.
\end{align}
Recall that we have defined
\begin{align}
    \bar{D}_t(w) := \nabla_{\xi_t(w)} \nu(\xi_t(w), \brt) = \nabla_{\xi_t(w)} F'(\xi_t(w)) - \mathbb{E}_{w' \sim \brt}\nabla_{\xi_t(w)} K'(\xi_t(w), w').
\end{align}
Now
\begin{align}
    \nabla_{\bwti} \nu(\bwti, \rtm) &= \nabla_{\bwti} F'(\bwti) - \mathbb{E}_{j}\nabla_{\bwti} K'(\bwti, \hat{\xi}_t(w_j))\\
    &= \nabla_{\bwti} F'(\bwti) - \mathbb{E}_{j}\nabla_{\bwti} K'(\bwti, \bwtj) + \mathbf{M}_{j}\\
    &= \bar{D}_t(i) - \mathbb{E}_j\mathbf{M}_{j}.
\end{align}
where by the Lagrange error bound and \ref{S3},
\begin{align}
    \|\mathbf{M}_j\|_{op} \leq \|\djt\|\sup_{w, w'} \left\|\nabla_w\nabla_{w'}K'(w, w')\right\|_{op} \leq \creg\|\djt\|.
\end{align}
Now by Lemma~\ref{lemma:concentration}, we also have with high probabililty that $\|D_t(i) - \bar{D}_t(i)\| \leq \eps_m \kappa_t \log(t+1)$, and thus
\begin{align}\label{eq:term_3}
    \nu(\bwti, \rtm) - \nu(\hxiti, \rtm) &= -D_t(i)\dit  + \mathbf{v},
\end{align}
where  $\|\mathbf{v}\| \leq  \creg\|\dit\|\mathbb{E}_j\|\djt\| + \eps_m \kappa_t \log(t+1) \|\dit\|.$

\paragraph{Fourth term: $\nu(\hxiti, \rtm) - \nu(\hat{\xi}_{t_{\eta}}(w_i), \hat{\rho}_{t_{\eta}}^m)$}
First observe that since $t_{\eta} = \eta \lfloor{t/\eta}\rfloor$, we have $|t_{\eta} - t| \leq \eta$. We have 
\begin{align}
    \|\hat{\xi}_{t_{\eta}}(w_i) - \hxiti\| \leq |t - t_{\eta}|\sup_{s \in [t, t'] }\|\nu_{\hat{\md}}(\hat{\xi}_{s}(w_i), \rsm)\| \leq \eta\kappa_t\creg
\end{align}
by Lemma~\ref{lemma:smoothness}~\ref{S5} Thus by a second order analysis analogous to that in  {\bf{Third term}}, we have that 
\begin{align}
    \|\nu(\hxiti, \rtm) - \nu(\hat{\xi}_{t_{\eta}}(w_i), \hat{\rho}_{t}^m)\| \leq \creg(\eta\kappa_t\creg + (\eta\kappa_t\creg)^2).
\end{align}
Finally by a second order analysis analogous to that in  {\bf{Second term}},
\begin{align}
     \|\nu(\hat{\xi}_{t_{\eta}}(w_i), \hat{\rho}_{t}^m) - \nu(\hat{\xi}_{t_{\eta}}(w_i), \hat{\rho}_{t_{\eta}}^m)\| \leq \creg(\eta\kappa_t\creg + (\eta\kappa_t\creg)^2).
\end{align}
It follows that
\begin{align}\label{eq:time_disc_term}
    \|\nu(\hxiti, \rtm) - \nu(\hat{\xi}_{t_{\eta}}(w_i), \hat{\rho}_{t_{\eta}}^m)\| \leq 2\creg(\eta\kappa_t\creg + (\eta\kappa_t\creg)^2) \leq \kappa_t^2 \eps_{\eta}.
\end{align}
\paragraph{Fifth term: $\nu(\hat{\xi}_{t_{\eta}}(w_i), \hat{\rho}_{t_{\eta}}^m) - \nu_{\hat{\md}}(\hat{\xi}_{t_{\eta}}(w_i), \hat{\rho}_{t_{\eta}}^m)$}
By Lemma~\ref{lemma:nconcentration}, we have with high probability:
\begin{align}\label{eq:term4}
    \| \nu(\hat{\xi}_{t_{\eta}}(w_i), \hat{\rho}_{t_{\eta}}^m) - \nu_{\hat{\md}}(\hat{\xi}_{t_{\eta}}(w_i), \hat{\rho}_{t_{\eta}}^m) \| &\leq \kappa_{t_{\eta}}\eps_n.
\end{align}



\paragraph{Final result.}
Putting together Equations~\eqref{eq:term1}, \eqref{eq:term_2}, \eqref{eq:term_3}, \eqref{eq:time_disc_term},  and \eqref{eq:term4} or their spherically corrected counterparts, we have
\begin{align}
    \frac{d}{dt}\dit  &= D_t(i) \dit - \mathbb{E}_{j \sim [m], j \neq i}H_t(i, j) \djt + P_{\bwti}\left( \mathbb{E}_{j \sim [m]} \nabla K_t(i, j)- \mathbb{E}_{w \sim \rho_0} \nabla K_t(w_i, w)\right) + {\bm{\epsilon}},
\end{align}
    where 
    \begin{align}
        \|{\bm{\eps}}\| &\leq \eps_n + \kappa_t^2\eps_{\eta} + \kappa_t\eps_m \|\dit\|  + \creg\left(1.5\|\dit\|^2 + \|\dit\|\mathbb{E}_j\|\djt\| + 1.5\mathbb{E}_j\|\djt\|^2\right)\\
        &\leq \eps_n + \kappa_t^2\eps_{\eta} + \kappa_t\eps_m\log(t + 1)\|\dit\| + 2\creg\left(\|\dit\|^2 + \mathbb{E}_j\|\djt\|^2\right).
    \end{align}
Using the alternative concentration approach to the first term in the decomposition, we also have the simpler result:
\begin{align}
    \frac{d}{dt}\dit  &= D_t(i) \dit - \mathbb{E}_{j \sim [m], j \neq i}H_t(i, j) \djt + {\bm{\epsilon}},
\end{align}
with 
\begin{align}
     \|{\bm{\eps}}\| &\leq \eps_n + \kappa_t^2\eps_{\eta} + 2\kappa_t\log(t + 1)\eps_m + 2\creg\left(\|\dit\|^2 + \mathbb{E}_j\|\Delta_j\|^2\right).
\end{align}

\end{proof}

%% file: figures/fig_dynamics2.tex
\usetikzlibrary{positioning}
\begin{tikzpicture}[scale=0.9, every node/.append style={scale=0.9}]
    \node[draw, rectangle, minimum width=2cm, minimum height=1cm, thick] (rect1) {$\nu(\bwti, \rtmf)$};
    \node[draw, rectangle, minimum width=2cm, minimum height=1cm, right=0.5cm of rect1, thick] (rect2) {$\nu(\bwti, \brt)$};
    \node[draw, rectangle, minimum width=2cm, minimum height=1cm, right=0.5cm of rect2, thick] (rect3) {$\nu(\bwti, \rtm)$};
    \node[draw, rectangle, minimum width=2cm, minimum height=1cm, right=0.5cm of rect3, thick] (rect4) {$\nu(\hxiti, \rtm)$};
    \node[draw, rectangle, minimum width=2cm, minimum height=1cm, right=0.5cm of rect4, thick] (rect5) {$\nu(\hat{\xi}_{t_{\eta}}(w_i), \hat{\rho}_{t_{\eta}}^m)$};
     \node[draw, rectangle, minimum width=2cm, minimum height=1cm, right=0.5cm of rect5, thick] (rect6) {$\nu_{\hat{D}}(\hat{\xi}_{t_{\eta}}(w_i), \hat{\rho}_{t_{\eta}}^m)$};

    \draw[->, thick] (rect1.north east) to[out=30, in=150] node[pos=0.05, above] {$\bm{\beta}_{\infty} + \eps_m\|\xi_t(w_i) - \xii(w_i)\|$} (rect2.north west);
    \draw[->, thick] (rect2.north east) to[out=30, in=150] node[pos=1.0, above] {$\qquad-\mathbb{E}_{j}H_t(i, j)\djt$} (rect3.north west);
    \draw[->, thick] (rect3.north east) to[out=30, in=150] node[midway, above] {\quad\quad $D_t(i)\dit$} (rect4.north west);
    \draw[->, thick] (rect4.north east) to[out=30, in=150] node[midway, above] {$\leq \eps_n \approx \eta$} (rect5.north west);
    \draw[->, thick] (rect5.north east) to[out=30, in=150] node[midway, above] {$\leq \eps_n \approx \frac{1}{\sqrt{n}}$} (rect6.north west);
\end{tikzpicture}

%% file: neurips26/experiments_app.tex
\section{Supplemental Experimental Details for Misspecified Sobolev single-index model}
\label{sec:exp_app}

For $\gamma \in \{1, 2, 4, 8\}$, train a wide neural network with gradient descent on $n = 1024$ data points $(x_i, f^{\gamma}(x_i))$. For the best approximation of the population loss, when $d = 2$, we use $x_i$ evenly spaced around $\mathbb{S}^1$; otherwise we choose the $x_i$ randomly on $\sqrt{d-1}\sd$, and then truncate to be contained in $[-1, 1]^d$. We use a step size of $\eta = 0.1$ and a width of $m = 32768$ to approximate the mean field gradient flow dynamics on the population loss. We chose these values of $n, m, \eta$ because with twice as much granularity (choosing $n, m$ to be two times larger, or $\eta$ to be two times smaller), the results were very similar.

We plot the results for $d=2$ and $d = 128$.

For $x \in [-1, 1]^d$, let
\begin{align}
    f^*(x) = \sum_{k \in \mathbb{Z}}^{\infty} \hat{f}_k \cos(k\arccos(x_1))
\end{align}

Define $f^{\gamma}$ by 
\begin{align}
    \hat{f}^{\gamma}_k = \begin{cases}
        \frac{1}{2} & k = 0\\
        \frac{1}{4} & k \pm 1\\
        \frac{1}{2}(k/\sqrt{2})^{-(\gamma + 2.5)} & |k| > 1, \on{even}
    \end{cases}
\end{align}
We show in Observation~\ref{obs:bounded_sobolev} that in the case that $d = 2$, $f^*(x) = \mathbb{E}_{w \sim \rho^*}\on{ReLU}(x^{\top}w)$ for some $\rho^*$ with bounded $\gamma$-Sobolev norm. We chose this class of functions $\{f^{\gamma}\}$ because it was the simplest class we could think of which could be represented by a ReLU network (this requires zero odd Fourier coefficients for $k > 1$), and which had corresponding $\rho^*$ with bounded $\gamma$-Sobolev norm.

\begin{figure}[]
    \centering
    \includegraphics[width=\textwidth]{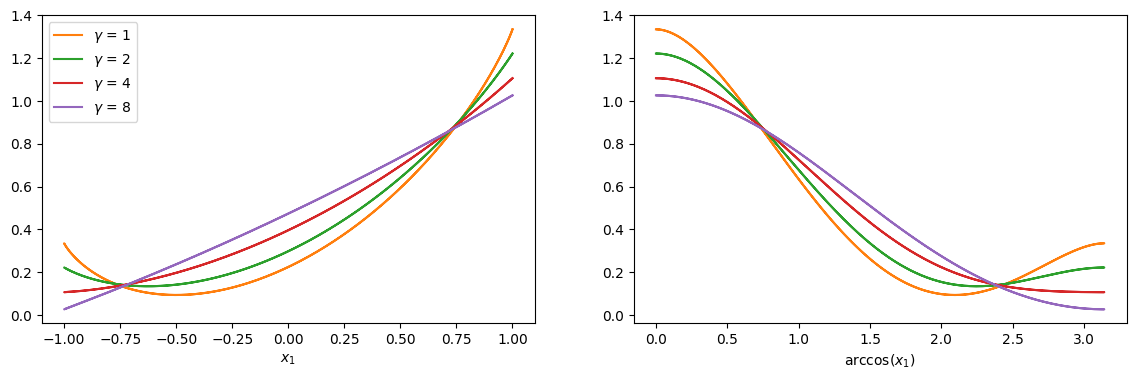}
    \caption{Plot of $\phi(x_1) = F(\arccos(x_1))$ for various values of $\gamma$. The function becomes smoother as $\gamma$ increases.}
    \label{fig:fstar}
\end{figure}

\begin{observation}[Representation on $\mathbb S^1$]\label{obs:bounded_sobolev}
Let $d=2$, and let $\theta(x)=\arccos(x_1)$. Let $F(\theta)=\sum_{k} \widehat f_k \cos(k\theta)$, and assume that $\widehat f_k=0$ for all odd $k>1$, and that for some
$\eta>0$,
\[
    |\widehat f_k| \leq C k^{-\gamma-2.5-\eta}
    \qquad \text{for all even } k\geq 2.
\]
Then there exists a finite signed measure $\rho^*$ on $\mathbb S^1$ whose
density has bounded $\gamma$-Sobolev norm such that
\[
    F(\arccos(x_1))
    =
    \int_{\mathbb S^1} \operatorname{ReLU}(\langle x,w\rangle)\,
    d\rho^*(w).
\]
\end{observation}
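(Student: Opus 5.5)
We need to prove the Observation: $d=2$, $F(\theta)=\sum_k \hat f_k\cos(k\theta)$ with $\hat f_k = 0$ for odd $k>1$ and $|\hat f_k|\le Ck^{-\gamma-2.5-\eta}$ for even $k\ge2$. We want a signed measure $\rho^*$ on $\mathbb S^1$ with density of bounded $\gamma$-Sobolev norm such that $F(\arccos x_1)=\int \operatorname{ReLU}(\langle x,w\rangle)\,d\rho^*(w)$.

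The plan is to diagonalize the ReLU integral operator on $\mathbb S^1$ in the Fourier basis and invert it coefficientwise. Parametrize $x=(\cos\theta,\sin\theta)$ and $w=(\cos\phi,\sin\phi)$, so that $\operatorname{ReLU}(\langle x,w\rangle)=\operatorname{ReLU}(\cos(\theta-\phi))=:k(\theta-\phi)$. The map $\rho\mapsto\int k(\theta-\phi)\rho(\phi)\,d\phi$ is then a convolution on the circle, and it acts on Fourier modes by multiplication by $\hat k_j:=\frac{1}{2\pi}\int_0^{2\pi}\operatorname{ReLU}(\cos s)\cos(js)\,ds$.

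\textbf{Step 1: compute the multipliers.} Since $\operatorname{ReLU}(\cos s)=\frac12\cos s+\frac12|\cos s|$, we get $\hat k_1=\frac14$. For odd $j>1$, $\hat k_j=0$, because $|\cos s|$ has only even harmonics. For even $j$,
\[
\hat k_j=\frac{1}{2\pi}\int_{-\pi/2}^{\pi/2}\cos s\cos(js)\,ds=\frac{(-1)^{j/2+1}}{\pi(j^2-1)},
\]
and in particular $\hat k_0=\frac1\pi$. So the multipliers are nonzero exactly on $j\in\{0,\pm1\}\cup 2\mathbb Z$, with $|\hat k_j|\asymp j^{-2}$. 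This is why the hypothesis that $\hat f_k=0$ for odd $k>1$ is needed.

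\textbf{Step 2: define $\rho^*$ coefficientwise.} Set
\[
\rho^*(\phi)=\sum_k \frac{\hat f_k}{\hat k_k}\cos(k\phi)
\]
(up to the normalization constant of the convolution). This gives $\int k(\theta-\phi)\rho^*(\phi)\,d\phi=\sum_k\hat f_k\cos(k\theta)=F(\theta)$ formally. To make this rigorous, note that the series for $\rho^*$ converges absolutely and uniformly: its coefficients satisfy $|\hat\rho_k|\lesssim Ck^{2-\gamma-2.5-\eta}=Ck^{-\gamma-1/2-\eta}$, which is summable since $\gamma\ge1$. Hence $\rho^*$ is a continuous bounded density, so it defines a finite signed measure, and exchanging sum and integral is justified by uniform convergence.

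\textbf{Step 3: the Sobolev bound.} We need
\[
\sum_k (2\pi|k|)^{2\gamma}|\hat\rho_k|^2\lesssim C^2\sum_k k^{2\gamma-2\gamma-1-2\eta}=C^2\sum_k k^{-1-2\eta}<\infty.
\]
This is exactly where the extra $\eta>0$ is used. One also has to match the paper's Fourier normalization on $\mathbb S^1$ (index $k$ versus frequency $2\pi k$), but that only changes constants.

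The main obstacle is Step 1. The sign and normalization of the even multipliers $\hat k_j$ must be computed carefully, and one must confirm they never vanish for even $j$; since $j^2-1\neq0$ for even $j$, they do not. Everything after that is bookkeeping of decay exponents.
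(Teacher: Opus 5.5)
Your proposal is correct and follows essentially the same route as the paper. The paper also writes the ReLU integral as a circular convolution with $\phi(t)=\operatorname{ReLU}(\cos t)$ and computes the same multipliers: $1/\pi$ at $0$, $1/4$ at $\pm1$, $(-1)^{m-1}/(\pi(4m^2-1))$ at $\pm 2m$, and zero at odd $|k|>1$. It then sets $\hat\mu^*_k=\hat f_k/\hat\phi_k$, reads off the decay $k^{-\gamma-1/2-\eta}$ to get the Sobolev bound, and uses evenness of $F$ to pass from $\theta$ to $\arccos(x_1)$ — a step you left implicit.

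Your extra assumption $\gamma\ge 1$, used for uniform convergence, is not needed. Square-summable coefficients already give an $L^2\subset L^1$ density, and both sides of the identity are continuous with equal Fourier coefficients, so they agree.
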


\begin{proof}
Let $\phi(t):=\operatorname{ReLU}(\cos t)$. We use the complex Fourier convention
\[
    h(t)=\sum_{k\in \mathbb Z} \widehat h_k e^{ikt},
    \qquad
    \widehat h_k = \frac{1}{2\pi}\int_0^{2\pi} h(t)e^{-ikt}\,dt.
\]
The Fourier coefficients of $\phi$ are
\[
    \widehat \phi_k =
    \begin{cases}
        \dfrac{1}{\pi}, & k=0,\\[6pt]
        \dfrac{1}{4}, & k=\pm 1,\\[6pt]
        \dfrac{(-1)^{m-1}}{\pi(4m^2-1)}, & k=\pm 2m,\ m\geq 1,\\[6pt]
        0, & |k|>1 \text{ odd}.
    \end{cases}
\]
Let $\mu$ be a finite signed measure on $[0,2\pi)$, and define
\[
    g_\mu(\theta)
    :=
    \int_0^{2\pi} \phi(\theta-\omega)\,d\mu(\omega).
\]
Writing $\widehat \mu_k := \int_0^{2\pi} e^{-ik\omega}\,d\mu(\omega)$, we have
\[
    \phi(\theta-\omega)
    =
    \sum_{k\in\mathbb Z}
    \widehat \phi_k e^{ik\theta}e^{-ik\omega}.
\]
Therefore
\[
    g_\mu(\theta)
    =
    \sum_{k\in\mathbb Z}
    \widehat \phi_k \widehat \mu_k e^{ik\theta}.
\]
Thus the Fourier coefficients of $g_\mu$ are $\widehat g_{\mu,k} = \widehat \phi_k \widehat \mu_k$.

Now we will construct $\rho^*$ to be the distribution of $(\cos(\omega), \sin(\omega))$ for $\omega \sim \mu^*$ for some measure $\mu^*$ on $[0, 2\pi)$.  Define the Fourier coefficients of $\mu^*$ by
\[
    \widehat \mu^*_k
    :=
    \begin{cases}
        \frac{\widehat f_k}{\widehat \phi_k} & \widehat \phi_k \neq 0 \\
        0 & \widehat \phi_k = 0.
    \end{cases}
\]
By the assumed decay,
\[
    |\widehat \mu^*_{\pm k}|
    \lesssim
    k^{-\gamma-0.5-\eta}.
\]

Therefore
\[
    \sum_{k\in\mathbb Z}
    (1+k^2)^\gamma |\widehat \mu^*_k|^2
    <\infty.
\]
Hence $\mu^*$ has bounded $\gamma$-Sobolev norm. Now
\[
    g_{\mu^*}(\theta)=F(\theta),
\]
Thus with $x = (\cos(\theta(x)), \sin(\theta(x)))$ we have 
\begin{align}
    \int_{\mathbb{S}^1} \on{ReLU}( \langle x,w\rangle) d\rho^*(w) = \mathbb{E}_{\omega \sim \mu^*} \on{ReLU}(\cos(\theta(x) - w)) = g_{\mu^*}(\theta(x)) = F(\theta(x)).
\end{align}
But since cosine is an even function $F(\theta(x)) = F(-\theta(x))$, so $\int_{\mathbb{S}^1} \on{ReLU}( \langle x,w\rangle) d\rho^*(w) = F(\arccos(x_1))$.
\end{proof}



\begin{figure}[]
    \centering
    \includegraphics[width=\textwidth]{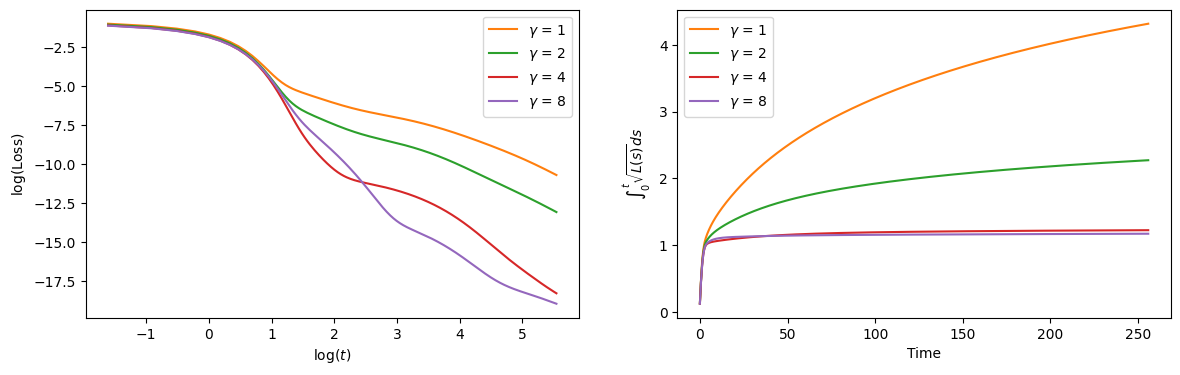}
    \caption{Approximate loss $L(\rtmf)$ (left) and $\int_{s = 0}^t \sqrt{L(\rsmf)}ds$ (right). Training both layers.}
    \label{fig:sim_loss}
\end{figure}